\newtheorem{definition}{Definition}
\newtheorem{lemma}{Lemma}
\newtheorem{cor}{Corollary}[lemma]
\newtheorem{remark}{Remark}
\newtheorem{example}{Example}
\newtheorem{principle}{Principle}
\newtheorem{subprinciple}{Principle}[principle]
\DeclarePairedDelimiter\abs{\lvert}{\rvert}%
\title{Design Principles for Sequence Models via Coefficient Dynamics}
\author{%
  Jerome~Sieber\thanks{Code is available here: \url{https://git.sieber.io/mdl-design}.} \\
  ETH Zurich\\
  Zurich, Switzerland \\
  \texttt{jerome@sieber.io} \\
  \And
  Antonio~Orvieto \\
  ELLIS Institute Tübingen\\
  Tübingen, Germany \\
  \texttt{antonio@tue.ellis.eu} \\
  \AND
  Melanie N.~Zeilinger \\
  ETH Zurich\\
  Zurich, Switzerland \\
  \texttt{mzeilinger@ethz.ch} \\
  \And
  Carmen~Amo~Alonso \\
  Stanford University \\
  Stanford, CA, United States \\
  \texttt{camoalon@stanford.edu} \\
}
\begin{document}

\let\savedaddcontentsline\addcontentsline
\renewcommand{\addcontentsline}[3]{} 

\maketitle

\begin{abstract}
Deep sequence models, ranging from Transformers and State Space Models  (SSMs) to more recent approaches such as gated linear RNNs, fundamentally compute outputs as linear combinations of past value vectors. To draw insights and systematically compare such architectures, we develop a unified framework that makes this output operation explicit, by casting the linear combination coefficients as the outputs of autonomous linear dynamical systems driven by \emph{impulse inputs}. This viewpoint, in spirit substantially different from approaches focusing on connecting linear RNNs with linear attention, reveals a common mathematical theme across diverse architectures and crucially captures softmax attention, on top of RNNs, SSMs, and related models. In contrast to new model proposals that are commonly evaluated on benchmarks, we derive design principles linking architectural choices to model properties. Thereby identifying tradeoffs between expressivity and efficient implementation, geometric constraints on input selectivity, and stability conditions for numerically stable training and information retention. By connecting several insights and observations from recent literature, the framework both explains empirical successes of recent designs and provides guiding principles for systematically designing new sequence model architectures.
\end{abstract}

\section{Introduction}
Sequence modeling lies at the core of modern machine learning, powering advances in natural language processing, computer vision, speech recognition, and robotics. At the heart of this field is a deceptively simple question~\citep{elman1990finding}: how should a model combine past information to produce meaningful representations and make predictions about the future? Classical approaches such as recurrent neural networks (RNNs; \citet{Hochreiter1997}) framed this as a recursive state-update problem, whereas the Transformer \citep{Transformer} introduced attention mechanisms that compute adaptive linear combinations over past tokens. More recently, state space models (SSMs) \citep{Gu2022, mamba} or equivalently linear attention models~\citep{Katharopoulos2020, mamba2, schlag2021linear} have further broadened the design space by importing ideas from dynamical systems, delivering foundation models with linear sequence length complexity.
Despite their differences in formulation, computational complexity, and performance on crucial benchmark tasks~\citep{zoology2023, jelassi2024repeat}, both linear and softmax attention models share a common thread: their core sequence mixing mechanism computes a linear combination of value vectors with coefficients determined by the interaction of queries, keys, and additional learned parameters. For attention, these coefficients are derived through similarity functions and normalizations; for SSMs and (gated)~linear attention, they emerge from the evolution of hidden states under linear dynamics. Yet, while this unifying perspective is informally recognized, most architectural innovations are still introduced and validated primarily through benchmarks. This benchmark-driven approach has fueled rapid empirical progress, but complicates the disentanglement of which design elements -- e.g. normalization, gating, or state-update dynamics -- truly account for observed improvements. What is lacking is a set of general design principles that explain why specific models succeed and guide the systematic development of new ones, beyond approaches focusing on \textit{ad-hoc} analyses of single-task setups~\citep{jelassi2024repeat, merrill2024illusion, arorasimple2024}. \\
In this work, we develop a principled framework for sequence model design grounded in dynamical systems theory. Specifically, we show that the coefficients governing linear combinations of value vectors, can be expressed as the output of autonomous linear dynamical systems subject to \textit{impulse} inputs. Our approach reveals the common mathematical structure underlying diverse architectures and provides a principled foundation for designing new sequence models. Since the present study is restricted to single-layer models, multi-layer composition can produce richer behaviors -- such as formation of induction heads~\citep{olsson2022context} -- beyond what our base framework captures. Nevertheless, by laying out the core design principles at the layer level, we aim to establish the foundations for understanding and designing more complex systems. Our \emph{contributions} are as follows:
\begin{itemize}[leftmargin=1em, itemsep=0pt]
\vspace{-1mm}
    \item  \textbf{Design principles} for sequence models that stem from a unifying theory on the dynamic computations in sequence models, clarifying the role of sequence model components.
    \item \textbf{Analysis of tradeoffs and constraints} concerning expressivity, efficient implementation, input selectivity, and stability requirements in sequence models.
    \item \textbf{A unifying formalization of sequence models} that captures most recent architectures as special cases (Table \ref{tab:architecture_parameters}), enabling systematic comparison without reliance on experimental benchmarks.
    \item \textbf{Empirical validation} demonstrates that our theory translates into practice.
\end{itemize}
\vspace{-4mm}

\section{Related Work} \vspace{-2mm}
We have recently witnessed a revived interest in efficient RNNs and linear attention mechanisms in language modeling as well as image/video processing~\citep{liu2024vmamba} and DNA modeling~\citep{nguyen2023hyenadna,schiff2024caduceus}. Stemming from the seminal S4 work by~\citet{Gu2020,Gu2022}, early SSMs~\citep{Gu2022s4d,s5} and linear recurrent networks~\citep{Orvieto2023}, outperformed attention on pattern-recognition and long-range reasoning benchmarks~\citep{Tay2021} and were rapidly adapted to the language domain~\citep{wang2022pretraining, Fu2023}. RetNet~\citep{sun2023retentive} and GateLoop~\citep{katsch2023gateloop} paved the way for Mamba~\citep{mamba}, which combined input selectivity with efficient training~(linear in sequence length) to achieve improved performance in language modeling. Drawing inspiration from the known connection between RNNs and linear attention~\citep{Katharopoulos2020, schlag2021linear}, advancements such as GLA, Deltanet, xLSTM~\citep{yang2023gated, yang2024parallelizing, xlstm}, and Mamba2~\citep{mamba2} have aided the unification of modern efficient sequence modeling approaches in terms of latent efficient matrix multiplications~\citep{flashlinattention}. On top of this unified framework -- \textit{which crucially does not contain softmax attention} -- several improvements were recently proposed, such as Gated Deltanet~\citep{yang2025gated}, DeltaProduct~\citep{siems2025deltaproduct}, and Log-Linear Attention~\citep{guo2025log}. We have also witnessed cross-fertilization between new linear RNN/attention ideas~(forget gate, positional encodings) and the softmax world, with examples such as FoX~\citep{lin2025forgetting} and Path~\citep{yang2025path}. \\
As flagship models are starting to adopt hybrid solutions based on mixing linear and softmax attention to enhance training and inference efficiency~\citep{ai21labs_jamba_large_1_7, nano2025efficient, qwen2025next}, additional work is needed to further understand which architectural properties impact performance depending, on the specific nature~(e.g. recall, associativity, memorization, compression) of the task at hand~\citep{Poli2024Mad}. While early studies show promise across benchmarks, especially for hybrid models at high pretraining budgets~\citep {waleffe2024empirical}, others point to fundamental limitations of fixed-memory processing in recall, copy~\citep{zoology2023,jelassi2024repeat} and retrieval tasks~\citep{guo2025log}. Expressivity of new linear RNN layers is well-studied~\citep{orvieto2024universality,cirone2024theoretical, merrill2024illusion}, yet less is known about how architectural details impact model capabilities -- especially as linear RNNs can be substantially more challenging to optimize~\citep{zucchet2024recurrent, okpekpe2025revisiting}, -- and we lack a unified framework able to capture both mechanisms under the same formalism, without resorting to infinite-dimensional expansions~\citep{sieber2024understanding}. Our \textit{coefficient dynamics} are inspired by this issue.
\vspace{-2mm}

\section{Linear Combinations with Temporal Dynamics}\vspace{-2mm}
We consider a one-layer causal sequence model~(e.g. attention, S6), i.e. a parametric mechanism that defines $\{x_j\}_{j=1}^i \mapsto y_i$, where $i$ denotes the (time) position index $i=1,\dots,L$ and $x_i\in\mathbb{R}^d, y_i\in\mathbb R^{d_v}$ are the input and output respectively. In the following, we provide a formulation that encompasses most existing sequence model architectures using what we name \textit{coefficient dynamics}. Following the standard transformer notation~\citep{Transformer}, and given the duality between attention, SSMs, and other recurrent architectures \citep{mamba2,sieber2024understanding}, we define:
\begin{equation}
    q_i = W_Q x_i, \quad  k_i = W_K x_i, \quad v_i = W_V x_i,
\end{equation}
where $W_Q\in\mathbb R^{n \times d},\ W_K\in\mathbb R^{n \times d},\ \text{and}\ W_V\in\mathbb R^{d_v \times d}$ are the learned parameters of the model.
\begin{figure*}
    \centering
    \includegraphics[width=0.85\linewidth]{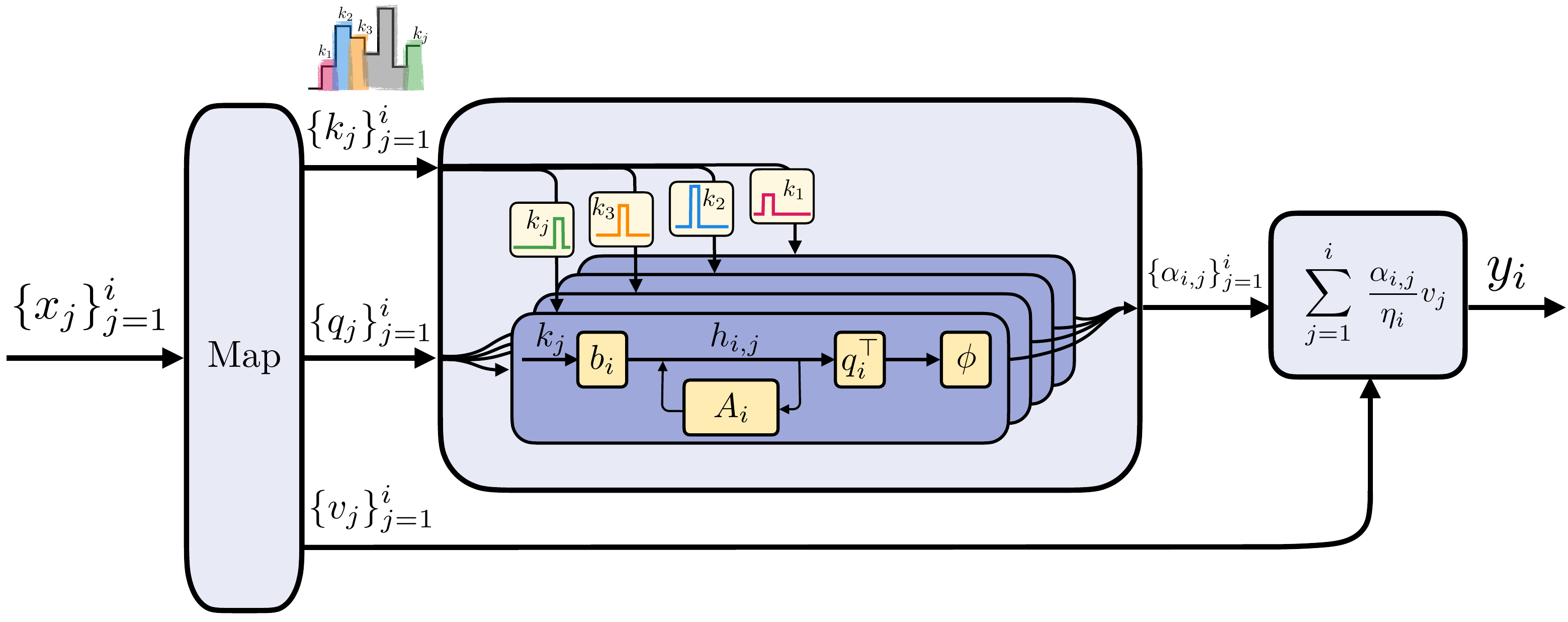}
    \vspace{-2mm}
\end{figure*}

Let $\{v_j\}_{j=1}^i$ denote the value vectors at (time) position $i$,
then most existing sequence model can be fundamentally viewed as computing linear combinations of these value vectors:\footnote{For simplicity, we ignore potential output projections from the linear combination to the output.}
\begin{equation}\label{eqn:lin-comb}
    y_i = \sum_{j=1}^i \frac{\alpha_{i,j}}{\eta_i}v_j,
\end{equation}
where each coefficient $\alpha_{i,j}$ represents the contribution of the value vector at position $j$ to the output at position $i$ and $\eta_i:\mathbb R \rightarrow \mathbb R$ denotes the normalization factor. In standard architectural proposals, such as softmax attention or linear attention, coefficients $\alpha_{i,j}$ are typically computed using queries, keys, and additional learned parameters. For consistency with existing literature, we refer to $\alpha_{i,j}$ as the \emph{attention coefficients}.

In our framework, we interpret the coefficients $\alpha_{i,j}$ as the output measurement of a latent linear dynamical system with impulse input, which we call the \emph{coefficient dynamics}. In contrast to previous works, which study how tokens are mixed as the iteration counts progress~\citep{ali2024hidden}, we consider the tokens individually. This orthogonal perspective allows to precisely model both softmax and linear attention, with no need for infinite dimensional approximations. Specifically, for each key position $j$, we define an autonomous dynamical system with initial state $h_{i-1,j} = 0, \forall i \leq j$, i.e.,
\begin{subequations}\label{eqn:dynamics}
    \begin{align}
        h_{i,j} &= A_i h_{i-1,j} + b_i u_{i}, \label{eqn:state} \\
        u_i &= \left\lbrace \begin{array}{ll} 
            k_j, & \textrm{if}\ i=j,\\
            0 & \textrm{otherwise}, 
        \end{array}\right.\label{eqn:impulse_input}\\
        \alpha_{i,j} &= \phi\left(q_i^\top h_{i,j}\right) \label{eqn:output},
    \end{align}
\end{subequations}
where in standard dynamical systems' terminology, $h_{i,j}\in\mathbb{R}^n$, $u_i\in\mathbb{R}^n$, and $\alpha_{i,j}\in\mathbb R$ are the state, input, and output of the system at time $i$ with an impulse input at time $j$, respectively. The readout map is given by $g_i(x) \coloneqq \phi\left(q_i^\top x\right)$, where $q_i\in\mathbb{R}^n$ is the query, $k_j\in\mathbb{R}^n$ is the key, and $\phi(\cdot): \mathbb{R} \to \mathbb{R}$ is a general nonlinear map. $A_i \in \mathbb{R}^{n \times n}$ and $b_i \in \mathbb{R}$ are the evolution matrix and the input scaling, respectively.
For $A_i$ we focus on scalar matrices, $A_i = \lambda_i \mathbb{I}_n$, diagonal matrices, $A_i = \textrm{diag}(\lambda_i)$, and Householder matrices.\footnote{A \textit{Householder matrix} is an orthogonal matrix of the form $A_i = I - 2z_iz_i^\top/\|z_i\|^2$, with $z_i$ a nonzero vector; and represents a reflection through the hyperplane orthogonal to $v_i$.} However, generally $A_i$ can have no particular structure. We consider the \emph{autonomous} evolution of dynamical system \eqref{eqn:dynamics}, i.e., given a zero initial condition, we study the system's evolution when it receives an impulse input at time $j$ and then evolves without further input.
\begin{remark}
    This formulation differs from existing state space models (e.g., Mamba) in two key aspects: (i) the system receives input only at a single time step $i = j$, and (ii) the hidden state is zero immediately before the impulse, not at global time $i=-1$.
\end{remark}
It is possible to recover an analytic expression for the coefficients $\alpha_{i,j}$ in \eqref{eqn:lin-comb} by writing the convolution expression associated with \eqref{eqn:dynamics}. In particular, the state (or ``transformed key'') evolves, for $j\le i$, as $ h_{i,j} = \left(\prod_{t=j+1}^i A_t\right) b_j k_j$. Hence, we obtain the form: 
\vspace{-1mm}
\begin{align}
    \alpha_{i,j} &= \phi(x_i^\top T_q^\top T_{h,i} x_j), \label{eqn:coefficients} \\ \text{with} &\quad T_{h,i} := \big(\textstyle\prod_{t=j+1}^i A_t\big) b_j W_K, \quad T_q := W_Q. \nonumber
\end{align}
This interpretation allows for the identification of four classes of parameters with distinct roles: 
\begin{itemize}[leftmargin=1em, itemsep=0pt]
\vspace{-2mm}
    \item \textbf{Readout map ($\phi(\cdot)$):} Applies pointwise (often nonlinear) transformations to dot product $q_i^\top h_{i,j}$.
    \item \textbf{Evolution matrices ($A_t$):} Control the temporal evolution of keys through scaling (diagonal matrices) or rotation (orthogonal matrices). Importantly, the evolution matrices define a sequence of transformation on the keys between input time $j$ and readout time $i$.
    \item \textbf{Scaling parameters ($b_j$):} Scale individual keys at their input time $j$.
    \item \textbf{Normalization factors ($\eta_i$):} Normalize the $\alpha_{i,j}$ coefficients, before the linear combination.
\end{itemize}
This framework provides a unified lens for understanding existing architectural proposals. We show in Table \ref{tab:architecture_parameters} that popular architectures can be recovered as special cases of this framework.
\renewcommand{\arraystretch}{1.6}
\begin{table*}
\centering
\caption{
Coefficient dynamics of popular architectures; the derivations are provided in Appendix~\ref{app:other-arch}.}
\label{tab:architecture_parameters}
\vspace{-3pt}
\begin{tabular}{@{}l@{}cccc@{}c@{}}
\toprule
\multirow{2}{*}{Architecture} & \multicolumn{4}{c}{Parameters} & Coefficients\\ \cmidrule{2-5} \cmidrule{6-6}
& $A_t$ & $b_j$ & $\phi(\cdot)$ & $\eta_i$ & $\alpha_{i,j}$\\ 
\midrule

Softmax Attention \scriptsize \citep{Transformer} & \small $\mathbb{I}_n$ & \small $\frac{1}{\sqrt{n}}$ & \small $\exp(\cdot)$ & \small $\sum_{j=0}^i \alpha_{i,j}$ & \small $\frac{\exp(\nicefrac{q_i^\top k_j}{\sqrt{n}} )}{{\sum_{j=1}^i\exp( \nicefrac{q_i^\top k_j}{\sqrt{n}} )}}$\\

Linear Attention \scriptsize  \citep{Katharopoulos2020} & \small $\mathbb{I}_n$ & \small $\frac{1}{\sqrt{n}}$ & \small $\psi(\cdot) \psi(\cdot)$ & \small $\sum_{j=0}^i \alpha_{i,j}$ & \small $\frac{\psi(q_i)^\top \psi(\nicefrac{k_j}{\sqrt{n}})}{{\psi(q_i)^\top \sum_{j=1}^i\psi( \nicefrac{k_j}{\sqrt{n}} )}}$\\

Normalized Attention \scriptsize  \citep{sieber2024understanding} & \small $\mathbb{I}_n$ & \small $\frac{1}{\sqrt{n}}$ & \small $\textrm{Id}(\cdot)$ & \small $\eta_i$ & \small $\frac{q_i^\top k_j}{\eta_i\sqrt{n}}$\\

GLA \scriptsize \citep{yang2023gated} & \small $\textrm{diag}(\alpha_i)$ & \small $\frac{1}{\sqrt{n}}$ & \small $\textrm{Id}(\cdot)$ & \small $1$ & \small $q_i^\top\! \left(\prod_{t=j+1}^{i} \textrm{diag}(\alpha_t)\right) \frac{k_j}{\sqrt{n}}$\\

Mamba-2 \scriptsize  \citep{mamba2} & \small $e^{-\Delta_t A}$ & \small $\Delta_j$ & \small $\text{Id}(\cdot)$ & \small $1$ & \small $q_i^\top\! \left(\prod_{t=j+1}^{i} e^{(-\Delta_t A)}\right) \Delta_j k_j$\\

DeltaNet \scriptsize \citep{schlag2021linear} & \small $\mathbb{I} \!-\! \beta_tk_tk_t^\top$ & \small $\frac{\beta_j}{\sqrt{n}}$ & \small $\textrm{Id}(\cdot)$ & \small $1$ & \small $q_i^\top\! \left(\prod_{t=j+1}^{i} \mathbb{I} \!-\! \beta_tk_tk_t^\top\right) \!\frac{\beta_j k_j}{\sqrt{n}}$ \\

Gated DeltaNet \scriptsize \citep{yang2025gated} & \small $\alpha_t(\mathbb{I} \!-\! \beta_tk_tk_t^\top)$ & \small $\frac{\beta_j}{\sqrt{n}}$ & \small $\textrm{Id}(\cdot)$ & \small $1$ & \small $q_i^\top\! \left(\prod_{t=j+1}^{i} \alpha_t(\mathbb{I} \!-\!\beta_tk_tk_t^\top)\right) \!\frac{\beta_j k_j}{\sqrt{n}}$ \\

mLSTM \scriptsize \citep{xlstm} & \small $e^{f_t}$ & \small $\frac{e^{i_j}}{\sqrt{n}}$ & \small $\textrm{Id}(\cdot)$ & \small $\frac{\eta_i}{o_i}$ & \small $\frac{o_i}{\eta_i} q_i^\top\! \left(\prod_{t=j+1}^{i} e^{f_t}\right) \frac{e^{i_j}}{\sqrt{n}} k_j$ \\
\bottomrule 
\end{tabular}
\vspace{-6pt}
\end{table*}

\section{Design Principles for Sequence Models}\label{sec:principles}

In the following, we provide principles that can inform the architecture design for sequence models.  These principles are grounded in theoretical results for the coefficient dynamics (as described in \eqref{eqn:lin-comb} \& \eqref{eqn:dynamics}) and followed by a discussion of their practical applicability. Formal proofs for the lemmas and corollaries are provided in Appendices~\ref{app:lemma_proofs} and ~\ref{app:cor_proofs}, respectively.

\subsection{On choosing the Readout Map $\phi(\cdot)$}\label{sec:readout-map}

A fundamental issue with softmax attention is its quadratic complexity in sequence length,  directly linked to its nonlinear readout map $\phi(\cdot) = \exp(\cdot)$. This fact is captured by the following principle.
\fbox{%
\parbox{\dimexpr\linewidth-2\fboxsep-2\fboxrule}{%
\begin{principle} \label{pple:implementation}
On modern hardware, a sequence model can only be efficiently computed using a recurrent formulation (e.g. parallel scan) if the readout function $\phi(\cdot)$ is polynomial.
\end{principle}
}
}

\begin{lemma}\label{lem:implementation}
    A recurrent formulation of \eqref{eqn:dynamics} with finite memory (state) exists if the readout map $\phi(\cdot): \mathbb{R} \to \mathbb{R}$ is a polynomial of finite degree $P$. The requisite memory (state) then expands to $\mathbb{R}^{n^P \times d_v}$.
\end{lemma}

Various linear attention (e.g., \citet{Katharopoulos2020, Performer}) and SSM/RNN (e.g., \cite{mamba2, xlstm, griffin}) proposals implicitly utilize this principle to design sequence models with linear complexity in sequence length. In practice, $\phi$ is often chosen to be linear, to avoid exponential computational costs, which has several geometric implications that directly affect performance on crucial recall and retrieval tasks~\citep{arorasimple2024,guo2025log}.

Beyond implementation considerations, in domains such as language processing, an important feature of sequence models is their ability to selectively attend to different tokens in the input sequence. This is often referred to as $\emph{input selectivity}$. Mathematically, this translates to the model's capacity to set coefficients $\alpha_{i,j}$ equal to zero (or near-zero in practice) for uninformative tokens and to high values for informative tokens. Crucially, how informative a token is depends on the query, as is clear for tasks such as associative recall~\citep{zoology2023}. 

\fbox{%
\parbox{\dimexpr\linewidth-2\fboxsep-2\fboxrule}{%
\begin{principle} \label{pple:zero-level-set}
    The capability of a model to robustly set coefficients to zero (or near-zero) depends on the geometry of the zero-level set of the readout map $\phi(\cdot)$. Intuitively, if the zero (or near-zero) set has large measure in $\mathbb R$, values can be suppressed more robustly. Conversely, if the zero set is thin, suppression is fragile to learn.
\end{principle}
}
}

\begin{lemma} \label{lem:zero-level-set}
Let $\phi: \mathbb{R} \to \mathbb{R}$ be the readout map with connected zero-level set $\mathcal{Z} = \{z\!\in\!\mathbb R \mid \phi(z) = 0\}$, and let $z \!=\! q_i^\top h_{i,j}$, with $T_q$, $T_{h,i}$ defined in \eqref{eqn:coefficients}. Let $\mathcal T$ be the set of linear transformations that map to the zero-level set, i.e., $\mathcal{T} = \{(T_q, T_{h,i}) \mid q_i^\top h_{i,j} \!\in\! \mathcal{Z}\ s.t.\ q_i \!=\! T_qx_i$, $h_{i,j} \!=\! T_{h,i} x_j\}$ for any input pairs $x_i, x_j \!\in\! \mathbb{R}^d$. The measure $\abs{\mathcal{T}}$ is then proportional to the zero-level set measure $\abs{\mathcal{Z}}$, i.e., $\abs{\mathcal{T}} = c_{\phi} \abs{\mathcal{Z}}$, where $c_{\phi} \!>\! 0$ depends on~$\phi(\cdot)$ and represents the mean geometric density of these transformations.
\end{lemma}

Although Lemma \ref{lem:zero-level-set} only considers zero-level sets, large near-zero level sets $\{x \mid -\epsilon \leq \phi(x) \leq \epsilon \}$ with small $\epsilon > 0$, provide effective input selectivity in practice. Intuitively, the proof follows from the fact that for an interval $\mathcal{Z}$, we can construct a set of linear combinations $\mathcal{T}_z = \{(T_q, T_{h,i}) \mid q_i^\top h_{i,j} = z \}$ that achieves $z$ for each $z \in \mathcal{Z}$. A larger set $\mathcal{Z}$ then implies that $\mathcal{T}$ is constructed from more subsets $\mathcal{T}_z$. 
Hence, when the zero-level set $\mathcal{Z}$ has large measure in $\mathbb R$ ($|\mathcal Z|\gg0$), many linear transformations $(T_q,T_{h,i})$ can achieve $\alpha_{i,j}=0$, enabling robust input selectivity. While our analysis establishes the existence of these beneficial configurations, determining whether gradient descent actually converges to any transformation in $\mathcal T$, requires a deeper analysis of specific optimization dynamics, loss functions, and architectures. However, as argued in \citet{Goodfellow-et-al-2016}, the abundance of high-dimensional parameter configurations, yielding a desired behavior, increases the likelihood that gradient descent will converge to such parameters during training.

\begin{example}[Standard nonlinear readout maps]
The dominant choice for readout map $\phi(\cdot)$ is the exponential function $\exp(\cdot)$ (as in softmax attention), which is a positive monotonic non-decreasing function that is close to zero on the negative domain, i.e., $\phi(x) \to 0$ for $x < 0$. In this case, $\alpha_{i,j} \approx 0$ can be achieved by $q_i^\top h_{i,j} < 0$, which happens if the angle between the two vectors $q_i, h_{i,j}$ lies in $[\frac{\pi}{2}+n\pi,\frac{3\pi}{2}+n\pi]$ for any $n\in\mathbb Z^+$. This results in a large set $\mathcal T$, since many combinations of $q_i, h_{i,j}$ will lead to coefficients being approximately zero.
\end{example}

Together, Lemmas \ref{lem:implementation} and \ref{lem:zero-level-set} show the tradeoff in the choice of $\phi(\cdot)$ and offer a new perspective on the tradeoff between SSMs and Transformers~\citep{gu2025tradeoffs}: on the one hand, a nonlinear choice facilitates input selectivity; on the other, a linear choice facilitates efficient computation. To address this trade-off, various kernel approximation methods, i.e. $\phi \approx \psi_q(\cdot)\psi_k(\cdot)$, have been proposed in the literature, e.g. \citep{Katharopoulos2020,tsai2019transformer, arorasimple2024}, which try to approximate the behavior of $\phi(\cdot) = \exp(\cdot)$ as closely as possible, while striving to fulfill Principle~\ref{pple:zero-level-set}. However, regardless of how exact the approximation is, there is a fundamental geometric issue with these approaches, as outlined in the next principle.

\fbox{%
\parbox{\dimexpr\linewidth-2\fboxsep-2\fboxrule}{%
\begin{subprinciple}\label{pple:kernel-approx}
    If the readout map is of the form~$\phi(\cdot) = \psi_q(\cdot)\psi_k(\cdot)$~(e.g.,kernel approximation), its zero-level set is of measure zero. Hence, by Principle \ref{pple:zero-level-set}, input selectivity is fragile to learn.
\end{subprinciple}
}
}
\begin{cor}
    \label{cor:approximation}
    Consider the kernel approximation of $\phi(\cdot): \mathbb{R}^n \to \mathbb{R}^q$ to be $\tilde{\phi}(q_i^\top h_{i,j}) = \psi_q(q_i)^\top \psi_h(h_{i,j})$, with $\psi_q: \mathbb{R}^n \to \mathbb{R}^q$ and $\psi_h: \mathbb{R}^n \to \mathbb{R}^q$, such that it approximates the readout map $\phi(\cdot) \approx \tilde{\phi}(\cdot)$ in \eqref{eqn:dynamics}. Then, the zero-level set of the approximation $\tilde{\phi}(\cdot)$ is the singleton $\mathcal{Z} = \{ 0 \}$ and by Lemma~\ref{lem:zero-level-set} the set of linear transformations $\mathcal{T}$ is small.
\end{cor}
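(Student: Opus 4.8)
The plan is to show that a readout of kernel-approximation form is, up to a relabelling of queries and keys, a \emph{linear} readout acting on a redefined similarity score, so that its zero-level set is a single point, and then to invoke Lemma~\ref{lem:zero-level-set} verbatim. First I would make explicit the role that $\tilde\phi$ plays in the coefficient dynamics \eqref{eqn:dynamics}: writing $\tilde\phi(q_i^\top h_{i,j}) = \psi_q(q_i)^\top\psi_h(h_{i,j})$ amounts to replacing the query--state similarity $q_i^\top h_{i,j}$ by the lifted similarity $s_{i,j} := \psi_q(q_i)^\top\psi_h(h_{i,j})$ and then applying the identity readout to $s_{i,j}$. Equivalently, in the notation of \eqref{eqn:coefficients}, kernel approximation keeps $\phi(\cdot)=\mathrm{Id}(\cdot)$ but post-composes the linear maps $T_q, T_{h,i}$ with the feature maps $\psi_q,\psi_h$; the nonlinearity has been absorbed into the ``transformed keys and queries'', not into the pointwise readout. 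Hence the scalar map whose zero-level set matters, in the precise sense of Lemma~\ref{lem:zero-level-set}, is $s \mapsto \mathrm{Id}(s)$.

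Next I would read off the zero-level set. For the identity map, $\mathcal Z = \{s\in\mathbb R \mid \mathrm{Id}(s)=0\} = \{0\}$, which is a (connected) singleton, so $\abs{\mathcal Z}=0$. Unpacking this in terms of the features, $\alpha_{i,j}=0$ forces $\psi_q(q_i)\perp\psi_h(h_{i,j})$ in $\mathbb R^q$ -- a single scalar equation -- so that, for any non-degenerate (e.g.\ real-analytic, as every standard kernel feature map is) choice of $\psi_q,\psi_h$, the annihilating pairs form a set of measure zero in $(q_i,h_{i,j})$-space. The contrast with $\phi=\exp$ is the crux of Principle~\ref{pple:kernel-approx}: there an entire half-line of dot-product values is driven (near-)zero, whereas here no interval of similarity values is annihilated, no matter how faithfully $\psi_q(\cdot)^\top\psi_h(\cdot)$ mimics $\exp(\cdot)$ on its effective range. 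I would then close the argument by applying Lemma~\ref{lem:zero-level-set} directly: with $\mathcal Z=\{0\}$ one obtains $\abs{\mathcal T}=c\abs{\mathcal Z}=0$, so the set $\mathcal T$ of linear transformations $(T_q,T_{h,i})$ realizing $\alpha_{i,j}=0$ is of measure zero -- ``small'' -- and by the discussion following Lemma~\ref{lem:zero-level-set} such configurations are correspondingly unlikely to be reached by gradient descent, i.e.\ input selectivity is fragile to learn.

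The step I expect to be the main obstacle is purely expository: reconciling the notation $\tilde\phi(q_i^\top h_{i,j})$ with the fact that $\psi_q(q_i)^\top\psi_h(h_{i,j})$ does \emph{not} depend on the scalar $q_i^\top h_{i,j}$ alone, so Lemma~\ref{lem:zero-level-set} -- stated for a scalar readout $\phi:\mathbb R\to\mathbb R$ with connected zero-level set -- does not literally apply to a would-be nonlinear $\tilde\phi$ of $q_i^\top h_{i,j}$. The cleanest resolution, and the one I would adopt, is the viewpoint above: kernel approximation retains a linear readout on a redefined similarity, so Lemma~\ref{lem:zero-level-set} applies unchanged. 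If one instead insists that $\tilde\phi$ be a genuine scalar function of $z=q_i^\top h_{i,j}$ (literally possible for dot-product kernels such as the softmax kernel), the argument requires the extra ingredient of analyticity of $\tilde\phi$ to rule out a positive-measure zero set and pin $\mathcal Z$ down to a discrete set. Either route delivers $\abs{\mathcal Z}=0$ and hence $\abs{\mathcal T}=0$, which is all the corollary asserts; the remaining work is just choosing the most economical formalization and stating the mild non-degeneracy hypothesis on $\psi_q,\psi_h$ that excludes pathological feature maps.
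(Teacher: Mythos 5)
Your proposal is correct and follows essentially the same route as the paper's proof: a change of variables $\tilde q_i=\psi_q(q_i)$, $\tilde h_{i,j}=\psi_h(h_{i,j})$ shows the kernel approximation is an identity readout on a lifted dot product, whose zero-level set is the singleton $\mathcal Z=\{0\}$, so Lemma~\ref{lem:zero-level-set} yields $\abs{\mathcal T}=0$ (a single fiber). Your added remarks on the notational subtlety and on analyticity are compatible extras, but the core argument coincides with the paper's.
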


By Principle \ref{pple:zero-level-set} and the results from Lemma \ref{lem:zero-level-set}, choosing kernel approximations as readout maps hinders the capacity for input selectivity. To overcome this limitation, the kernel approximations $\psi_q(\cdot), \psi_k(\cdot)$ need to be designed carefully. While, setting $\psi_q, \psi_h$ to positive functions, e.g. $\psi_q(x) = \psi_h(x) = \textrm{elu}(x) + 1$ \citep{Katharopoulos2020}, preserves the positivity of the coefficients, it eliminates much of the crucial geometric information stored in $q_i$ and $h_{i,j}$, since the angle between these vectors is not preserved. To alleviate this issue, some approximation proposals induce a bias towards near-zero coefficients by explicitly using orthogonal feature maps, e.g. FAVOR+ \citep{Performer} or random feature maps \citep{peng2021random}.

We note that Principle \ref{pple:zero-level-set} presents general considerations for driving coefficients $\alpha_{i,j}$ to zero. Yet, by \eqref{eqn:lin-comb}, $i$ coefficients are simultaneously computed using a single query $q_i$ and $i$ states (or transformed keys) $h_{i,j}$ with $j=1,\dots,i$. In what follows, we illustrate that achieving near-zero coefficients $\alpha_{i,j}$ \emph{simultaneously} is a more restrictive setup than the general setting of Principle \ref{pple:zero-level-set}.

\fbox{%
\parbox{\dimexpr\linewidth-2\fboxsep-2\fboxrule}{%
\begin{subprinciple}\label{pple:max_zero}
    If the readout map is the identity $\phi(\cdot)=\operatorname{Id}(\cdot)$, the number of coefficients that can be simultaneously suppressed by a nonzero query vector, is limited to $n-1$. Beyond this limit, suppression necessarily collapses to the trivial (zero) query vector.
\end{subprinciple}
}
}

\begin{cor}\label{cor:max_zero_coeff}
Let $y_L\in\mathbb R^{d_v}$ be the solution to \eqref{eqn:lin-comb}, where $\alpha_{L,j}:\mathbb R^n\rightarrow \mathbb R;\ q_L\mapsto \alpha_{L,j}(q_L)$ is defined in \eqref{eqn:dynamics} with identity readout map and normalization factor, i.e., $\phi(\cdot)=\operatorname{Id}(\cdot)$, $\eta_L=1$. Consider the set of linearly independent states $\mathcal{H}=\{ h_{L,t} \mid c_1 h_{L,1} + \dots + c_t h_{L,t} = 0\ \implies \ c_1 = \dots = c_t = 0\}$. Then, a nonzero $q_L$ that achieves $\alpha_{L,j} = 0, \, \forall h_{L,j} \in \mathcal{H}$ exists if and only if $\dim\big(\operatorname{span}\{h_{L,j} \in \mathcal{H}\}\big) < n$. In particular, given a nonzero $q_L$, the measure $\abs{\mathcal{H}}\leq n-1$.
\end{cor}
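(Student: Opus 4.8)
The statement is essentially a linear algebra fact about when a nonzero vector can be simultaneously orthogonal to a prescribed set of vectors. Let me set up the plan.

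The plan is to translate the condition $\alpha_{L,j} = 0$ for all $h_{L,j} \in \mathcal{H}$ into the requirement that $q_L$ lies in the orthogonal complement of $\operatorname{span}\{h_{L,j} \in \mathcal{H}\}$ in $\mathbb{R}^n$. Since $\phi = \operatorname{Id}$ and $\eta_L = 1$, we have $\alpha_{L,j} = q_L^\top h_{L,j}$, so $\alpha_{L,j} = 0$ for all $j$ is exactly $q_L \perp \operatorname{span}\{h_{L,j}\}$.

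First I would argue the "if" direction: if $\dim(\operatorname{span}\{h_{L,j} \in \mathcal{H}\}) =: r < n$, then the orthogonal complement $(\operatorname{span}\{h_{L,j}\})^\perp$ has dimension $n - r \geq 1$, hence contains a nonzero vector $q_L$, which by construction achieves $\alpha_{L,j} = q_L^\top h_{L,j} = 0$ for every $h_{L,j} \in \mathcal{H}$. Second, the "only if" direction: if $\dim(\operatorname{span}\{h_{L,j}\}) = n$, then the $h_{L,j}$ span all of $\mathbb{R}^n$, so their orthogonal complement is $\{0\}$; any $q_L$ with $q_L^\top h_{L,j} = 0$ for all $j$ must be orthogonal to a spanning set, forcing $q_L = 0$. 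Finally, for the cardinality claim: since $\mathcal{H}$ consists of linearly independent vectors in $\mathbb{R}^n$, we always have $|\mathcal{H}| = \dim(\operatorname{span}\{h_{L,j} \in \mathcal{H}\}) \leq n$; combining with the requirement $\dim(\operatorname{span}\{h_{L,j}\}) < n$ for a nonzero $q_L$ to exist yields $|\mathcal{H}| \leq n - 1$.

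I do not anticipate a serious obstacle here — the result is a clean consequence of the rank-nullity relationship between a subspace and its orthogonal complement. The one subtlety worth being careful about is the interpretation of "measure" $|\mathcal{H}|$: since $\mathcal{H}$ is defined as a finite set of linearly independent vectors, $|\mathcal{H}|$ should be read as cardinality, and the linear independence is exactly what ties cardinality to dimension. I would also note for completeness that the states $h_{L,j}$ do genuinely depend on the keys $k_j$ and the evolution matrices via \eqref{eqn:coefficients}, but the argument is agnostic to that structure — only the ambient dimension $n$ matters. The full write-up with these details is deferred to Appendix~\ref{app:cor_proofs}.
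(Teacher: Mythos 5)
Your proposal is correct and follows essentially the same route as the paper's proof: both reduce the simultaneous-suppression condition to $q_L$ lying in the nullspace of $H^\top$ (equivalently, the orthogonal complement of $\operatorname{span}\{h_{L,j}\}$) and invoke rank--nullity to conclude that a nonzero solution exists iff that span is a proper subspace of $\mathbb{R}^n$, giving $\abs{\mathcal{H}}\leq n-1$. Your reading of $\abs{\mathcal{H}}$ as cardinality also matches the paper's intent.
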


Principle \ref{pple:max_zero} is primarily a limiting factor for identity readout maps,
since zero coefficients $\alpha_{i,j}$ are only achieved for $q_i^\top h_{i,j}=0$, i.e., $q_i$ and $h_{i,j}$ are orthogonal. For nonlinear readout maps (see Example~1), many configurations of $q_i$ and $h_{i,j}$ achieve $\alpha_{i,j} = 0$.

\vspace{-2mm}
\paragraph{Can learnable parameters save us?} We established that identity readout maps $\phi(\cdot) = \textrm{Id}(\cdot)$ can suppress coefficients -- if $q_i$, $h_{i,j}$ are orthogonal -- but results in a much smaller set $\mathcal T$ than for nonlinear $\phi(\cdot)$, which in principle makes the learning problem harder. However, since $T_{h,i}$ in \eqref{eqn:coefficients} is a function of $A_{i,j} \coloneqq \prod_{t=j+1}^i A_t$ and $b_j$, these can be carefully parametrized and leveraged to achieve $h_{i,j}=0$, which also results in zero coefficients. This is the central design paradigm of many SSM proposals, e.g. \citet{mamba2,yang2025gated}. We elaborate more on how these choices can compensate for a linear readout map in the next section. However, it is important to note that although the choices of $A_{i,j}$ and $b_j$ can be used to suppress certain tokens, these are typically designed for input selectivity on the keys. For example, choosing $A_t = 1 - \lambda_t, \, b_j = \lambda_j$ (as in GRU \citet{cho2014properties}),\footnote{A similar inverse relationship between $A_t$ and $b_j$ is found in many SSM proposals, see e.g. \cite{mamba2,griffin,schlag2021linear}.} enables selection of the state or key (\eqref{eqn:dynamics}). While this is straightforward to understand in an SSM setting, where the state aggregates the keys over time, our framework directly links this gating behavior to the coefficients $\alpha_{i,j}$. Choosing $\lambda_j = 0$ for a key $k_j$ results in suppression of this key in all future time steps, i.e., $\alpha_{i,j} = 0, \forall i \geq j$. Additionally, choosing $\lambda_t = 1$ or equivalently $A_t = 0$ for any $t \in [j, i]$, results in a zero state $h_{i,j}$ for all $i \geq t$, thus suppressing the coefficients. This shows that input selectivity design in SSMs, i.e., design of $A_t$, $b_j$, is a special case of the more general input selectivity design for coefficients considered in this work.



\subsection{On choosing the Evolution Matrices $A_t$}\label{sec:A}
It is well-known that softmax attention cannot distinguish between two identical inputs that appear at different (time) positions in the sequence \citep{Yun2020Are}. The reason for this is the evolution matrix ($A_t = \mathbb{I}$) used in softmax attention and is commonly mitigated via positional embeddings (e.g. \citet{Su2024}). However, careful design of the evolution matrix enables processing of positional information in the attention mechanism, as evidenced by SSMs \citep{mamba2}. 

\fbox{%
\parbox{\dimexpr\linewidth-2\fboxsep-2\fboxrule}{%
\begin{principle}\label{pple:pos-info}
If $A_t = \mathbb{I}$, no positional information is contained in the coefficients $\alpha_{i,j}$. This means for a sequence model choosing $A_t$ to be the identity, positional embeddings are needed.
\end{principle}
}
}
\begin{definition}[Positional Information]
    $\alpha_{i,j}$ in \eqref{eqn:coefficients} are said to have positional information, if for two identical inputs $x_j = x_{\bar{j}} = \bar{x}$, the resulting coefficients are not identical, i.e., $\alpha_{i,j} \neq \alpha_{i,\bar{j}}$.
\end{definition}
\begin{lemma}\label{lem:positional_info}
    The coefficients $\alpha_{i,j}$ in \eqref{eqn:coefficients} have positional information if and only if $A_t \neq \mathbb{I}_n, \forall t$.
\end{lemma}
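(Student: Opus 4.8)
The plan is to prove the biconditional in Lemma~\ref{lem:positional_info} by establishing the two directions separately, working directly from the closed-form expression \eqref{eqn:coefficients}, namely $\alpha_{i,j} = \phi\big(x_i^\top W_Q^\top (\prod_{t=j+1}^i A_t) b_j W_K x_j\big)$. Fix a readout time $i$ and consider two positions $j < \bar\jmath \le i$ carrying identical inputs $x_j = x_{\bar\jmath} = \bar x$. The two coefficients differ only through the products $P_{i,j} := \prod_{t=j+1}^i A_t$ and $P_{i,\bar\jmath} := \prod_{t=\bar\jmath+1}^i A_t$ (and possibly the scalars $b_j, b_{\bar\jmath}$, which for this lemma I will treat as fixed/position-independent, consistent with the framework's convention; if they are allowed to vary the statement should be read as a claim about the $A_t$ contribution and I would note this).

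For the ``if'' direction (positional information $\Rightarrow$ $A_t \neq \mathbb{I}_n$ for all $t$), I argue the contrapositive: if $A_{t_0} = \mathbb{I}_n$ for some $t_0$, then... actually the cleaner reading is: if $A_t = \mathbb{I}_n$ for all $t$ then $P_{i,j} = \mathbb{I}_n = P_{i,\bar\jmath}$ for every pair $j,\bar\jmath$, hence $\alpha_{i,j} = \phi(\bar x^\top W_Q^\top b\, W_K \bar x) = \alpha_{i,\bar\jmath}$, so no positional information — this is exactly Principle~\ref{pple:pos-info}. For the ``only if'' direction I must show that whenever \emph{every} $A_t \neq \mathbb{I}_n$, there exist inputs and a configuration realizing $\alpha_{i,j} \neq \alpha_{i,\bar\jmath}$. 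The key observation is that $P_{i,j}$ and $P_{i,\bar\jmath}$ are related by $P_{i,j} = P_{i,\bar\jmath} \cdot \prod_{t=j+1}^{\bar\jmath} A_t$, and since the inner product $\prod_{t=j+1}^{\bar\jmath} A_t \neq \mathbb{I}_n$ (a product of non-identity matrices from the admissible structured classes — scalar, diagonal, Householder — is generically not the identity, and one can choose $j,\bar\jmath$ adjacent so the product is a single $A_{\bar\jmath} \neq \mathbb{I}_n$), the two linear maps $P_{i,j}$ and $P_{i,\bar\jmath}$ differ. Then I pick $\bar x$ and the projection vectors $W_Q^\top q$-direction so that $x_i^\top W_Q^\top P_{i,j} b\, W_K \bar x \neq x_i^\top W_Q^\top P_{i,\bar\jmath} b\, W_K \bar x$; since $\phi$ is applied pointwise and we only need the \emph{existence} of distinguishing inputs, it suffices that $\phi$ is not constant on the relevant range, which holds for all readout maps considered. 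Concretely, since $P_{i,j} - P_{i,\bar\jmath} \neq 0$ as a matrix, the bilinear form $(u,w) \mapsto u^\top(P_{i,j}-P_{i,\bar\jmath})w$ is not identically zero, so generic $x_i, \bar x$ (with generic $W_Q, W_K$) separate the two arguments of $\phi$.

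The main obstacle I anticipate is handling the edge case $j + 1 > i$ or $\bar\jmath = i$ cleanly — when $\bar\jmath = i$ the product $P_{i,\bar\jmath}$ is empty and equals $\mathbb{I}_n$, so the comparison is between $\mathbb{I}_n$ and $\prod_{t=j+1}^i A_t$, which is exactly where ``$A_t \neq \mathbb{I}_n$ for all $t$'' is used — and, more subtly, making precise what ``there exist distinguishing inputs'' means: the lemma as stated reads as an iff about a property of the architecture, so I interpret ``$\alpha_{i,j} \neq \alpha_{i,\bar\jmath}$'' existentially (there is some input sequence realizing the inequality), which is the only sensible reading since for \emph{specific} inputs one could accidentally land on a level set of $\phi$. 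A second, more delicate point is that for the Householder case the product $\prod A_t$ can equal $\mathbb{I}_n$ even when each factor is $\neq \mathbb{I}_n$ (two equal reflections compose to the identity); here I would restrict attention to a single step $\bar\jmath = j+1$ where the product is the single factor $A_{j+1} \neq \mathbb{I}_n$, which sidesteps the cancellation issue entirely and suffices to witness positional information. I would defer the full measure-theoretic genericity argument to Appendix~\ref{app:lemma_proofs} and keep the main-text proof at the level of: reduce to comparing $\mathbb{I}_n$ versus $\prod_{t} A_t$, invoke non-degeneracy of the associated bilinear form, done.
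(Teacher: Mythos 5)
Your proposal follows essentially the same route as the paper's proof: absorb $b_j$ into the key (since $b_j$ depends only on $x_j$), reduce the comparison to the products $\prod_{t=j+1}^{i} A_t$ versus $\prod_{t=\bar\jmath+1}^{i} A_t$, and use the factorization $A_{i,j} = A_{i,\bar\jmath} A_{\bar\jmath,j}$ to conclude the coefficients can be distinguished exactly when the $A_t$ are not the identity. Your treatment is in fact somewhat more careful than the paper's -- the existential reading of the definition, the Householder cancellation issue, and the level-set caveat for $\phi$ are all glossed over in the paper's argument -- but these are refinements of the same idea rather than a different proof.
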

By Lemma~\ref{lem:positional_info}, positional information can be embedded in the coefficients by designing $A_t$ appropriately, further strengthening our intuition why linear RNN/SSM proposals (such as Mamba-2, DeltaNet, mLSTM) do not require positional embeddings to learn in-context recall tasks \citep{zoology2023,Poli2024Mad, okpekpe2025revisiting} or language~\citep{mamba}. However, despite positional embeddings not being strictly necessary, using positional embeddings for RNNs and SSMs ($A_t \neq \mathbb{I}$) can be beneficial in practice \citep{morita2024positional}.

Beyond enabling processing of positional information, the choices of $A_t$ can selectively suppress tokens under some circumstances. Specifically, as discussed in Principle \ref{pple:zero-level-set}, when the readout map $\phi(\cdot)$ is a linear function, $\alpha_{i,j} = 0$ is only achieved when $q_i$ and $h_{i,j}$ are orthogonal, or if one of the vectors $q_i^\top,h_{i,j}$ (or both) is zero. Since relying on the orthogonality of $q_i^\top$ and $h_{i,j}$ leads to harder learning problem (per Principle \ref{pple:zero-level-set}), many architectures that use a linear readout map for computational reasons (as per Principle \ref{pple:implementation}), rely on dynamics \eqref{eqn:dynamics} to achieve selectivity, e.g. Mamba-2. In these models, the right choice of the dynamics components $A_t$ and $b_j$ is crucial to ensure that $h_{i,j}$ can be effectively driven to zero. Additionally, the evolution matrices $A_t$ can be used to rotate non-orthogonal keys over time, to achieve $q_i^\top h_{i,j} = 0$, despite $q_i^\top k_j \neq 0$.

\fbox{%
\parbox{\dimexpr\linewidth-2\fboxsep-2\fboxrule}{%
\begin{principle}\label{pple:transformations}
Assuming a linear readout map $\phi(\cdot)$, the structure imposed on the evolution matrices $A_t$ limits the operations (e.g. scaling, rotation) that can be performed on the keys.
\end{principle}
}
}
\begin{lemma}\label{lem:A_geometric}
    Imposing any of the following structures on $A_t$ in \eqref{eqn:dynamics}, results in the corresponding allowed transformations for the keys $\{k_j\}_{j=1}^{i}$:
    \begin{enumerate}[itemsep=0pt, parsep=0pt, topsep=0pt, partopsep=0pt]
        \item[a)] $A_t = \lambda_t \mathbb{I}_n, \lambda_t \in \mathbb{R}, \abs{\lambda_t} \leq 1$; allows scaling of keys (including flipping, if $\lambda_t < 0$) uniformly along all dimensions $n$,
        \item[b)] $A_t = \textrm{diag}(\lambda_t), \lambda_t \in \mathbb{R}^n, \abs{\lambda^{(r)}_t} \leq 1$ where $\lambda^{(r)}$ denotes the $r$-th entry of $\lambda_t$; allows scaling of keys (including flipping, if $\lambda^{(r)}_t< 0$) separately along dimensions $n$,
        \item[c)] $A_t = \mathbb{I}_n - \beta_t \lambda_t \lambda_t^\top, \beta_t \in [0, 2], \lambda_t \in \mathbb{R}^n, $ (Householder matrix); allows scaling and specific rotations of keys.
    \end{enumerate}
\end{lemma}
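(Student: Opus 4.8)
The plan is to reduce the lemma to a statement about finite matrix products. Recall from the discussion preceding \eqref{eqn:coefficients} that, for $j \le i$, the state has the closed form $h_{i,j} = A_{i,j}\, b_j k_j$ with $A_{i,j} \coloneqq \prod_{t=j+1}^{i} A_t$; since $b_j \in \mathbb{R}$ is scalar, the linear map actually applied to the key $k_j$ is exactly left-multiplication by $A_{i,j}$. So the ``allowed transformations on the keys'' are precisely the matrices realizable as products $\prod_{t=j+1}^{i} A_t$ with each factor constrained to the prescribed structure, and I would prove the lemma by characterizing these product sets: showing that the stated family both contains every such product and is attained within it.

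For (a) and (b) this is elementary and I would dispatch it first. Scalar matrices are closed under multiplication, $\prod_t(\lambda_t\mathbb{I}_n) = \big(\prod_t\lambda_t\big)\mathbb{I}_n$, with $\abs{\prod_t\lambda_t}\le\prod_t\abs{\lambda_t}\le1$, so every realizable $A_{i,j}$ is a uniform scaling by a factor in $[-1,1]$ -- a flip when that factor is negative -- and conversely any such scaling is achieved with a single factor, giving tightness. For (b) the same argument applies coordinatewise: diagonal matrices are closed under multiplication, $A_{i,j}=\operatorname{diag}\big(\prod_t\lambda_t^{(r)}\big)_r$, each coordinate scaled independently by a factor of magnitude $\le1$.

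The substantive case is (c), and here I would first normalize $\|\lambda_t\|=1$ (absorbing the norm into $\beta_t$, matching the Householder convention in the footnote), so that $A_t=\mathbb{I}_n-\beta_t\lambda_t\lambda_t^\top$ is symmetric with eigenvalue $1-\beta_t\in[-1,1]$ along $\lambda_t$ and eigenvalue $1$ on $\lambda_t^\perp$; hence $\|A_t\|\le1$ and, geometrically, $A_t$ scales the $\lambda_t$-component of a key by $1-\beta_t$ and fixes the orthogonal complement (a genuine reflection at $\beta_t=2$, the identity at $\beta_t=0$). Then I would argue: each factor is a contraction so any product is; when all $\beta_t=2$ the factors are orthogonal reflections so the product is orthogonal -- in particular a product of two such factors with normals $\lambda_1,\lambda_2$ is a rotation by twice the angle between them inside $\operatorname{span}\{\lambda_1,\lambda_2\}$ and the identity on its complement -- which is exactly the ``scaling and specific rotations'' claimed. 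For attainability I would invoke Cartan--Dieudonn\'e (every orthogonal map of $\mathbb{R}^n$ is a product of at most $n$ reflections), so for a large enough gap $i-j$ any rotation is realizable, and mixing reflection factors with partial ($\beta_t<2$) factors yields the full scaling-plus-rotation family.

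The hard part will be case (c): making ``specific rotations'' precise without overclaiming, since a bounded number of factors does not generate all of $O(n)$ and mixing $\beta_t<2$ factors with reflections generally breaks orthogonality. I would resolve this by stating the characterization at the level of the product set itself -- $A_{i,j}$ ranges exactly over all products of $i-j$ matrices of the form $\mathbb{I}_n-\beta\lambda\lambda^\top$ with $\beta\in[0,2]$, $\|\lambda\|=1$ -- and then describing that set geometrically (operator norm $\le1$; reflections at $\beta=2$; the two-reflection rotation identity; Cartan--Dieudonn\'e for the attainable orthogonal maps), leaving the exact image for a fixed small gap implicit in the word ``specific.'' The scalar and diagonal cases, and the scalar prefactor $b_j$, are routine.
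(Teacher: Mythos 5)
Your proof is correct and follows essentially the same route as the paper's: elementary closure/eigenvalue computations for the scalar and diagonal cases, and, for the Householder case, a decomposition along the $\lambda_t$-direction (eigenvalue $1-\beta_t\|\lambda_t\|^2$, under the same unit-norm convention the paper also needs and flags in a remark) and its fixed orthogonal complement, with rotations obtained by composing reflections. The only real difference is framing: you state the result at the level of the cumulative product $A_{i,j}=\prod_{t=j+1}^{i}A_t$ and invoke Cartan--Dieudonn\'e explicitly, whereas the paper analyzes a single factor's action on a key and mentions that compositions of reflections generate any orthogonal matrix only as a closing remark.
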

Lemma \ref{lem:A_geometric} provides guidance on how to design the evolution matrices $A_t$, depending on the transformations that should be enabled. Note that it is possible to combine these structures, e.g., combine a) and c) in Lemma \ref{lem:A_geometric} to obtain a scaled Householder matrix \citep{yang2025gated}. This list is not exhaustive and more general structures can be imposed on $A_t$ to achieve other transformations. Yet, these structures might be prohibited by current CUDA kernel implementations~\citep{siems2025deltaproduct}.
\begin{figure*}
    \centering
    \includegraphics[width=0.85\linewidth]{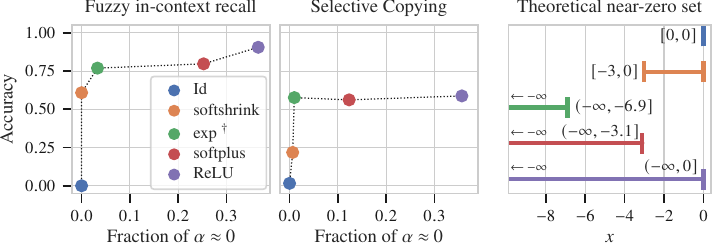}
    \caption{\textbf{(Principle 2)} Performance of different readout maps $\phi(\cdot)$ on two MAD tasks against the fraction of coefficients with near zero values ($\abs{\alpha} \leq 0.001$; left \& middle) and the theoretical near-zero sets of each readout map ($\abs{\phi(x)} \leq 0.001$; right). The other parameters $A_t$, $b_j$, $\eta_i$ are fixed, thus the setting for $^\dag$ is equivalent to softmax attention.}
    \label{fig:phi}
    \vspace{-2mm}
\end{figure*}

\subsection{On choosing the Scaling Parameters $b_j$}\label{sec:b}
As discussed in Section \ref{sec:readout-map}, the scaling parameters $b_j$ are often linked to the evolution matrices $A_t$ (see Table \ref{tab:architecture_parameters}). While the need for scaling/normalization of keys has been established in previous architectural proposals, there is a lack of consensus on how to design $b_j$. The following result is based on classical signal propagation analyses~\citep{glorot2010,he2015delving}.

\fbox{%
\parbox{\dimexpr\linewidth-2\fboxsep-2\fboxrule}{%
\begin{principle}\label{pple:b_variance}
Choosing the scaling parameter $b_j = \mathcal{O}(\nicefrac{1}{\sqrt{n}})$ ensures that the dot product $q_i^\top h_{i,j}$ has $\mathcal{O}(1)$ variance; anything larger than this scale lets the variance grow with $n$.
\end{principle}
}
}
\begin{lemma}\label{lem:variance-dot-product}
    Consider two i.i.d. normally distributed random vectors $x_i, x_j \sim \mathcal{N}(0, \Sigma)$. Then, the dot product $q_i^\top h_{i,j}$ with $q_i, h_{i,j}$ defined in \eqref{eqn:coefficients}, has zero-mean and variance $\textrm{Var}(q_i^\top h_{i,j}) = \textrm{tr}(\Sigma_q \Sigma_h)$ with $\Sigma_q = T_q\Sigma T_q^\top$, $\Sigma_h = T_{h,i}\Sigma T_{h,i}^\top$.
    Assuming $\Sigma = \sigma \mathbb{I}_d$, both $\Sigma_q$ and $\Sigma_h$ are positive semi-definite and the variance of the dot product scales as $\textrm{Var}(q_i^\top h_{i,j}) = \mathcal{O}(n)$.
\end{lemma}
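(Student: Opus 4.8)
The plan is to compute the first two moments of the scalar $z := q_i^\top h_{i,j}$ directly from the representations $q_i = T_q x_i$ and $h_{i,j} = T_{h,i} x_j$ given in \eqref{eqn:coefficients}, exploiting that $x_i$ and $x_j$ are independent. First I would write $z = x_i^\top T_q^\top T_{h,i} x_j =: x_i^\top M x_j$ with $M := T_q^\top T_{h,i} \in \mathbb{R}^{d\times d}$. For the mean, independence and zero-mean of each factor give $\mathbb{E}[z] = \mathbb{E}[x_i]^\top M\, \mathbb{E}[x_j] = 0$, so $\textrm{Var}(z) = \mathbb{E}[z^2]$. For the second moment, I would use $z^2 = (x_i^\top M x_j)(x_j^\top M^\top x_i) = \textrm{tr}(M x_j x_j^\top M^\top x_i x_i^\top)$, take expectations, and push $\mathbb{E}$ inside the trace and across the independent blocks to get $\mathbb{E}[z^2] = \textrm{tr}(M\, \mathbb{E}[x_j x_j^\top]\, M^\top\, \mathbb{E}[x_i x_i^\top]) = \textrm{tr}(M \Sigma M^\top \Sigma)$. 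Then I would rewrite $\textrm{tr}(M\Sigma M^\top \Sigma) = \textrm{tr}(T_q^\top T_{h,i} \Sigma T_{h,i}^\top T_q \Sigma)$ and, using cyclicity of the trace, regroup as $\textrm{tr}\big((T_q \Sigma T_q^\top)(T_{h,i}\Sigma T_{h,i}^\top)\big) = \textrm{tr}(\Sigma_q \Sigma_h)$, which is the claimed identity.

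For the second part, assume $\Sigma = \sigma \mathbb{I}_d$. Then $\Sigma_q = \sigma\, T_q T_q^\top$ and $\Sigma_h = \sigma\, T_{h,i} T_{h,i}^\top$ are each of the form (scalar)$\times GG^\top$, hence symmetric positive semi-definite. To get the scaling, I would recall from \eqref{eqn:coefficients} that $T_q = W_Q$ and $T_{h,i} = \big(\prod_{t=j+1}^i A_t\big) b_j W_K$, so the factor $b_j$ enters $\Sigma_h$ quadratically: $\Sigma_h = \sigma\, b_j^2\, \big(\prod_t A_t\big) W_K W_K^\top \big(\prod_t A_t\big)^\top$. Writing $\textrm{Var}(z) = \sigma^2 b_j^2\, \textrm{tr}\big(W_Q W_Q^\top\, (\prod_t A_t) W_K W_K^\top (\prod_t A_t)^\top\big)$, and noting that for the structured $A_t$ considered in the paper (scalar, diagonal with $|\lambda|\le 1$, or Householder) the spectral norm of $\prod_t A_t$ is $\mathcal{O}(1)$, the trace is a sum of $n$ terms each $\mathcal{O}(1)$ under the standard assumption that $W_Q, W_K$ have entries of order $\mathcal{O}(1/\sqrt{d})$ with rows contributing $\mathcal{O}(1)$ norm; this yields $\textrm{Var}(z) = \mathcal{O}(b_j^2\, n)$, so that $b_j = \mathcal{O}(1/\sqrt n)$ is exactly the scale making $\textrm{Var}(z) = \mathcal{O}(1)$, matching Principle \ref{pple:b_variance}.

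The main obstacle I anticipate is being precise about the $\mathcal{O}(n)$ claim: strictly, $\textrm{tr}(\Sigma_q\Sigma_h)$ is a sum of $n$ nonnegative eigenvalue products, so it is $\mathcal{O}(n)$ only if the per-coordinate contributions are bounded, which requires a normalization convention on $W_Q, W_K$ (and on $\prod_t A_t$) that the lemma leaves implicit. I would state this convention explicitly — keys/queries normalized so that individual coordinates of $q_i, h_{i,j}$ have $\mathcal{O}(1)$ variance independent of $n$, together with the contraction property $\|\prod_t A_t\| \le 1$ from Lemma \ref{lem:A_geometric} — and then the bound $\textrm{Var}(q_i^\top h_{i,j}) \le n \cdot \|\Sigma_q\|\,\|\Sigma_h\| = \mathcal{O}(n)$ follows from $\textrm{tr}(\Sigma_q\Sigma_h) \le n\,\lambda_{\max}(\Sigma_q)\lambda_{\max}(\Sigma_h)$. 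The moment computation itself is routine Gaussian bilinear-form algebra; the only care needed is keeping the independence of $x_i, x_j$ visible so the cross terms vanish and the two covariance factors genuinely separate inside the trace.
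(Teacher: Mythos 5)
Your proposal is correct and follows essentially the same route as the paper's proof: zero mean by independence, the second moment computed as a Gaussian bilinear-form trace identity giving $\mathrm{tr}(\Sigma_q\Sigma_h)$ (you do both expectations at once via cyclicity, the paper conditions on $x_i$ first — a cosmetic difference), and the $\mathcal{O}(n)$ scaling under an explicit normalization convention on the weight matrices. The normalization caveat you flag is handled the same way in the paper, which simply assumes $T_q, T_{h,i}$ are ``properly normalized'' so that $\|T_q^\top T_{h,i}\|_F^2 = \mathcal{O}(n)$; your version, which keeps $b_j$ and $\prod_t A_t$ explicit, is a slightly more careful statement of the same assumption.
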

Note that in \citet[Section 3.2.1]{Transformer}, the statement of Lemma~\ref{lem:variance-dot-product} is discussed informally. While the scaling factor choice $b_j = \nicefrac{1}{\sqrt{n}}$ is widely used for attention-based models, in SSMs or RNNs, $b_j$ is more often designed in combination with evolution matrices $A_t$. Since in these models, $A_t$ and $b_j$ have an inverse relationship and $A_t$ is typically designed to have eigenvalues close to the unit circle, this forces $b_j$ to be small. If the entangling of $A_t$ and $b_j$ is abandoned, Principle~\ref{pple:b_variance} provides a minimum scaling level that is essential for stable training of a sequence model.
\begin{example}\label{exp:mamba}
Mamba-2 parameterizes both $A_t$ and $b_j$ with $\Delta_j$ (see Table \ref{tab:architecture_parameters}), which is biased to lie in the range $\Delta_j \in [0.001, 0.1]$. This range can be thought of as $\nicefrac{1}{\sqrt{n}}$ for $n \in [1e2, 1e6]$, thus fulfilling Principle~\ref{pple:b_variance} for a wide range of dimensions $n$.
\end{example}
\begin{figure*}
    \centering
    \begin{minipage}[t]{0.36\linewidth}
        \centering
        \includegraphics[width=\linewidth]{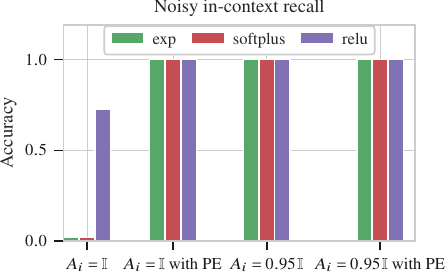}
        \caption{\textbf{(Principle 3)} Performance of two $A_t$ choices with and without positional embeddings (PE) on the noisy in-context recall task.}
        \label{fig:pos}
    \end{minipage}\hfill
    \begin{minipage}[t]{0.62\linewidth}
        \centering
        \includegraphics[width=\linewidth]{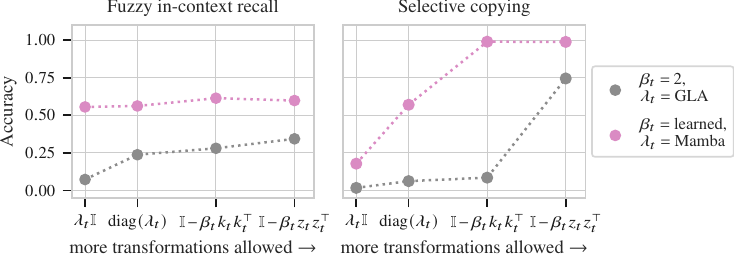}
        \caption{\textbf{(Principle 4)} Performance of four $A_t$ choices (scalar, diagonal, Householder with $k_t$, Householder with learned $z_t$) on two MAD tasks, with all other parameters fixed. The scalar/diagonal parameter(s) $\lambda_t$ are using either the GLA (gray) or Mamba-2 (magenta) parameterization and the Householder scaling $\beta_t$ is either fixed (gray) or learned (magenta).}
        \label{fig:A}
    \end{minipage}
    \vspace{-2mm}
\end{figure*}

\subsection{On choosing the Normalization Factors $\eta_i$}\label{sec:n}
While the scaling factors $b_j$ scale the keys, the normalization factors $\eta_i$ directly scale the $\alpha_{i,j}$s. Therefore, the normalization factors offer a way to re-scale all coefficients at time index $i$ ( amplifying large, reducing small) or normalize the linear combination at every time index individually.

\fbox{%
\parbox{\dimexpr\linewidth-2\fboxsep-2\fboxrule}{%
\begin{principle}\label{pple:stability}
If the readout map $\phi(\cdot)$ is unbounded and/or $A_t$ are not stable, the normalization factors $\eta_i$ need to be designed to counteract the growth in the coefficients.
\end{principle}
}
}
\begin{lemma}\label{lem:normalization}
Consider a fixed index $j$ and let $s_i \coloneqq q_i^\top h_{i,j}$, $\alpha_i = \phi(s_i)$ in \eqref{eqn:dynamics}. Let $\phi:\mathbb{R}\to[0,\infty)$ be nondecreasing and unbounded, and let $g:[0,\infty)\to(0,\infty)$ be an increasing and unbounded comparison function, such that $L \coloneqq \limsup_{i\to\infty}\frac{\phi(s_i)}{g(s_i)} < \infty$. If $\eta_i$ is chosen such that $\liminf_{i\to\infty}\frac{\eta_i}{g(s_i)} \coloneqq m > 0$, then the normalized coefficients are bounded: $\limsup_{i\to\infty}\frac{\alpha_i}{\eta_i} \leq \frac{L}{m}$.
\end{lemma}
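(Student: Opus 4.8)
The plan is to recognize Lemma~\ref{lem:normalization} as a purely scalar comparison statement and to prove it by factoring the normalized coefficient through the comparison function $g$. For each index $i$ large enough that $s_i$ lies in the domain $[0,\infty)$ of $g$ (see the remark on conventions below), I would write
\begin{equation*}
  \frac{\alpha_i}{\eta_i} \;=\; \frac{\phi(s_i)}{\eta_i} \;=\; \frac{\phi(s_i)}{g(s_i)}\cdot\frac{g(s_i)}{\eta_i} \;=\; \frac{\phi(s_i)/g(s_i)}{\eta_i/g(s_i)} .
\end{equation*}
By hypothesis the numerator is eventually bounded above (its $\limsup$ is $L<\infty$) and the denominator is eventually bounded below away from $0$ (its $\liminf$ is $m>0$), so the ratio is asymptotically controlled by $L/m$.

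Carrying this out, I would first note that $\eta_i>0$ for all sufficiently large $i$: since $\liminf_i \eta_i/g(s_i)=m>0$ and $g(s_i)>0$, we have $\eta_i/g(s_i)\ge m/2>0$ eventually, so all the quotients above are well-defined and nonnegative. Next, fix any $\varepsilon\in(0,m)$. By the definition of $\limsup$ there is $N_1$ with $\phi(s_i)/g(s_i)\le L+\varepsilon$ for all $i\ge N_1$, and by the definition of $\liminf$ there is $N_2$ with $\eta_i/g(s_i)\ge m-\varepsilon$ for all $i\ge N_2$; hence for $i\ge\max(N_1,N_2)$,
\begin{equation*}
  \frac{\alpha_i}{\eta_i} \;\le\; \frac{L+\varepsilon}{m-\varepsilon}.
\end{equation*}
Taking $\limsup_{i\to\infty}$ (the right-hand side is independent of $i$) and then letting $\varepsilon\downarrow 0$ yields $\limsup_{i\to\infty}\alpha_i/\eta_i\le L/m$, which is the claim. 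An equivalent route is to invoke $\limsup(a_ib_i)\le(\limsup a_i)(\limsup b_i)$ for nonnegative sequences together with $\limsup_i g(s_i)/\eta_i = 1/\liminf_i(\eta_i/g(s_i)) = 1/m$, but the $\varepsilon$-argument is fully self-contained and avoids citing that inequality.

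The only place requiring care — and the step I would treat as the genuine (if modest) obstacle — is the domain convention for $g$: since $g$ is defined only on $[0,\infty)$, the ratios $\phi(s_i)/g(s_i)$ and $\eta_i/g(s_i)$, and hence $L$ and $m$ themselves, must be read along the cofinal set of indices with $s_i\ge 0$. This is exactly the regime in which the nondecreasing, unbounded $\phi$ can drive the coefficients to grow, so restricting attention to it is the intended reading; along any indices with $s_i<0$ one simply has $\phi(s_i)\le\phi(0)$ and no blow-up occurs. Apart from this bookkeeping, the argument uses nothing about the structure of the coefficient dynamics \eqref{eqn:dynamics} beyond the scalar hypotheses on $\phi$, $g$, and $\eta_i$, so I would state and prove it at precisely that level of generality.
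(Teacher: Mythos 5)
Your proof is correct and follows essentially the same route as the paper's: the identical factorization $\frac{\alpha_i}{\eta_i}=\frac{\phi(s_i)}{g(s_i)}\cdot\frac{g(s_i)}{\eta_i}$ and the same $\varepsilon$--$N$ argument with the bound $\frac{L+\varepsilon}{m-\varepsilon}$ followed by $\varepsilon\downarrow 0$. Your added bookkeeping (eventual positivity of $\eta_i$ and reading the ratios on the cofinal set where $s_i\ge 0$, since $g$ is only defined on $[0,\infty)$) is a minor tightening of hypotheses the paper leaves implicit, not a different argument.
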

As an example, consider the mLSTM architecture \citep{xlstm}, which allows unstable evolution matrices $A_t$ with eigenvalues outside the unit circle and an identity readout map. To fulfill Principle \ref{pple:stability}, the normalization factors $\eta_i$ need to be designed such that they grow linearly (or superlinearly) with the state $h_{i,j}$. The model does this by relying on a separate normalization state that uses the same dynamics as the state (see Table~\ref{tab:architecture_parameters}).
An alternative option to design the normalization factors, is to choose $\eta_i$ as a function of the coefficients $\alpha_{i,j}$, i.e., $\eta_i = f(\alpha_{i,j})$. In this case, only linear (or superlinear) growth of $f$ is required to fulfill Principle \ref{pple:stability} (see Remark~\ref{remark:eta-function}). A simple choice of a superlinear $f$ is used in softmax attention, and is discussed in the following corollary.
\begin{cor}\label{cor:coefficient_restriction}
    Choosing $\eta_i = \sum_{j=1}^{i} \alpha_{i,j}$ in \eqref{eqn:lin-comb} constrains the normalized coefficients to $ \tilde{\alpha}_{i,j} = \nicefrac{\alpha_{i,j}}{\eta_i} \in [0,1]$ and imposes $\sum_{j=1}^{i} \tilde{\alpha}_{i,j} = 1$.
\end{cor}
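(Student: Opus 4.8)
The plan is to show directly that normalizing the coefficients by their own sum produces a probability distribution. First I would observe that, under the assumptions of Corollary~\ref{cor:coefficient_restriction}, the relevant readout map is $\phi(\cdot)=\exp(\cdot)$ (as in softmax attention), or more generally any $\phi$ with nonnegative range, so that each $\alpha_{i,j}=\phi(q_i^\top h_{i,j})\ge 0$. Moreover at least one term is strictly positive (e.g. $\alpha_{i,i}=\exp(q_i^\top h_{i,i})>0$), hence $\eta_i=\sum_{j=1}^{i}\alpha_{i,j}>0$ and the division $\tilde\alpha_{i,j}=\alpha_{i,j}/\eta_i$ is well-defined.

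Next I would carry out the two elementary verifications. For the sum-to-one property, simply compute
\begin{equation}
\sum_{j=1}^{i}\tilde\alpha_{i,j}=\sum_{j=1}^{i}\frac{\alpha_{i,j}}{\eta_i}=\frac{1}{\eta_i}\sum_{j=1}^{i}\alpha_{i,j}=\frac{\eta_i}{\eta_i}=1,
\end{equation}
which is immediate from pulling the positive scalar $\eta_i$ out of the sum. For the range constraint, each $\tilde\alpha_{i,j}\ge 0$ because it is a ratio of a nonnegative numerator and a positive denominator; and $\tilde\alpha_{i,j}\le 1$ because $\alpha_{i,j}\le\sum_{k=1}^{i}\alpha_{i,k}=\eta_i$ (the remaining terms being nonnegative), so $\tilde\alpha_{i,j}/\eta_i\le 1$. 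Together these give $\tilde\alpha_{i,j}\in[0,1]$ and $\sum_{j=1}^{i}\tilde\alpha_{i,j}=1$, which is exactly the claim, and also recovers the standard interpretation of the normalized coefficients as convex combination weights, consistent with the softmax row of Table~\ref{tab:architecture_parameters}.

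There is essentially no deep obstacle here; the only point requiring a word of care is making explicit the standing assumption that $\alpha_{i,j}\ge 0$, since for a general signed readout map the normalization $\eta_i=\sum_j\alpha_{i,j}$ need not even be positive and the conclusion fails. I would therefore state at the outset that the corollary is understood in the regime where $\phi$ has nonnegative range (the setting of softmax/linear attention in which this choice of $\eta_i$ is used), and then the two displayed computations above constitute the complete argument. The formal write-up is deferred to Appendix~\ref{app:cor_proofs}.
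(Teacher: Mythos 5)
Your proof is correct and takes essentially the same route as the paper's: substitute $\eta_i=\sum_{j=1}^{i}\alpha_{i,j}$ into \eqref{eqn:lin-comb} and verify the sum-to-one and range properties by direct computation. You are in fact more explicit than the paper's own proof, which only derives $\sum_{j=1}^{i}\tilde{\alpha}_{i,j}=1$ unconditionally and treats the $[0,1]$ (convex vs.\ affine) part as conditional on $\tilde{\alpha}_{i,j}\ge 0$, exactly the nonnegative-range caveat you state up front; the only blemish is the typo near the end, where the conclusion should read $\alpha_{i,j}/\eta_i\le 1$, i.e.\ $\tilde{\alpha}_{i,j}\le 1$, rather than $\tilde{\alpha}_{i,j}/\eta_i\le 1$.
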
\vspace{-2mm}
Corollary~\ref{cor:coefficient_restriction} additionally imposes a constraint on the linear combination in \eqref{eqn:lin-comb}, which restricts the outputs $y_i$ to lie in a well-defined geometric space spanned by the value vectors. This fact and its implications are further discussed in Appendix~\ref{app:linear-comb}.

\section{Experimental Validation}\label{sec:experiments}
In this section, we empirically validate the principles discussed in Section~\ref{sec:principles} on selected tasks of the MAD benchmark~\citep{Poli2024Mad}. Additional experiments are reported in Appendix~\ref{app:extended-exp} and all details of the experimental setup are discussed in Appendix~\ref{app:exp-details}.

\paragraph{Principle~\ref{pple:implementation}} Given that the principle concerns the implementation of sequence models, we show its implication on the model's throughput in Appendix~\ref{app:extended-exp}.

\paragraph{Principle~\ref{pple:zero-level-set}} Figure~\ref{fig:phi} shows the accuracy of different readout maps $\phi(\cdot)$ on the fuzzy in-context recall and selective copying tasks of MAD. The other parameters are fixed to $A_t = \mathbb{I}$, $b_j = \nicefrac{1}{\sqrt{n}}$, $\eta_i = \sum_j \alpha_{i,j}$, across all readout maps. Since input selectivity is crucial for both tasks, we plot accuracy against the fraction of coefficients close to $0$ (near-zero set with $\epsilon = 0.001$) as well as the theoretical near-zero set of each $\phi(\cdot)$. \emph{The results show, that more coefficients are set close to zero as the near-zero set of the readout map grows. For both tasks this increases performance}. Additional results and the results for Principles~\ref{pple:kernel-approx} \&~\ref{pple:max_zero} are provided in Appendix~\ref{app:extended-exp}.

\paragraph{Principle~\ref{pple:pos-info}} We show that $A_t \neq \mathbb{I}_n$ can replace positional embeddings (PE) for the simplest choice of~$A_t$. Figure \ref{fig:pos} shows the accuracy of four models on the noisy in-context recall of MAD, where we set $b_j=\nicefrac{1}{\sqrt{n}}$, $\eta_i = \sum_j \alpha_{i,j}$, and vary $A_t \in \{\mathbb{I}_n, 0.95\,\mathbb{I}_n \}$, $\phi(\cdot) = \{ \exp(\cdot), \textrm{softplus}(\cdot), \textrm{ReLU}(\cdot) \}$, and wether the models have PE or not. We choose a constant $A_t \!=\! 0.95\,\mathbb{I}_n$, since it directly connects to ALiBi \citep{press2022train}. While ALiBi enables positional information via additive biases to the attention matrix, $A_t = 0.95\,\mathbb{I}_n$ does the same but via multiplication. \emph{The experiment shows that noisy in-context recall is solved by both PE and $A_t \neq \mathbb{I}_n$ for all readout maps. For $A_t \!=\! \mathbb{I}_n$ without PE, only $\phi (\cdot) = \textrm{ReLU}(\cdot)$ achieves non-random performance but does not perform perfectly.}

\paragraph{Principle~\ref{pple:transformations}} To show the effect of allowed transformations in $A_t$ on the performance, we ablate four structures imposed on $A_t$ -- scalar, diagonal, Householder with keys $k_t$, and Householder with a learned vector $z_t$ -- on the fuzzy in-context recall and selective copying tasks of MAD; the other parameters are fixed to $\phi(\cdot) = \textrm{Id}(\cdot)$, $\eta_i = 1$, $b_j = \nicefrac{1}{\sqrt{n}}$ across all choices of $A_t$. For each $A_t$, we additionally vary how the scalar and diagonal are parameterized (either using GLA \citep{yang2023gated} or Mamba-2 \citep{mamba} parameterizations), and the scaling factor of the rotation vector in the Householder matrix (either $\beta_t = 2$ or learned from the input). \emph{On both tasks, the performance generally increases as more transformations are allowed in $A_t$ (blue line). However, the parameterization of the involved parameters is important and good parameterizations can considerably improve performance (orange line).}

\paragraph{Principles~\ref{pple:b_variance} \&~\ref{pple:stability}} These principles are mainly concerned with training stability, which we show by ablating various choices of $b_j$ and $\eta_i$ in Appendix~\ref{app:extended-exp}.

\section{Conclusions}\vspace{-1mm}
This paper studies sequence models with a focus on the \emph{coefficient dynamics} that multiply value vectors for a single layer. We view the sequence model outputs as linear combinations of past values with coefficients produced by autonomous linear dynamics with impulse inputs given by the keys. This view captures a broad class of existing sequence models, which can be recovered as a special case of this formalization. Furthermore, we show how studying the \emph{coefficient dynamics} sheds light onto how normalization, geometric operations, and state updates influence input selectivity, efficient implementations, and training stability. These insights, formalized as mathematical results, unlock six design principles for sequence models that enable the tailored design of models. Experimental results validate these principles.
\vspace{-3mm}
\paragraph{Limitations:} The present study is limited to single-layer setups and more work is needed to translate the proposed principles to multi-layer models. It also does not cover optimization considerations and the role of other components in the architecture, such as convolutions, gates, and specific positional embeddings, which could inform new principles or strengthen existing ones. Finally, the principles are validated on synthetic datasets and require additional experiments on real-world applications, such as language modeling.



\bibliography{bibliography}
\bibliographystyle{iclr2026_conference}
\newpage
\appendix
\let\addcontentsline\savedaddcontentsline
\setcounter{tocdepth}{2}
\renewcommand{\contentsname}{Contents of Appendix}
\tableofcontents

\clearpage

\begin{table*}
\centering
\caption{Summary of linear combination classes depending on the choice of coefficients.}
\label{tab:linear_combinations}
\begin{tabular}{llll}
\toprule
     Type & Restrictions & Output Space & Example \\ \midrule
     Convex comb. & $\tilde{\alpha}_{i,j} \geq 0$ \& $\sum_{j} \tilde{\alpha}_{i,j} = 1$ & Simplex & Softmax Attention \\
     Conical comb. & $\tilde{\alpha}_{i,j} \geq 0$ & Positive Orthant & Normalized Attention \\
     Affine comb. & $\sum_{j} \tilde{\alpha}_{i,j} = 1$ & Hyperplane & GPAM \small{\citep{heo2024generalizedprobabilisticattentionmechanism}} \\
     Linear comb. & None & $\mathbb{R}^{d_v}$ & Mamba-2, DeltaNet \\
\bottomrule
\end{tabular}
\end{table*}

\section{Interpretation of linear combination classes} \label{app:linear-comb}
Our perspective of viewing a sequence model's output as a linear combination of value vectors (\eqref{eqn:lin-comb}) has important considerations. In particular, restrictions imposed on the normalized coefficients $\tilde{\alpha}_{i,j} = \frac{\alpha_{i,j}}{\eta_i}$ fully determine the output space of the model, as spanned by the value vectors. Specifically, any linear combination of the form \eqref{eqn:lin-comb} can be divided into four different classes depending on the restrictions imposed. These classes together with their restrictions, output space, and example architectures are provided in Table \ref{tab:linear_combinations}. The output space of these classes are also visualized in Figure~\ref{fig:linear_combinations}.

\tdplotsetmaincoords{70}{60} 

\begin{minipage}{0.4\linewidth}\hfill
\centering
\begin{tikzpicture}[tdplot_main_coords, line join=round, line cap=round]
  \draw[->, thick] (0,0,0) -- (2,0,0) node[below left] {$v_1$};
  \draw[->, thick] (0,0,0) -- (0,2,0) node[below right] {$v_2$};
  \draw[->, thick] (0,0,0) -- (0,0,2) node[left] {$v_3$};

  \coordinate (A) at (0,0,0);
  \coordinate (B) at (1.75,0,0);
  \coordinate (C) at (0,1.75,0);
  \coordinate (D) at (0,0,1.75);

  \filldraw[fill=gray!12, draw=black, very thick, opacity=0.7] (B) -- (C) -- (D) -- cycle;

  \node[align=center] at (0,0.5,-1.5) {Simplex};

\end{tikzpicture}
\end{minipage}\hfill
\begin{minipage}{0.4\linewidth}

\begin{tikzpicture}[tdplot_main_coords, line join=round, line cap=round]

  \fill[gray!20, opacity=0.3] (0,0,0) -- (1.5,0,0) -- (1.5,1.5,0) -- (0,1.5,0) -- cycle; 
  \fill[gray!20, opacity=0.3] (0,0,0) -- (0,1.5,0) -- (0,1.5,1.5) -- (0,0,1.5) -- cycle; 
  \fill[gray!20, opacity=0.3] (0,0,0) -- (1.5,0,0) -- (1.5,0,1.5) -- (0,0,1.5) -- cycle; 
  \fill[gray!20, opacity=0.3] (1.5,0,0) -- (1.5,0,1.5) -- (1.5,1.5,1.5) -- (1.5,1.5,0) -- cycle;
  \fill[gray!20, opacity=0.3] (1.5,1.5,0) -- (1.5,1.5,1.5) -- (0,1.5,1.5) -- (0,1.5,0) -- cycle;
  \fill[gray!20, opacity=0.3] (0,0,1.5) -- (1.5,0,1.5) -- (1.5,1.5,1.5) -- (0,1.5,1.5) -- cycle;

  \draw[->, thick] (-0.4,0,0) -- (1.25,0,0) node[below left] {$v_1$};
  \draw[->, thick] (0,-0.4,0) -- (0,1.25,0) node[below right] {$v_2$};
  \draw[->, thick] (0,0,-0.4) -- (0,0,1.25) node[left] {$v_3$};

  \node[align=center] at (0,0.8,-1.8) {Positive Orthant};
\end{tikzpicture}
\end{minipage}

\begin{minipage}{0.4\linewidth}\hfill
\centering
\begin{tikzpicture}[tdplot_main_coords, line join=round, line cap=round]
  \draw[->, thick] (-0.75,0,0) -- (1.5,0,0);
  \draw[->, thick] (0,-0.75,0) -- (0,1.5,0);
  \draw[->, thick] (0,0,-0.75) -- (0,0,1.5);

  \coordinate (B) at (1,0,0);
  \coordinate (C) at (0,1,0);
  \coordinate (D) at (0,0,1);

  \coordinate (v1) at (0.1,-0.9,1.8);
  \coordinate (v2) at (2.3,-1,-0.3);
  \coordinate (v3) at (-0.,1.5,-0.5);
  \coordinate (v4) at (-1.9,1.6,1.3);

  \fill[fill=gray!12, opacity=0.8]
    (v1) -- (v2) -- (v3) -- (v4) -- cycle;

  \draw[->, thick] (1,0,0) -- (1.5,0,0) node[below left] {$v_1$};
  \draw[->, thick] (0,1,0) -- (0,1.5,0) node[below right] {$v_2$};
  \draw[->, thick] (0,0,1) -- (0,0,1.5) node[left] {$v_3$};

  \draw[dashed] (B) -- (C) -- (D) -- cycle;

  \node[align=center] at (0,0.5,-1.5) {Hyperplane};
\end{tikzpicture}
\end{minipage}\hfill
\begin{minipage}{0.4\linewidth}
\begin{tikzpicture}[tdplot_main_coords, line join=round, line cap=round]

  \fill[gray!20, opacity=0.3] (-0.5,-0.5,-0.5) -- (1.5,-0.5,-0.5) -- (1.5,1.5,-0.5) -- (-0.5,1.5,-0.5) -- cycle; 
  \fill[gray!20, opacity=0.3] (-0.5,-0.5,-0.5) -- (-0.5,1.5,-0.5) -- (-0.5,1.5,1.5) -- (-0.5,-0.5,1.5) -- cycle; 
  \fill[gray!20, opacity=0.3] (-0.5,-0.5,-0.5) -- (1.5,-0.5,-0.5) -- (1.5,-0.5,1.5) -- (-0.5,-0.5,1.5) -- cycle; 
  \fill[gray!20, opacity=0.3] (1.5,-0.5,-0.5) -- (1.5,-0.5,1.5) -- (1.5,1.5,1.5) -- (1.5,1.5,-0.5) -- cycle;
  \fill[gray!20, opacity=0.3] (1.5,1.5,-0.5) -- (1.5,1.5,1.5) -- (-0.5,1.5,1.5) -- (-0.5,1.5,-0.5) -- cycle;
  \fill[gray!20, opacity=0.3] (-0.5,-0.5,1.5) -- (1.5,-0.5,1.5) -- (1.5,1.5,1.5) -- (-0.5,1.5,1.5) -- cycle;

  \draw[->, thick] (-0.75,0,0) -- (1.25,0,0) node[below left] {$v_1$};
  \draw[->, thick] (0,-0.75,0) -- (0,1.25,0) node[below right] {$v_2$};
  \draw[->, thick] (0,0,-0.75) -- (0,0,1.25) node[left] {$v_3$};

  \node[align=center] at (0,0.5,-1.5) {$\mathbb{R}^{d_v}$};
\end{tikzpicture}

\end{minipage} 

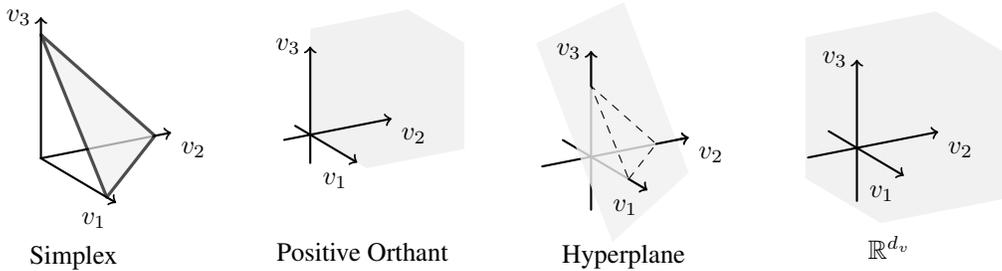
\captionof{figure}{Visual representation of output spaces referenced in Table \ref{tab:linear_combinations}.\label{fig:linear_combinations}
}\vspace{4mm}

The structure of the output space defined by the normalized coefficients $\tilde{\alpha}_{i,j}$ and the value vectors $\{v_1, \dots, v_i \}$ can be linked to a model's performance, however this area remains underexplored. For instance, the output of softmax attention lies in the convex hull (simplex) of previous value vectors, which has been linked to various performance traits, such as in-context learning \citep{Oswald2023},\footnote{Their derivation of the in-context gradient step hinges on the probability simplex imposed by the softmax operation.} and failure to represent basic logical operators (\texttt{XOR}) \citep{richter2020normalizedattentionprobabilitycage}. More research into the role of linear combination classes and their effect on a model's performance or properties is needed to better understand the implications, but we believe this paper has shown the advantages of viewing sequence models through the lens of linear combinations.


\section{Linear Combination Coefficients in Existing Architectures} \label{app:other-arch}
In the following, we derive the coefficient dynamics of all architectures listed in Table~\ref{tab:architecture_parameters}. To do this, we make use of a model's representation for a single output token (e.g. \eqref{eqn:appendix-sm}) and \eqref{eqn:coefficients} to transfer each model into a linear combination (\eqref{eqn:lin-comb}).

\subsection{Softmax Attention}
\begin{lemma}[Coefficient Dynamics of Softmax Attention]
Softmax attention \citep[eq. 1]{Transformer}
\begin{equation}\label{eqn:appendix-sm}
    y_i = \sum_{j=0}^i \frac{\exp( \frac{q_i^\top k_j}{\sqrt{n}} ) v_j}{\sum_{j=1}^i\exp( \frac{q_i^\top k_j}{\sqrt{n}} )}
\end{equation}
can be written in the form of \eqref{eqn:lin-comb}, by setting $A_i = \mathbb{I}_n$, $b_j = \frac{1}{\sqrt{n}}$, $\phi(\cdot) = \exp(\cdot)$, and $\eta_i = \sum_{j=1}^i \exp(\frac{q_i^\top k_j}{\sqrt{n}})$. The resulting normalized coefficients satisfy $\tilde{\alpha}_{i,j} \geq 0$ and $\sum_{j=1}^i \tilde{\alpha}_{i,j} = 1$, placing the output in the convex hull of the value vectors.
\end{lemma}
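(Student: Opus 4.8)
The plan is to verify directly that softmax attention as written in \eqref{eqn:appendix-sm} is the instantiation of the coefficient dynamics \eqref{eqn:dynamics} with the claimed parameter choices, and then read off the two stated properties of the normalized coefficients. First I would substitute $A_i = \mathbb{I}_n$ into the convolution form for the state derived just before \eqref{eqn:coefficients}, namely $h_{i,j} = \big(\prod_{t=j+1}^i A_t\big) b_j k_j$. Since each $A_t = \mathbb{I}_n$, the product collapses to $\mathbb{I}_n$, so $h_{i,j} = b_j k_j = \tfrac{1}{\sqrt{n}} k_j$, independent of $i$. Feeding this into the readout \eqref{eqn:output} with $\phi(\cdot) = \exp(\cdot)$ gives $\alpha_{i,j} = \exp(q_i^\top h_{i,j}) = \exp\!\big(\tfrac{q_i^\top k_j}{\sqrt{n}}\big)$, which matches the numerator in \eqref{eqn:appendix-sm}. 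Plugging $\alpha_{i,j}$ and $\eta_i = \sum_{j=1}^i \exp\!\big(\tfrac{q_i^\top k_j}{\sqrt{n}}\big)$ into \eqref{eqn:lin-comb} reproduces \eqref{eqn:appendix-sm} term by term, establishing the representation.

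Next I would establish the two properties of the normalized coefficients $\tilde{\alpha}_{i,j} = \alpha_{i,j}/\eta_i$. Nonnegativity is immediate: $\exp(\cdot) > 0$, so each $\alpha_{i,j} > 0$ and $\eta_i > 0$, hence $\tilde{\alpha}_{i,j} > 0 \geq 0$. For the normalization, summing over $j$ gives $\sum_{j=1}^i \tilde{\alpha}_{i,j} = \tfrac{1}{\eta_i}\sum_{j=1}^i \alpha_{i,j} = \tfrac{\eta_i}{\eta_i} = 1$ by the definition of $\eta_i$. Together these say the weight vector $(\tilde{\alpha}_{i,1},\dots,\tilde{\alpha}_{i,i})$ lies in the probability simplex, so $y_i = \sum_{j=1}^i \tilde{\alpha}_{i,j} v_j$ is by definition a convex combination of $\{v_1,\dots,v_i\}$ and therefore lies in their convex hull; this is exactly the ``Convex comb.'' row of Table~\ref{tab:linear_combinations}.

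There is essentially no hard step here — the lemma is a bookkeeping/consistency check that the general framework specializes correctly, and the only mild subtlety is the index bookkeeping (the upper sum in \eqref{eqn:appendix-sm} starts the output sum at $j=0$ while the normalizer starts at $j=1$, a harmless typo inherited from the original transformer paper that does not affect the convex-combination conclusion once the convention is fixed). If anything, the one place to be slightly careful is to note that the derivation uses the fact that the transformed key $h_{i,j} = b_j k_j$ does not depend on the readout time $i$, which is precisely what makes softmax attention position-agnostic and is consistent with Lemma~\ref{lem:positional_info}.
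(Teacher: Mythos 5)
Your proof is correct and follows essentially the same route as the paper: you verify that the parameter choices $A_i=\mathbb{I}_n$, $b_j=\nicefrac{1}{\sqrt{n}}$, $\phi(\cdot)=\exp(\cdot)$ reproduce $\alpha_{i,j}=\exp(\nicefrac{q_i^\top k_j}{\sqrt{n}})$ via the convolution form \eqref{eqn:coefficients}, then obtain nonnegativity from positivity of $\exp$ and the sum-to-one property by construction of $\eta_i$, concluding the convex-hull statement. The only difference is direction (you substitute the parameters forward, the paper identifies them by comparing \eqref{eqn:appendix-sm} with \eqref{eqn:lin-comb}), which is an equivalent bookkeeping choice.
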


\begin{proof}
First, identify the following parameters, by comparing \eqref{eqn:appendix-sm} to \eqref{eqn:lin-comb}:
\begin{equation*}
    \alpha_{i,j} = \exp( \frac{q_i^\top k_j}{\sqrt{n}} ), \quad \eta_i = \sum_{j=1}^i \exp(\frac{q_i^\top k_j}{\sqrt{n}}).
\end{equation*}
Then, the parameter choices $A_i = \mathbb{I}_n$, $b_j = \frac{1}{\sqrt{n}}$, and $\phi(\cdot) = \exp(\cdot)$ ensure the coefficients $\alpha_{i,j}$ align with \eqref{eqn:coefficients}.

The properties $\tilde{\alpha}_{i,j} \geq 0$ follow from $\exp(\cdot) \geq 0$, and $\sum_{j=1}^i \tilde{\alpha}_{i,j} = 1$ follows by construction of the normalization factor $\eta_i$. Therefore, the output $y_i = \sum_{j=1}^i \tilde{\alpha}_{i,j} v_j$ lies in the convex hull of $\{v_1, \ldots, v_i\}$.
\end{proof}

\subsection{Linear Attention}
\begin{lemma}[Coefficient Dynamics of Linear Attention]
Linear attention \citep[eq. 4]{Katharopoulos2020}
\begin{equation}\label{eqn:appendix-lin}
    y_i = \sum_{j=1}^i \frac{\psi(q_i)^\top \psi(\nicefrac{ k_j}{\sqrt{n}} ) v_j}{\sum_{j=0}^i \psi(q_i)^\top \psi(\nicefrac{ k_j}{\sqrt{n}} ) }
\end{equation}
with $\psi(\cdot) \geq 0$, can be written in the form of \eqref{eqn:lin-comb} by setting $A_i = \mathbb{I}_n$, $b_j = \frac{1}{\sqrt{n}}$, $\phi(\cdot) = \psi(\cdot)\psi(\cdot)$, and $\eta_i = \sum_{j=1}^i \psi(q_i)^\top \psi(\nicefrac{ k_j}{\sqrt{n}} )$. The resulting normalized coefficients satisfy $\tilde{\alpha}_{i,j} \geq 0$ and $\sum_{j=1}^i \tilde{\alpha}_{i,j} = 1$, placing the output in the convex hull of the value vectors.
\end{lemma}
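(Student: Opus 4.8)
The plan is to mirror the structure of the softmax attention lemma, since linear attention differs only in replacing $\exp(q_i^\top k_j/\sqrt n)$ by the kernelized inner product $\psi(q_i)^\top\psi(k_j/\sqrt n)$. First I would read off the attention coefficients and normalization factor directly by pattern-matching \eqref{eqn:appendix-lin} against \eqref{eqn:lin-comb}: we have $\alpha_{i,j} = \psi(q_i)^\top\psi(k_j/\sqrt n)$ and $\eta_i = \sum_{j=1}^i \psi(q_i)^\top\psi(k_j/\sqrt n)$, so that $y_i = \sum_{j=1}^i (\alpha_{i,j}/\eta_i) v_j$ exactly.

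Next I would verify that these coefficients are produced by the coefficient dynamics \eqref{eqn:dynamics} with the claimed parameters. Setting $A_t = \mathbb{I}_n$ makes the state propagation trivial, so $h_{i,j} = b_j k_j = k_j/\sqrt n$ for all $i \ge j$; plugging into \eqref{eqn:output} with the readout map $\phi(\cdot) = \psi(\cdot)\psi(\cdot)$ (understood, as in Table~\ref{tab:architecture_parameters} and Corollary~\ref{cor:approximation}, as the bilinear form $\phi(q_i^\top h_{i,j}) = \psi(q_i)^\top\psi(h_{i,j})$) gives $\alpha_{i,j} = \psi(q_i)^\top\psi(k_j/\sqrt n)$, matching the expression from the previous paragraph. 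This confirms the parameter identification is consistent with \eqref{eqn:coefficients}.

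Finally I would establish the geometric claim about the output space. Since $\psi(\cdot) \ge 0$ entrywise by hypothesis, each summand $\psi(q_i)^\top\psi(k_j/\sqrt n)$ is a sum of products of nonnegative numbers, hence $\alpha_{i,j} \ge 0$; consequently $\eta_i \ge 0$ and $\tilde\alpha_{i,j} = \alpha_{i,j}/\eta_i \ge 0$ (assuming $\eta_i > 0$, which holds generically). Summing over $j$, $\sum_{j=1}^i \tilde\alpha_{i,j} = (\sum_j \alpha_{i,j})/\eta_i = 1$ by the very definition of $\eta_i$. Therefore $y_i$ is a convex combination of $\{v_1,\dots,v_i\}$ and lies in their convex hull, exactly as in the softmax case.

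There is no real obstacle here: the statement is essentially a bookkeeping exercise, and the only subtlety worth flagging is the mild abuse of notation in writing $\phi(\cdot) = \psi(\cdot)\psi(\cdot)$ — one should be careful to interpret it as the induced bilinear readout on the dot product rather than a pointwise scalar function, which is precisely the reinterpretation used in Corollary~\ref{cor:approximation}. The nonnegativity assumption $\psi(\cdot)\ge 0$ is what makes the convex-hull conclusion go through, and it should be stated explicitly in the proof.
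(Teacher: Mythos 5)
Your proposal is correct and follows essentially the same route as the paper's proof: identify $\alpha_{i,j}$ and $\eta_i$ by comparison with \eqref{eqn:lin-comb}, check consistency with the coefficient dynamics under the stated parameters (interpreting $\phi(\cdot)=\psi(\cdot)\psi(\cdot)$ as the kernelized bilinear readout, a point the paper also makes by viewing $\psi$ as preprocessing), and derive nonnegativity and sum-to-one from $\psi(\cdot)\ge 0$ and the definition of $\eta_i$. Your explicit caveat that $\eta_i>0$ is needed for the division is a minor point the paper leaves implicit.
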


\begin{proof}
First, identify the following parameters, by comparing \eqref{eqn:appendix-lin} to \eqref{eqn:lin-comb}:
\begin{equation*}
    \alpha_{i,j} = \psi(q_i)^\top \psi(\frac{k_j}{\sqrt{n}} ), \quad \eta_i = \sum_{j=1}^i \psi(q_i)^\top \psi(\nicefrac{ k_j}{\sqrt{n}} ).
\end{equation*}
Then, the parameter choices $A_i = \mathbb{I}_n$, $b_j = \frac{1}{\sqrt{n}}$, and $\phi(\cdot) = \psi(\cdot)\psi(\cdot)$\footnote{\citet{Katharopoulos2020} uses $\psi(x) = \textrm{elu}(x) + 1$, but there exist many more proposals for $\psi(\cdot)$ in the literature.} ensure the coefficients $\alpha_{i,j}$ align with \eqref{eqn:coefficients}. Note that $\psi(\cdot)$ can be seen as preprocessing steps of the queries and keys, therefore allowing to write the coefficients as $\alpha_{i,j} = \nicefrac{\tilde{q}_i^\top \tilde{k}_j}{\sqrt{n}} $ with $\tilde{q}_i = \psi(q_i)$ and $\tilde{k}_j = \psi(k_j)$, which reveals that the readout map is effectively the identity as discussed in Section~\ref{sec:readout-map}.

The properties $\tilde{\alpha}_{i,j} \geq 0$ follow from $\psi(\cdot) \geq 0$, and $\sum_{j=1}^i \tilde{\alpha}_{i,j} = 1$ follows by construction of the normalization factor $\eta_i$. Therefore, the output $y_i = \sum_{j=1}^i \tilde{\alpha}_{i,j} v_j$ lies in the convex hull of $\{v_1, \ldots, v_i\}$.
\end{proof}

\subsection{Normalized Attention}
\begin{lemma}[Coefficient Dynamics of Normalized Attention]
Normalized attention \citep[eq. 22]{sieber2024understanding}
\begin{equation}\label{eqn:appendix-norm}
    y_i = \sum_{j=1}^i \frac{\frac{q_i^\top k_j}{\sqrt{n}} v_j}{\exp(W_\eta x_i)}
\end{equation}
can be written in the form of \eqref{eqn:lin-comb} by setting $A_i = \mathbb{I}_n$, $b_j = \frac{1}{\sqrt{n}}$, $\phi(\cdot) = \textrm{Id}(\cdot)$, and $\eta_i = \exp(W_\eta x_i)$. The resulting normalized coefficients impose no additional constraints on the output.
\end{lemma}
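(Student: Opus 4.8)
The plan is to verify the claimed parameter identification by direct comparison between the given closed form of normalized attention, \eqref{eqn:appendix-norm}, and the general linear-combination template \eqref{eqn:lin-comb} together with the coefficient dynamics \eqref{eqn:dynamics}, exactly as was done for softmax and linear attention above. First I would read off the attention coefficients and the normalization factor by matching \eqref{eqn:appendix-norm} to $y_i = \sum_{j=1}^i \frac{\alpha_{i,j}}{\eta_i} v_j$, which immediately gives $\alpha_{i,j} = \nicefrac{q_i^\top k_j}{\sqrt n}$ and $\eta_i = \exp(W_\eta x_i)$. Then I would check that these coefficients are consistent with the coefficient-dynamics expression \eqref{eqn:coefficients}: since $A_t = \mathbb I_n$ for all $t$, the product $\prod_{t=j+1}^i A_t = \mathbb I_n$, so the transformed key is $h_{i,j} = b_j k_j = \nicefrac{k_j}{\sqrt n}$; with $\phi(\cdot) = \operatorname{Id}(\cdot)$ the readout gives $\alpha_{i,j} = q_i^\top h_{i,j} = \nicefrac{q_i^\top k_j}{\sqrt n}$, matching the coefficient read off from \eqref{eqn:appendix-norm}. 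This establishes that the four parameter classes $(A_t, b_j, \phi, \eta_i) = (\mathbb I_n, \nicefrac{1}{\sqrt n}, \operatorname{Id}, \exp(W_\eta x_i))$ realize normalized attention.

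For the final clause — that the normalized coefficients impose no additional constraints on the output — I would argue as follows. Unlike softmax or linear attention, here $\eta_i = \exp(W_\eta x_i)$ is an input-dependent positive scalar that is \emph{not} tied to the sum $\sum_j \alpha_{i,j}$, so neither the normalization identity $\sum_j \tilde\alpha_{i,j} = 1$ nor a sign constraint on $\tilde\alpha_{i,j}$ holds in general: the raw coefficients $\nicefrac{q_i^\top k_j}{\sqrt n}$ can take any real sign, and dividing by a positive scalar preserves sign, so $\tilde\alpha_{i,j} = \nicefrac{q_i^\top k_j}{\eta_i\sqrt n} \in \mathbb R$ is unrestricted. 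Hence the output $y_i = \sum_{j=1}^i \tilde\alpha_{i,j} v_j$ ranges over all of $\operatorname{span}\{v_1,\dots,v_i\}$, i.e. falls into the unrestricted ``linear combination'' class of Table~\ref{tab:linear_combinations}, placing no geometric constraint on $y_i$ beyond lying in the span of the value vectors.

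I do not expect a genuine obstacle here: the proof is a routine pattern-match of the kind already carried out for the preceding two lemmas, and the only subtlety is making the ``no constraints'' statement precise — namely distinguishing it from the convex/conical cases by noting that $\eta_i$ is a free positive scaling decoupled from the coefficient sum, and that the identity readout together with $A_t = \mathbb I_n$ leaves the sign and magnitude of $q_i^\top k_j$ untouched up to a positive rescale. If one wanted to be fully careful, the single point worth spelling out is that $\exp(W_\eta x_i) > 0$ always, so the division is well-defined and sign-preserving; everything else follows by substitution.
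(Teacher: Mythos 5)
Your proposal is correct and follows essentially the same route as the paper's proof: read off $\alpha_{i,j}=\nicefrac{q_i^\top k_j}{\sqrt{n}}$ and $\eta_i=\exp(W_\eta x_i)$ by matching \eqref{eqn:appendix-norm} to \eqref{eqn:lin-comb}, verify the parameter choices against \eqref{eqn:coefficients}, and observe that the normalized coefficients are unrestricted in sign and magnitude. Your explicit computation of $h_{i,j}=\nicefrac{k_j}{\sqrt{n}}$ and the remark that $\exp(W_\eta x_i)>0$ makes the division well-defined and sign-preserving are slightly more detailed than the paper's version, which instead adds a side note that positive preprocessing of queries and keys would yield a conical combination.
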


\begin{proof}
First, identify the following parameters, by comparing \eqref{eqn:appendix-norm} to \eqref{eqn:lin-comb}:
\begin{equation*}
    \alpha_{i,j} = \frac{q_i^\top k_j}{\sqrt{n}}, \quad \eta_i = \exp(W_\eta x_i).
\end{equation*}
Then, the parameter choices $A_i = \mathbb{I}_n$, $b_j = \frac{1}{\sqrt{n}}$, and $\phi(\cdot) = \textrm{Id}(\cdot)$ ensure the coefficients $\alpha_{i,j}$ align with \eqref{eqn:coefficients}. 

Note that positive preprocessing functions as in linear attention ($\psi(\cdot) \geq 0$) are allowed, which would constrain the normalized coefficients to $\tilde{\alpha}_{i,j} \geq 0$. Otherwise, there are no restrictions on the resulting normalized coefficients $\tilde{\alpha}_{i,j}$, placing the output in the full space $\mathbb{R}^{d_v}$.
\end{proof}

\subsection{Gated Linear Attention (GLA)}
\begin{lemma}[Coefficient Dynamics of Gated Linear Attention]
Gated linear attention \citep[eq. 3]{yang2023gated}\footnote{Here, we assume the vector product $k_t v_t$ is well-defined. This is explicit in \citet{yang2023gated}, but only needed if recurrences are computed, which is not the case in this paper. Therefore, assume here that $k_tv_t$ computes $k_tv_t^\top$ and that the product $q_i^\top s_i$ is transposed after summation. Then, this is the equivalent setting to \citep[eq. 3]{yang2023gated} but with all quantities transposed.}
\begin{equation}\label{eqn:appendix-gla}
    s_t = \textrm{diag}(\alpha_t)s_{t-1} + k_t v_t
\end{equation}
can be written in the form of \eqref{eqn:lin-comb} by setting $A_i = \textrm{diag}(\alpha_i)$, $b_j = \frac{1}{\sqrt{n}}$, $\phi(\cdot) = \textrm{Id}(\cdot)$, and $\eta_i = 1$. The resulting normalized coefficients impose no additional constraints on the output.
\end{lemma}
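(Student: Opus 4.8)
The plan is to unroll the GLA recurrence \eqref{eqn:appendix-gla} into an explicit sum over past keys, match the resulting expression term by term with the linear-combination form \eqref{eqn:lin-comb} and the coefficient formula \eqref{eqn:coefficients}, and then simply read off the four parameter classes. Concretely, I would start from the zero initial state $s_0 = 0$ — which corresponds to the impulse-response initialization $h_{i-1,j}=0$ for $i\le j$ in \eqref{eqn:dynamics} — and expand the recurrence to obtain
\begin{equation*}
    s_i = \sum_{j=1}^i \Big( \prod_{t=j+1}^i \textrm{diag}(\alpha_t) \Big) k_j v_j,
\end{equation*}
with the empty product at $j=i$ understood as the identity. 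Applying the readout $y_i = q_i^\top s_i$ (with the transposition convention of the footnote, so that $k_j v_j$ denotes the outer product $k_j v_j^\top$ and the query contraction is taken after the summation) yields
\begin{equation*}
    y_i = \sum_{j=1}^i \Big( q_i^\top \big(\textstyle\prod_{t=j+1}^i \textrm{diag}(\alpha_t)\big) k_j \Big) v_j.
\end{equation*}
This is already of the shape \eqref{eqn:lin-comb} with $\eta_i = 1$ and $\alpha_{i,j} = q_i^\top \big(\prod_{t=j+1}^i \textrm{diag}(\alpha_t)\big) k_j$; the $\tfrac{1}{\sqrt n}$ factor appearing in Table~\ref{tab:architecture_parameters} is just a rescaling of $k_j$ (equivalently, absorbing it into $W_K$), which leaves the structure untouched.

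Next I would check that this $\alpha_{i,j}$ coincides with \eqref{eqn:coefficients} under the claimed parameter choices: taking $A_t = \textrm{diag}(\alpha_t)$ makes $T_{h,i} = \big(\prod_{t=j+1}^i A_t\big) b_j W_K$ with $b_j = \tfrac{1}{\sqrt n}$, taking $T_q = W_Q$, and taking $\phi(\cdot)=\textrm{Id}(\cdot)$ reproduces $\alpha_{i,j} = x_i^\top W_Q^\top T_{h,i} x_j = q_i^\top \big(\prod_{t=j+1}^i \textrm{diag}(\alpha_t)\big) \tfrac{1}{\sqrt n} k_j$ exactly. Finally, because $\phi$ is the identity there is no sign restriction on $\alpha_{i,j}$, and because $\eta_i=1$ the sum $\sum_j \tilde\alpha_{i,j}$ is unconstrained; hence the normalized coefficients impose no additional constraint and $y_i$ ranges over all of $\mathbb R^{d_v}$, which is the last assertion of the lemma.

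The only real subtlety — and the step I expect to require the most care — is the transposition bookkeeping flagged in the footnote: in its native form GLA carries a matrix state in $\mathbb R^{n\times d_v}$ assembled from outer products $k_t v_t^\top$, whereas \eqref{eqn:dynamics} carries a vector state $h_{i,j}\in\mathbb R^n$ tracking a single key. Reconciling the two amounts to observing that the gate $\textrm{diag}(\alpha_t)$ acts identically on each column of the matrix state and on the vector state $h_{i,j}$, so the per-$j$ decomposition of the matrix recurrence is precisely a bank of independent copies of \eqref{eqn:state} with impulse input $u_j = k_j$; once this is made explicit the remainder is routine algebra. I would also record the edge cases — the empty product at $j=i$ and the alignment of the zero initial condition — so that the identification holds for every $i = 1,\dots,L$.
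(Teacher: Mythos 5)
Your proposal is correct and follows essentially the same route as the paper's proof: unroll the recurrence into its convolutional form, apply the query contraction, absorb the implicit $\tfrac{1}{\sqrt{n}}$ key scaling, and read off $\alpha_{i,j}$, $\eta_i = 1$, and the parameter identifications against \eqref{eqn:coefficients}. Your extra remark that the diagonal gate acts column-wise on the matrix state, so the per-$j$ decomposition is a bank of independent copies of \eqref{eqn:state}, is a slightly more explicit treatment of the transposition bookkeeping that the paper handles only in its footnote.
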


\begin{proof}
First, we need to unroll \eqref{eqn:appendix-gla} to get an elementwise convolutional representation (see \citet{sieber2024understanding,mamba2}):
\begin{equation*}
    s_i = \sum_{j=1}^i\left(\prod_{t=j+1}^i \textrm{diag}(\alpha_t)\right) k_j v_j,
\end{equation*}
before computing the output by multiplying the queries:
\begin{equation*}
    o_i = q_i^\top\sum_{j=1}^i\left(\prod_{t=j+1}^i \textrm{diag}(\alpha_t)\right) k_j v_j.
\end{equation*}
Note that \citet{yang2023gated} implicitly assumes that the keys are scaled by $\frac{1}{\sqrt{n}}$,\footnote{See codebase: \url{https://github.com/fla-org/flash-linear-attention}.} and rearranging above equation yields
\begin{equation*}
    y_i = \sum_{j=1}^iq_i^\top\left(\prod_{t=j+1}^i \textrm{diag}(\alpha_t)\right) \frac{k_j}{\sqrt{n}} v_j.
\end{equation*}
We identify the following parameters by comparing to \eqref{eqn:lin-comb}
\begin{equation*}
    \alpha_{i,j} = q_i^\top\left(\prod_{t=j+1}^i \textrm{diag}(\alpha_t)\right) \frac{k_j}{\sqrt{n}}, \quad \eta_i = 1.
\end{equation*}
Then, the parameter choices $A_i = \textrm{diag}(\alpha_i)$,\footnote{In \citet{yang2023gated} the evolution matrix is computed as the low-rank parameterization $\alpha_i = \sigma(W_\alpha^2 W_\alpha^1 x_i + b_\alpha)^{\frac{1}{\tau}}$ with $\tau = 16$.} $b_j = \frac{1}{\sqrt{n}}$, and $\phi(\cdot) = \textrm{Id}(\cdot)$ ensure the coefficients $\alpha_{i,j}$ align with \eqref{eqn:coefficients}. The resulting normalized coefficients $\tilde{\alpha}_{i,j}$ are unrestricted in sign and magnitude, placing the output in the full space $\mathbb{R}^{d_v}$.
\end{proof}

\subsection{Mamba-2}
\begin{lemma}[Coefficient Dynamics of Mamba-2]
Mamba-2 \citep{mamba2}
\begin{equation}\label{eqn:appendix-mamba}
    y_i = \sum_{j=1}^i q_i^\top \left(\prod_{t=j+1}^{i} \exp(-\Delta_t A)\right) \Delta_j k_j v_j
\end{equation}
can be written in the form of \eqref{eqn:lin-comb} by setting $A_i = \exp(-\Delta_i A)$, $b_j = \Delta_j$, $\phi(\cdot) = \textrm{Id}(\cdot)$, and $\eta_i = 1$. The resulting normalized coefficients impose no additional constraints on the output.
\end{lemma}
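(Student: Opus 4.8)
The plan is to show this is essentially a pattern-matching exercise: the Mamba-2 output formula \eqref{eqn:appendix-mamba} is already written as a linear combination of the $v_j$, so the task reduces to identifying the scalar multiplying each $v_j$ with the coefficient $\alpha_{i,j}$ from the general form \eqref{eqn:coefficients}, and then verifying the parameter substitutions $A_i = \exp(-\Delta_i A)$, $b_j = \Delta_j$, $\phi(\cdot) = \operatorname{Id}(\cdot)$, $\eta_i = 1$ reproduce exactly this scalar.

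First I would recall from \eqref{eqn:coefficients} that with evolution matrices $A_t$ the state evolves as $h_{i,j} = \big(\prod_{t=j+1}^i A_t\big) b_j k_j$, so that $\alpha_{i,j} = \phi\big(q_i^\top h_{i,j}\big) = \phi\big(q_i^\top (\prod_{t=j+1}^i A_t) b_j k_j\big)$. Substituting $A_t = \exp(-\Delta_t A)$ and $b_j = \Delta_j$ and $\phi = \operatorname{Id}$ gives
\begin{equation*}
    \alpha_{i,j} = q_i^\top \Big(\prod_{t=j+1}^{i} \exp(-\Delta_t A)\Big) \Delta_j k_j,
\end{equation*}
which is precisely the scalar coefficient multiplying $v_j$ in \eqref{eqn:appendix-mamba}. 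Next I would note that taking $\eta_i = 1$ makes the normalized coefficients $\tilde\alpha_{i,j} = \alpha_{i,j}/\eta_i = \alpha_{i,j}$ equal to these scalars, so \eqref{eqn:appendix-mamba} is literally $y_i = \sum_{j=1}^i \tilde\alpha_{i,j} v_j$, the form \eqref{eqn:lin-comb}. Finally, for the claim that no additional constraints are imposed on the output: since $\phi = \operatorname{Id}$ places no sign or magnitude restriction on $\alpha_{i,j}$ (the quadratic form $q_i^\top(\prod_t \exp(-\Delta_t A))\Delta_j k_j$ ranges over all of $\mathbb R$ as the inputs vary) and $\eta_i = 1$ imposes no normalization, the $\tilde\alpha_{i,j}$ are unrestricted, so $y_i$ can lie anywhere in $\operatorname{span}\{v_1,\dots,v_i\} \subseteq \mathbb R^{d_v}$, exactly as in the GLA and Mamba-2 rows of Table~\ref{tab:architecture_parameters}.

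The only mild subtlety — the analogue of the footnote caveat used in the GLA lemma — is the handling of the outer product versus inner product structure: in the recurrent SSM view the state $s_i$ aggregates $k_j v_j^\top$ and the output is $q_i^\top s_i$, whereas the coefficient-dynamics view \eqref{eqn:dynamics} treats each key position $j$ separately with a scalar readout. I would dispatch this exactly as in the GLA proof, by unrolling the Mamba-2 recurrence into its convolutional form $s_i = \sum_{j=1}^i (\prod_{t=j+1}^i \exp(-\Delta_t A))\,\Delta_j\, k_j v_j^\top$, multiplying by $q_i^\top$ on the left, and observing that the bilinearity lets the scalar $q_i^\top(\prod_t \exp(-\Delta_t A))\Delta_j k_j$ factor out in front of $v_j$. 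There is no real obstacle here; the main thing to be careful about is bookkeeping the index range of the product $\prod_{t=j+1}^i$ (empty product equal to $\mathbb I_n$ when $j=i$) so that the diagonal term $\alpha_{i,i} = q_i^\top \Delta_i k_i$ comes out correctly, and matching the discretization convention so that $A_i$ in the framework corresponds to the already-discretized $\exp(-\Delta_i A)$ rather than the continuous-time generator $A$.
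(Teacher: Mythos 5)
Your proposal is correct and follows essentially the same route as the paper: identify $\alpha_{i,j} = q_i^\top\bigl(\prod_{t=j+1}^{i}\exp(-\Delta_t A)\bigr)\Delta_j k_j$ and $\eta_i=1$ by comparison with \eqref{eqn:lin-comb}, verify the stated parameter substitutions against \eqref{eqn:coefficients}, and note that the unrestricted sign and magnitude of the coefficients place $y_i$ in $\mathbb{R}^{d_v}$. The only difference is that the paper takes \eqref{eqn:appendix-mamba} directly from state-space duality rather than re-deriving it by unrolling the recurrence as you offer to do; your extra bookkeeping (empty product at $j=i$, outer- versus inner-product view) is harmless and consistent with how the paper handles GLA and DeltaNet.
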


\begin{proof}
First, note that \eqref{eqn:appendix-mamba} is directly obtained from state-space duality (SSD; see \citet{sieber2024understanding,mamba2}). We identify the following parameters by comparing to \eqref{eqn:lin-comb}
\begin{equation*}
    \alpha_{i,j} = q_i^\top \left(\prod_{t=j+1}^{i} \exp(-\Delta_t A)\right) \Delta_j k_j, \quad \eta_i = 1.
\end{equation*}
Then, the parameter choices $A_i = \exp(-\Delta_i A)$, $b_j = \Delta_j$, and $\phi(\cdot) = \textrm{Id}(\cdot)$ ensure the coefficients $\alpha_{i,j}$ align with \eqref{eqn:coefficients}.

Note that the \emph{discretization step} $\Delta_j$ is computed from the input as $\Delta_j = \textrm{softplus}(W_\Delta x_i + \beta)$. The resulting normalized coefficients $\tilde{\alpha}_{i,j}$ are unrestricted in sign and magnitude, placing the output in the full space $\mathbb{R}^{d_v}$.
\end{proof}

\subsection{DeltaNet}
\begin{lemma}[Coefficient Dynamics of DeltaNet]
DeltaNet \citep[eq. 24]{schlag2021linear}\footnote{Here, we assume the vector product $k_t v_t$ is well-defined. This is explicit in \citet{schlag2021linear}, but only needed if recurrences are computed, which is not the case in this paper. Therefore, assume here that $k_tv_t$ computes $k_tv_t^\top$ and that the product $q_i^\top s_i$ is transposed after summation. Then, this is the equivalent setting to \citep[eq. 24]{schlag2021linear} but with all quantities transposed.}
\begin{equation}\label{eqn:appendix-deltanet}
    s_t = s_{t-1} + \beta_t k_t(v_t - k_t^\top s_{t-1}) = (\mathbb{I} - \beta_t k_t k_t^\top)s_{t-1} + \beta_t k_t v_t,
\end{equation}
can be written in the form of \eqref{eqn:lin-comb} by setting $A_i = \mathbb{I} - \beta_i k_i k_i^\top$, $b_j = \frac{\beta_j}{\sqrt{n}}$, $\phi(\cdot) = \textrm{Id}(\cdot)$, and $\eta_i = 1$. The resulting normalized coefficients impose no additional constraints on the output.
\end{lemma}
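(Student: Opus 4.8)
The plan is to follow the same template used above for GLA and Mamba-2: unroll the DeltaNet state recurrence into a convolutional, attention-like form, and then read off the coefficient dynamics by matching the result against~\eqref{eqn:lin-comb} and~\eqref{eqn:coefficients}.

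First I would start from the update $s_t = (\mathbb{I} - \beta_t k_t k_t^\top)\, s_{t-1} + \beta_t k_t v_t$ and unroll it from the zero initial condition $s_0 = 0$. Since each step applies the linear map $A_t = \mathbb{I} - \beta_t k_t k_t^\top$ to the running state and adds the fresh contribution $\beta_t k_t v_t$, the state at time $i$ is the ordered sum $s_i = \sum_{j=1}^i \big( \prod_{t=j+1}^i (\mathbb{I} - \beta_t k_t k_t^\top) \big)\, \beta_j k_j v_j$, where the matrix product is taken with decreasing index (highest index on the left), matching the convention of~\eqref{eqn:coefficients}. The one point that requires care is exactly this ordering, since the $A_t$ do not commute; but this is purely bookkeeping, not a genuine obstacle.

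Next I would form the output $y_i = q_i^\top s_i$ and insert the $\nicefrac{1}{\sqrt{n}}$ key rescaling that DeltaNet applies (as in the other linear-attention derivations in this appendix), obtaining $y_i = \sum_{j=1}^i q_i^\top \big( \prod_{t=j+1}^i (\mathbb{I} - \beta_t k_t k_t^\top) \big)\, \tfrac{\beta_j k_j}{\sqrt{n}}\, v_j$. Matching term by term with~\eqref{eqn:lin-comb} then identifies $\alpha_{i,j} = q_i^\top \big( \prod_{t=j+1}^i (\mathbb{I} - \beta_t k_t k_t^\top) \big)\, \tfrac{\beta_j}{\sqrt{n}}\, k_j$ and $\eta_i = 1$, and comparing with~\eqref{eqn:coefficients} shows these are precisely the coefficient dynamics generated by $A_i = \mathbb{I} - \beta_i k_i k_i^\top$, $b_j = \nicefrac{\beta_j}{\sqrt{n}}$, and $\phi(\cdot) = \textrm{Id}(\cdot)$. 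I would also note that $A_i$ is a Householder-type matrix (an exact reflection when $\beta_i = 2/\|k_i\|^2$), so it falls under case~c) of Lemma~\ref{lem:A_geometric}.

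Finally, since $\eta_i = 1$ and $\phi$ is the identity, no sign or normalization constraint is imposed on the normalized coefficients $\tilde{\alpha}_{i,j} = \alpha_{i,j}$; hence $y_i$ can be an arbitrary linear combination of the value vectors, i.e.\ it lies in the full space $\mathbb{R}^{d_v}$ (the linear-combination row of Table~\ref{tab:linear_combinations}). The only mild subtlety throughout --- as flagged in the footnote to the statement --- is the transposition convention making $k_t v_t$ and $q_i^\top s_i$ well-defined; pinning down that notational detail is essentially all that separates the unrolled recurrence from the stated form, so I expect no serious difficulty.
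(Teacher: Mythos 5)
Your proof is correct and follows essentially the same route as the paper: unroll the recurrence into the convolutional form, insert the $\nicefrac{1}{\sqrt{n}}$ key scaling, and read off $\alpha_{i,j}$, $\eta_i$, and the parameters $A_i$, $b_j$, $\phi$ by matching against \eqref{eqn:lin-comb} and \eqref{eqn:coefficients}. Your extra remarks on the non-commutative product ordering and the Householder structure of $A_i$ align with the paper's own discussion (which additionally notes that the keys must be normalized for $A_i$ to be a proper Householder matrix with eigenvalues in the unit circle).
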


\begin{proof}
First, we need to unroll \eqref{eqn:appendix-deltanet} to get an elementwise convolutional representation (see \citet{sieber2024understanding,mamba2}):
\begin{equation*}
    s_i = \sum_{j=1}^i\left(\prod_{t=j+1}^i \mathbb{I} - \beta_t k_t k_t^\top\right) \beta_j k_j v_j,
\end{equation*}
before computing the output by multiplying the queries:
\begin{equation*}
    o_i = q_i^\top\sum_{j=1}^i\left(\prod_{t=j+1}^i \mathbb{I} - \beta_t k_t k_t^\top\right) \beta_j k_j v_j.
\end{equation*}
Note that \citet[Section 3.1]{schlag2021linear} assumes that the keys are scaled by $\frac{1}{\sqrt{n}}$, and rearranging above equation yields
\begin{equation*}
    y_i = \sum_{j=1}^iq_i^\top\left(\prod_{t=j+1}^i \mathbb{I} - \beta_t k_t k_t^\top\right) \frac{\beta_j}{\sqrt{n}} k_j v_j.
\end{equation*}
We identify the following parameters by comparing to \eqref{eqn:lin-comb}
\begin{equation*}
    \alpha_{i,j} = q_i^\top\left(\prod_{t=j+1}^i \mathbb{I} - \beta_t k_t k_t^\top\right) \frac{\beta_j}{\sqrt{n}} k_j, \quad \eta_i = 1.
\end{equation*}
Then, the parameter choices $A_i =\mathbb{I} - \beta_i k_i k_i^\top$, $b_j = \frac{\beta_j}{\sqrt{n}}$,\footnote{The \emph{writing strength} is computed as $\beta_i = \sigma(W_\beta)$, with $\sigma(\cdot)$ the sigmoid function. It can be multiplied by $2$ to allow negative eigenvalues of $A_i$.} and $\phi(\cdot) = \textrm{Id}(\cdot)$ ensure the coefficients $\alpha_{i,j}$ align with \eqref{eqn:coefficients}.

As discussed in \citet[Section 4.2]{schlag2021linear}, the keys $k_i$ need to be normalized (i.e. $\| k_i \| = 1$), otherwise the evolution matrix $A_i$ is not a proper Householder matrix and its eigenvalues can lie outside the unit circle leading to instability. This normalization can be seen as a preprocessing step of the keys. Additionally, DeltaNet allows preprocessing of keys and queries with positive functions (similar to linear attention, i.e., $\psi(\cdot) \geq 0$), which would constrain the normalized coefficients to $\tilde{\alpha}_{i,j} \geq 0$. Otherwise, there are no restrictions on the resulting normalized coefficients $\tilde{\alpha}_{i,j}$, placing the output in the full space $\mathbb{R}^{d_v}$.
\end{proof}

\subsection{Gated DeltaNet}
\begin{lemma}[Coefficient Dynamics of Gated DeltaNet]
Gated DeltaNet \citep[eq. 10]{yang2025gated}\footnote{Here, we assume the vector product $k_t v_t$ is well-defined. This is explicit in \citet{yang2025gated}, but only needed if recurrences are computed, which is not the case in this paper. Therefore, assume here that $k_tv_t$ computes $k_tv_t^\top$ and that the product $q_i^\top s_i$ is transposed after summation. Then, this is the equivalent setting to \citep[eq. 10]{yang2025gated} but with all quantities transposed.}
\begin{equation}\label{eqn:appendix-gated-deltanet}
    s_t = \alpha_t(\mathbb{I} - \beta_t k_t k_t^\top)s_{t-1} + \beta_t k_t v_t,
\end{equation}
can be written in the form of \eqref{eqn:lin-comb} by setting $A_i = \alpha_t\big(\mathbb{I} - \beta_i k_i k_i^\top\big)$, $b_j = \frac{\beta_j}{\sqrt{n}}$, $\phi(\cdot) = \textrm{Id}(\cdot)$, and $\eta_i = 1$. The resulting normalized coefficients impose no additional constraints on the output.
\end{lemma}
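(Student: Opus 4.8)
The plan is to reuse, essentially verbatim, the three-step recipe already applied to DeltaNet and GLA: (i) unroll the linear recurrence \eqref{eqn:appendix-gated-deltanet} into an explicit convolutional sum over past key/value pairs, (ii) left-multiply by the query to collapse each summand onto a scalar coefficient, and (iii) match term-by-term against \eqref{eqn:lin-comb} and the convolutional form \eqref{eqn:coefficients} to read off the four parameter families.

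First I would unroll the state. Starting from the zero initial condition and iterating $s_t = \alpha_t(\mathbb{I}-\beta_t k_t k_t^\top)s_{t-1} + \beta_t k_t v_t$, a one-line induction gives
\[
s_i = \sum_{j=1}^{i}\Big(\prod_{t=j+1}^{i}\alpha_t(\mathbb{I}-\beta_t k_t k_t^\top)\Big)\beta_j k_j v_j,
\]
with the usual convention that the empty product ($j=i$) is $\mathbb{I}$. This is the Gated DeltaNet analogue of the unrolled DeltaNet state, each Householder factor now premultiplied by its gate $\alpha_t$.

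Next I would form $o_i = q_i^\top s_i$ and insert the implicit $1/\sqrt{n}$ key scaling used in the reference implementation (exactly as in the DeltaNet and GLA derivations), obtaining
\[
y_i = \sum_{j=1}^{i} q_i^\top\Big(\prod_{t=j+1}^{i}\alpha_t(\mathbb{I}-\beta_t k_t k_t^\top)\Big)\frac{\beta_j}{\sqrt{n}}\,k_j\, v_j.
\]
Comparing with \eqref{eqn:lin-comb} identifies $\alpha_{i,j} = q_i^\top\big(\prod_{t=j+1}^{i}\alpha_t(\mathbb{I}-\beta_t k_t k_t^\top)\big)\frac{\beta_j}{\sqrt{n}}k_j$ and $\eta_i=1$, while matching the convolutional form \eqref{eqn:coefficients} forces $A_t = \alpha_t(\mathbb{I}-\beta_t k_t k_t^\top)$, $b_j = \beta_j/\sqrt{n}$, and $\phi(\cdot)=\textrm{Id}(\cdot)$. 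Since $\phi$ is the identity and $\eta_i=1$, the normalized coefficients $\tilde{\alpha}_{i,j}$ carry no sign or magnitude constraint, so the output ranges over all of $\mathbb{R}^{d_v}$ — the ``linear combination'' row of Table~\ref{tab:linear_combinations}.

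There is no real obstacle here beyond bookkeeping; the one point I would state carefully rather than gloss over is the stability caveat inherited from DeltaNet. For $\mathbb{I}-\beta_t k_t k_t^\top$ to be a genuine (orthogonal) Householder factor one needs $\|k_t\|=1$ and $\beta_t\in[0,2]$, and the additional gate $\alpha_t\in[0,1]$ then keeps the spectral radius of $A_t$ inside the unit disk; these are preprocessing/parameterization assumptions on the keys and gates, not consequences of the algebra, and they are precisely what legitimizes identifying $A_t$ with the scaled-Householder structure of Lemma~\ref{lem:A_geometric}. The only thing to be careful about in the calculation itself is the non-commutativity of the $A_t$, so the order of the matrix product and the empty-product convention must be kept consistent throughout.
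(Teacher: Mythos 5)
Your proposal is correct and follows the same route as the paper's proof: unroll the gated recurrence into the convolutional sum, insert the implicit $1/\sqrt{n}$ key scaling, and match the resulting coefficients against \eqref{eqn:lin-comb} and \eqref{eqn:coefficients}; the paper also makes the same remark about key normalization being required for the Householder structure. No gaps.
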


\begin{proof}
First, we need to unroll \eqref{eqn:appendix-gated-deltanet} to get an elementwise convolutional representation (see \citet{sieber2024understanding,mamba2}):
\begin{equation*}
    s_i = \sum_{j=1}^i\left(\prod_{t=j+1}^i \alpha_t(\mathbb{I} - \beta_t k_t k_t^\top)\right) \beta_j k_j v_j,
\end{equation*}
before computing the output by multiplying the queries:
\begin{equation*}
    o_i = q_i^\top\sum_{j=1}^i\left(\prod_{t=j+1}^i \alpha_t(\mathbb{I} - \beta_t k_t k_t^\top)\right) \beta_j k_j v_j.
\end{equation*}
Note that \citet{yang2025gated} implicitly assumes that the keys are scaled by $\frac{1}{\sqrt{n}}$,\footnote{See codebase: \url{https://github.com/fla-org/flash-linear-attention}.} and rearranging above equation yields
\begin{equation*}
    y_i = \sum_{j=1}^iq_i^\top\left(\prod_{t=j+1}^i \alpha_t(\mathbb{I} - \beta_t k_t k_t^\top)\right) \frac{\beta_j}{\sqrt{n}} k_j v_j.
\end{equation*}
We identify the following parameters by comparing to \eqref{eqn:lin-comb}
\begin{equation*}
    \alpha_{i,j} = q_i^\top\left(\prod_{t=j+1}^i \alpha_t(\mathbb{I} - \beta_t k_t k_t^\top)\right) \frac{\beta_j}{\sqrt{n}} k_j, \quad \eta_i = 1.
\end{equation*}
Then, the parameter choices $A_i =\alpha_i(\mathbb{I} - \beta_i k_i k_i^\top)$,\footnote{$\alpha_i$ is computed equivalently to $\exp(-\Delta_i A)$ in Mamba-2.} $b_j = \frac{\beta_j}{\sqrt{n}}$,\footnote{The \emph{writing strength} is computed as $\beta_i = \sigma(W_\beta)$, with $\sigma(\cdot)$ the sigmoid function. It can be multiplied by $2$ to allow negative eigenvalues of $A_i$.} and $\phi(\cdot) = \textrm{Id}(\cdot)$ ensure the coefficients $\alpha_{i,j}$ align with \eqref{eqn:coefficients}.

As discussed in DeltaNet, the keys $k_i$ need to be normalized (i.e. $\| k_i \| = 1$), otherwise the evolution matrix $A_i$ is not a proper Householder matrix and its eigenvalues can lie outside the unit circle leading to instability. This normalization can be seen as a preprocessing step of the keys. The resulting normalized coefficients $\tilde{\alpha}_{i,j}$ are unrestricted in sign and magnitude, placing the output in the full space $\mathbb{R}^{d_v}$.
\end{proof}

\subsection{mLSTM}
\begin{lemma}[Coefficient Dynamics of mLSTM]
mLSTM \citep[eq. 19 - 27]{xlstm}\footnote{Here, we assume the vector product $k_t v_t$ is well-defined. This is explicit in \citet{xlstm}, but only needed if recurrences are computed, which is not the case in this paper. Therefore, assume here that $k_tv_t$ computes $k_tv_t^\top$ and that the product $q_i^\top s_i$ is transposed after summation. Then, this is the equivalent setting to \citep[eq. 19]{xlstm} but with all quantities transposed.}
\begin{subequations}
\begin{align}\label{eqn:appendix-mLSTM}
    s_t &= \exp(f_t)s_{t-1} + \exp(i_t) k_t v_t, \\
    y_t &= \frac{o_t}{\eta_t} q_t^\top s_t
\end{align}
\end{subequations}
can be written in the form pf \eqref{eqn:lin-comb} by setting $A_i = \exp(f_t)$, $b_j = \frac{\exp(i_t)}{\sqrt{n}}$, $\phi(\cdot) = \textrm{Id}(\cdot)$, and $\eta_i = \frac{\eta_i}{o_i}$. The resulting normalized coefficients impose no additional constraints on the output.
\end{lemma}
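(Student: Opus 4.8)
The plan is to reuse, essentially verbatim, the three-step recipe applied to GLA, Mamba-2, and (Gated) DeltaNet above: unroll the linear recurrence into convolutional form, express the output as a query-weighted linear combination of value vectors, and match the result against \eqref{eqn:lin-comb} and \eqref{eqn:coefficients}. The one structural difference is that here the evolution is a \emph{scalar} gate $\exp(f_t)$ rather than a diagonal or Householder matrix, which only simplifies the bookkeeping.

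First I would unroll $s_t = \exp(f_t) s_{t-1} + \exp(i_t) k_t v_t^\top$ from the zero initial state, obtaining $s_i = \sum_{j=1}^i \big(\prod_{t=j+1}^i \exp(f_t)\big)\exp(i_j)\, k_j v_j^\top$. Left-multiplying by $q_i^\top$ and transposing gives $y_i = \frac{o_i}{\eta_i}\sum_{j=1}^i q_i^\top\big(\prod_{t=j+1}^i \exp(f_t)\big)\exp(i_j)\, k_j\, v_j$, and inserting the implicit $1/\sqrt n$ key scaling used in the xLSTM implementation \citep{xlstm} yields $y_i = \sum_{j=1}^i \frac{o_i}{\eta_i}\, q_i^\top\big(\prod_{t=j+1}^i \exp(f_t)\big)\frac{\exp(i_j)}{\sqrt n}\, k_j\, v_j$. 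Comparing with $y_i = \sum_{j=1}^i \alpha_{i,j}/\eta_i\, v_j$ then reads off $A_t = \exp(f_t)\mathbb{I}_n$, $b_j = \exp(i_j)/\sqrt n$, and $\phi(\cdot) = \operatorname{Id}(\cdot)$: with these choices the transformed key is $h_{i,j} = \big(\prod_{t=j+1}^i \exp(f_t)\big)\frac{\exp(i_j)}{\sqrt n} k_j$ and $\alpha_{i,j} = q_i^\top h_{i,j}$ is exactly the form \eqref{eqn:coefficients}, while the leftover scalar $o_i/\eta_i$ is precisely what the framework normalization $\eta_i \mapsto \eta_i/o_i$ absorbs.

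For the second claim, since $\phi$ is the identity, $q_i^\top h_{i,j}$ is an unrestricted real number and dividing it by the positive quantity $\eta_i/o_i$ imposes no sign or magnitude constraint; in particular the output gate $o_i$ prevents $\sum_j \tilde\alpha_{i,j}$ from equalling one, so not even an affine constraint holds, and the output ranges over all of $\mathbb{R}^{d_v}$, exactly as in the GLA/Mamba-2/DeltaNet entries. The only genuinely delicate point -- and the one I would be most careful about -- is the double use of the symbol $\eta_i$ (the mLSTM stabilizing denominator versus, after dividing out $o_i$, our framework's normalization factor) together with attributing the $1/\sqrt n$ factor to $b_j$ rather than to $\phi$ or the query. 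It is worth remarking in passing that the mLSTM denominator is itself assembled from a companion state run under the same scalar dynamics $\exp(f_t)$ -- the concrete mechanism by which mLSTM meets Principle~\ref{pple:stability} even when $\exp(f_t) > 1$ -- and that the $\max$-with-stabilizer wrapper of the original xLSTM definition can simply be folded into $\eta_i$ without changing the linear-combination structure; neither refinement is needed for the lemma.
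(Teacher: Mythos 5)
Your proposal follows the paper's proof essentially step for step: unroll the recurrence into convolutional form, multiply by the query, insert the implicit $\nicefrac{1}{\sqrt{n}}$ key scaling, read off $A_t$, $b_j$, $\phi$, $\eta_i$ by comparison with \eqref{eqn:lin-comb} and \eqref{eqn:coefficients}, and note that the resulting coefficients are unrestricted so the output spans $\mathbb{R}^{d_v}$. Your closing remarks on the stabilizer state and the $\max$-wrapper match the paper's own concluding note about regularizing $\exp(f_i)$, $\exp(i_i)$ via the $m_i$ recurrence, so the argument is correct and not a different route.
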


\begin{proof}
First, we need to unroll \eqref{eqn:appendix-mLSTM} to get an elementwise convolutional representation (see \citet{sieber2024understanding,mamba2}):
\begin{equation*}
    s_i = \sum_{j=1}^i\left(\prod_{t=j+1}^i \exp(f_t)\right) \exp(i_j) k_j v_j,
\end{equation*}
before computing the output as
\begin{equation*}
    y_i = \frac{o_i}{\eta_i} q_i^\top\sum_{j=1}^i\left(\prod_{t=j+1}^i \exp(f_t)\right) \exp(i_j) k_j v_j.
\end{equation*}
Note that \citet[eq. 23]{xlstm} assumes that the keys are scaled by $\frac{1}{\sqrt{n}}$, and rearranging above equation yields
\begin{equation*}
    y_i = \sum_{j=1}^i \frac{o_i}{\eta_i} q_i^\top\left(\prod_{t=j+1}^i \exp(f_t)\right) \frac{\exp(i_j)}{\sqrt{n}} k_j v_j.
\end{equation*}
We identify the following parameters by comparing to \eqref{eqn:lin-comb}
\begin{equation*}
    \alpha_{i,j} = q_i^\top\left(\prod_{t=j+1}^i \exp(f_t)\right) \frac{\exp(i_j)}{\sqrt{n}} k_j, \quad \eta_i = \frac{\eta_i}{o_i}.
\end{equation*}
Then, the parameter choices $A_i =\exp(f_i)$, $b_j = \frac{\exp(i_j)}{\sqrt{n}}$, and $\phi(\cdot) = \textrm{Id}(\cdot)$ ensure the coefficients $\alpha_{i,j}$ align with \eqref{eqn:coefficients}.

Note that the forget and input gates $f_i$, $i_i$ are computed by linear input projections and that $\exp(f_i)$, $\exp(i_i)$ need to be regularized for numerical stability, since both can grow exponentially fast. In mLSTM this is achieved by multiplying $\exp(m_{i-1} - m_i)$ and $\exp(m_i)$ to the forget and input gates, respectively, where $m_i$ is computed via another recurrence (for all details see \citet{xlstm}). The resulting normalized coefficients $\tilde{\alpha}_{i,j}$ are unrestricted in sign and magnitude, placing the output in the full space $\mathbb{R}^{d_v}$.
\end{proof}

\section{Proofs}\label{app:proofs}
This appendix is split into two parts: Section~\ref{app:lemma_proofs} provides the proofs of all lemmas in the main text and Section~\ref{app:cor_proofs} provides the proofs of all corollaries in the main text. For convenience, the lemmas and corollaries are copied before the proofs.

\subsection{Lemmas}\label{app:lemma_proofs}

\setcounter{lemma}{0}

\subsubsection{Lemma \ref{lem:implementation}}

\begin{lemma}
    A recurrent formulation of \eqref{eqn:dynamics} with finite memory (state) exists if the readout map $\phi(\cdot): \mathbb{R} \to \mathbb{R}$ is a polynomial of finite degree $P$. The requisite memory (state) then expands to $\mathbb{R}^{n^P \times d_v}$.
\end{lemma}

\begin{proof}
Let the readout map be a general polynomial of degree $P$
\begin{equation*}
    \phi(x) = \sum_{p=0}^P c_p x^p.
\end{equation*}
Given the dot product $x = q_i^\top h_{i,j}$, we can rewrite the polynomial evaluation using the property that the $p$-th power of an inner product is equivalent to the inner product of the $p$-th tensor powers
\begin{equation*}
    (q_i^\top h_{i,j})^p = (q_i^{\otimes p})^\top h_{i,j}^{\otimes p}.
\end{equation*}
Substituting this expansion into the linear combination \eqref{eqn:lin-comb} yields
\begin{equation*}
    y_i = \sum_{j=1}^i \frac{\phi(q_i^\top h_{i,j})}{\eta_i} v_j = \frac{1}{\eta_i} \sum_{j=1}^i \left( \sum_{p=0}^P c_p (q_i^{\otimes p})^\top h_{i,j}^{\otimes p} \right) v_j.
\end{equation*}
Because the tensor power of the query $(q_i^{\otimes p})^\top$ does not depend on the summation index $j$, we can pull it outside the inner sum. Rearranging the terms to isolate the memory state gives
\begin{equation*}
    y_i = \frac{1}{\eta_i} \sum_{p=0}^P c_p (q_i^{\otimes p})^\top \left( \sum_{j=1}^i h_{i,j}^{\otimes p} v_j^\top \right).
\end{equation*}
We can now define a separate recurrent memory state for each polynomial degree $p$, i.e.,
\begin{equation*}
    S_i^{(p)} = \sum_{j=1}^i h_{i,j}^{\otimes p} v_j^\top \in \mathbb{R}^{n^p \times d_v}.
\end{equation*}
To prove this is a valid recurrent formulation, we must show that $S_i^{(p)}$ can be updated using only $S_{i-1}^{(p)}$ and the inputs at time $i$. Using the original coefficient dynamics where $h_{i,j} = A_i h_{i-1,j}$ for $i > j$, we express the tensor power as
\begin{equation*}
    h_{i,j}^{\otimes p} = (A_i h_{i-1,j})^{\otimes p} = A_i^{\otimes p} h_{i-1,j}^{\otimes p}.
\end{equation*}
Applying this to our state definition yields the recurrent update rule
\begin{equation*}
    S_i^{(p)} = A_i^{\otimes p} S_{i-1}^{(p)} + b_i^p k_i^{\otimes p} v_i^\top,
\end{equation*}
and the final output at step $i$ is computed by projecting the augmented queries against these states
\begin{equation*}
    y_i = \frac{1}{\eta_i} \sum_{p=0}^P c_p (q_i^{\otimes p})^\top S_i^{(p)}.
\end{equation*}
Thus, a finite memory recurrent formulation exists for any polynomial map, though the state dimension grows exponentially with the polynomial degree $P$.
\end{proof}
\begin{remark}
    The polynomial definition of the readout map $\phi$ also covers the case where $\phi(\cdot) = \exp(\cdot)$ -- the softmax attention readout map -- is approximated by a Taylor series.
\end{remark}

\subsubsection{Lemma \ref{lem:zero-level-set}}

\begin{lemma} 
Let $\phi: \mathbb{R} \to \mathbb{R}$ be the readout map with connected zero-level set $\mathcal{Z} = \{z\!\in\!\mathbb R \mid \phi(z) = 0\}$, and let $z \!=\! q_i^\top h_{i,j}$, with $T_q$, $T_{h,i}$ defined in \eqref{eqn:coefficients}. Let $\mathcal T$ be the set of linear transformations that map to the zero-level set, i.e., $\mathcal{T} = \{(T_q, T_{h,i}) \mid q_i^\top h_{i,j} \!\in\! \mathcal{Z}\ s.t.\ q_i \!=\! T_qx_i$, $h_{i,j} \!=\! T_{h,i} x_j\}$ for any input pairs $x_i, x_j \!\in\! \mathbb{R}^d$. The measure $\abs{\mathcal{T}}$ is then proportional to the zero-level set measure $\abs{\mathcal{Z}}$, i.e., $\abs{\mathcal{T}} = c_{\phi} \abs{\mathcal{Z}}$, where $c_{\phi} \!>\! 0$ depends on~$\phi(\cdot)$ and represents the mean geometric density of these transformations.
\end{lemma}

\begin{proof}
Given that the set $\mathcal{Z}$ is connected on $\mathbb{R}$, the set is defined by the interval $\mathcal{Z} = [\underline{z}, \overline{z}] \subset \mathbb{R}$. Then, since $T_q, T_{h,i} \in \mathbb{R}^{n \times d}$, we consider the Euclidean space $E^{2nd}$ of pairs $(T_q,T_{h,i})$, equipped with the Lebesgue measure $\lambda(E)$. We define the map
\begin{equation*}
F: E \to \mathbb{R}, \qquad F(T_q,T_{h,i}) = (T_q x_i)^\top (T_{h,i} x_j),
\end{equation*}
which is smooth and its derivative $\nabla F(T_q,T_{h,i})$ is non-zero for $x_i \neq 0$ and $x_j \neq 0$; we treat the trivial case of either $x_i = 0$ or $x_j = 0$ at the end of the proof.

Then, for each $z \in [\underline{z}, \overline{z}]$, we construct the subset $\mathcal{T}_z$ as a fiber\footnote{The fiber $f^{-1}(y)$ of a function $f$ is the preimage of the singleton $y$.} of $F$, i.e.,
\begin{equation*}
\mathcal T_z := F^{-1}(\{z\}) = \{(T_q,T_{h,i}) \!\in\! E \mid (T_q x_i)^\top \!(T_{h,i} x_j) \!=\! z\}.
\end{equation*}
Therefore, the set of all linear transformations that achieve a value in $\mathcal{Z}$ is
\begin{equation*}
    \mathcal{T} = F^{-1}(\mathcal{Z}) = \bigcup\limits_{z \in \mathcal{Z}} \mathcal{T}_z.
\end{equation*}
To prove the lemma, we need to compute the Lebesgue measure of $\mathcal{T}$\footnote{Intuitively, the volume of the set $\mathcal{T}$.}, i.e., $\abs{\mathcal{T}} = \int_E \mathbf{1}_{F^{-1}(\mathcal{Z})} \lambda(t)$, where $t = (T_q, T_{h,i})$ and $\mathbf{1}_{F^{-1}(\mathcal{Z})}$ is the indicator function selecting all $t \in \mathcal{T}$. Since $x_i, x_j \neq 0$, $F$ is smooth and $\nabla F \neq 0$ on $E$, we can apply the coarea formula \citep{federer1959curvature}, i.e.,
\begin{equation*}
    \int_{E} \!\mathbf{1}_{F^{-1}(\mathcal Z)}(t)\lambda(t)
\!=\! \int_{\mathcal Z} \!\!\left( \!\int_{F^{-1}(z)} \!\frac{1}{\|\nabla F(t)\|} dH^{2 n d - 1}(t)\! \right) \!dz
\end{equation*}
where $H^{2 n d - 1}$ is the $(2nd-1)$-dimensional Hausdorff measure and $\|\nabla F(t)\|$ is the Euclidean norm of the gradient of $F$ at $t$. We then define
\begin{equation*}
g(z) \coloneqq \int_{F^{-1}(z)} \frac{1}{|\nabla F(t)|}\, d\mathcal H^{2nd-1}(t),
\end{equation*}
which denotes the size of each subset (fiber) $\mathcal{T}_z$. Therefore, the measure of $\mathcal{T}$ is
\begin{equation*}
    \abs{\mathcal{T}} = \int_{\mathcal Z} g(z) dz,
\end{equation*}
and by choosing the constant $c_{\phi}$ as
\begin{equation*}
    c_{\phi} = \frac{1}{\abs{\mathcal{Z}}}\int_{\mathcal Z} g(z) dz,
\end{equation*}
gives the desired relation $\abs{\mathcal{T}} = c_{\phi} \abs{\mathcal{Z}}$.

In case $x_i, x_j = 0$, the dot product $q_i^\top h_{i,j}$ is trivially zero, thus $F(t) = 0,\, \forall t = (T_q, T_{h,i}) \in E$. In this case, $\abs{\mathcal{T}} = 0$, if $0 \notin \mathcal{Z}$, i.e., empty because $z=0$ is not contained in the zero-level set, or $\abs{\mathcal{T}} = \abs{E}$, i.e., the measure of the Euclidean space $E$.
\end{proof}

\begin{remark}
    To intuitively understand Lemma~\ref{lem:zero-level-set} and its proof, consider the simplest case: $q_i = T_q x_i = ax_i \in \mathbb{R}$, $h_{i,j} = T_{h,i} x_j = bx_j \in \mathbb{R}$, thus all variables are scalars. Additionally, consider a readout function $\phi(\cdot)$ with zero-level set $\mathcal{Z} = [-1, 0]$. In this setting, we want to find all pairs $t = (a,b)$ such that $ab(x_ix_j) = z$ and we define $X = x_ix_j \neq 0$. Then, all $t$ that achieve $z$ in the zero-level set, need to lie on the hyperbola
    \begin{equation*}
        a\cdot b = \frac{z}{X}.
    \end{equation*}
    This hyperbola exactly describes the set $\mathcal{T}_z$ in the proof and there are infinitely many combinations $(a,b)$ that lie in this set. To get to $\mathcal{T}$, we need to integrate over the zero-level set $\mathcal{Z}$ to account for all possible $z$. These sets (hyperbolas) are also visualized in the figure below.
\end{remark}
\begin{figure}[h!]
\centering
\begin{tikzpicture}
\draw[->, thick] (-1.5,0) -- (1.5,0) node[above left] {$a$};
\draw[->, thick] (0,-1.5) -- (0,1.5) node[below right] {$b$};

\draw[rotate=-135,domain=-0.8:0.8,smooth,variable=\t] plot ({\t}, {sqrt(1 + \t*\t * 1.5)} );
\draw[rotate=45,domain=-0.8:0.8,smooth,variable=\t] plot ({\t}, {sqrt(1 + \t*\t * 1.5)} );

\draw[scale=0.5, rotate=-135,domain=-2:2,smooth,variable=\t] plot ({\t}, {sqrt(1.2 + \t*\t * 1.)} );
\draw[scale=0.5, rotate=45,domain=-2:2,smooth,variable=\t] plot ({\t}, {sqrt(1.2 + \t*\t * 1.)} );

\node[align=center] at (-1.3,-0.2) {\scriptsize $z \!=\! -0.5$};
\node[align=center] at (1,-1) {\scriptsize $z \!=\! -1$};
\end{tikzpicture}
\end{figure}

\subsubsection{Lemma \ref{lem:positional_info}}

\begin{lemma}
    The coefficients $\alpha_{i,j}$ in \eqref{eqn:coefficients} have positional information if and only if $A_i \neq \mathbb{I}_n, \forall i$.
\end{lemma}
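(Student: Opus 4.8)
The plan is to read everything off the closed form \eqref{eqn:coefficients}: for $j\le i$ one has $\alpha_{i,j}=\phi\!\big(q_i^\top h_{i,j}\big)$ with $h_{i,j}=\big(\prod_{t=j+1}^{i}A_t\big)b_j k_j$, $q_i=W_Q x_i$, $k_j=W_K x_j$, and --- as in Table~\ref{tab:architecture_parameters} --- $b_j$ and $k_j$ are functions of $x_j$ alone. Hence if two positions carry identical content, $x_j=x_{\bar{j}}=\bar{x}$, then $b_j=b_{\bar{j}}=:\bar{b}$ and $k_j=k_{\bar{j}}=:\bar{k}=W_K\bar{x}$, and the only source of discrepancy between $\alpha_{i,j}$ and $\alpha_{i,\bar{j}}$ is the product of evolution matrices accumulated since the impulse. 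The crux is that two equal keys injected at positions $j<\bar{j}$ yield the same transformed state at time $i$ iff $\big(\prod_{t=j+1}^{\bar{j}}A_t\big)\bar{k}=\bar{k}$ (modulo the nonzero scalar $\bar{b}$ and the invertible tail product $\prod_{t=\bar{j}+1}^{i}A_t$), after which applying $\phi$ gives equal coefficients.

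\textbf{If $A_t=\mathbb{I}_n$ for all $t$: no positional information.} Every accumulated product equals $\mathbb{I}_n$, so $h_{i,j}=b_j k_j$ and $\alpha_{i,j}=\phi\!\big(b_j\,x_i^\top W_Q^\top W_K x_j\big)$ depends on the contents $x_i,x_j$ but not on the indices $i,j$; identical content therefore forces $\alpha_{i,j}=\alpha_{i,\bar{j}}$.

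\textbf{If $A_t\neq\mathbb{I}_n$: positional information.} It suffices to display one input sequence and one identical-content pair with distinct coefficients. Fix a readout index $i\ge 2$, pick $t_0\in\{2,\dots,i\}$ with $A_{t_0}\neq\mathbb{I}_n$, compare positions $j=t_0-1$ and $\bar{j}=t_0$, and set $x_{t_0-1}=x_{t_0}=\bar{x}$. Then $h_{i,t_0-1}-h_{i,t_0}=P\,\bar{b}\,(A_{t_0}\bar{k}-\bar{k})$ with $P:=\prod_{t=t_0+1}^{i}A_t$. Since $A_{t_0}\neq\mathbb{I}_n$ and $W_K$ has full row rank, a generic $\bar{x}$ makes $A_{t_0}\bar{k}\neq\bar{k}$ (an open condition on $\bar{x}$, even when $A_{t_0}$ is input-dependent); since every evolution matrix considered here is invertible, $P$ is invertible and $P\bar{b}(A_{t_0}\bar{k}-\bar{k})\neq 0$; and choosing $i>t_0$ keeps $q_i=W_Q x_i$ free, so full rank of $W_Q$ lets us pick $q_i$ not orthogonal to that vector. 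Then $q_i^\top h_{i,t_0-1}$ and $q_i^\top h_{i,t_0}$ are distinct scalars, so a non-degenerate (e.g.\ injective) $\phi$ gives $\alpha_{i,t_0-1}\neq\alpha_{i,t_0}$. Together with the previous case this yields the equivalence, with ``$\forall t$'' reflecting the requirement that such a witness exist at every readout index.

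\textbf{Expected main obstacle.} The delicate direction is the second one, and the temptation to argue via ``$\prod_{t=j+1}^{\bar{j}}A_t\neq\mathbb{I}_n$'' must be resisted: products of non-identity matrices can collapse to $\mathbb{I}_n$ (two equal Householder reflections, or a rotation times its inverse), and $A_t$ is moreover input-dependent. The remedy is the minimal two-step witness, where the accumulated product reduces to the single matrix $A_{t_0}$ so the usable hypothesis is exactly $A_{t_0}\neq\mathbb{I}_n$. Both directions also rest on the blanket non-degeneracy conventions of the paper --- full-rank $W_Q,W_K$, $b_j\neq 0$, non-constant $\phi$ --- without which measure-zero exceptions intrude (e.g.\ $W_Q^\top(A_{t_0}-\mathbb{I}_n)W_K$ could be antisymmetric and the quadratic form $x^\top W_Q^\top(A_{t_0}-\mathbb{I}_n)W_K x$ vanish identically, a degeneracy sidestepped here precisely by keeping $q_i$ free).
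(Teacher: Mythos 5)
Your proof follows the same route as the paper's: both directions are read off the closed form \eqref{eqn:coefficients}, the reverse direction collapses the product to $h_{i,j}=b_jk_j$ so that identical contents force identical coefficients, and the forward direction exploits the factorization $\prod_{t=j+1}^{i}A_t=\bigl(\prod_{t=\bar j+1}^{i}A_t\bigr)\bigl(\prod_{t=j+1}^{\bar j}A_t\bigr)$. Where you genuinely improve on the paper is the forward direction: the paper simply asserts that $\phi\bigl(q_i^\top A_{i,j}\tilde k_j\bigr)\neq\phi\bigl(q_i^\top A_{i,\bar j}\tilde k_{\bar j}\bigr)$ ``due to $A_{i,j}$,'' which as you correctly observe is not automatic --- the accumulated product between the two positions can act as the identity on the key even when no individual $A_t$ is the identity, $\phi$ may fail to be injective, and $q_i$ may be orthogonal to the difference of states. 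Your adjacent-position witness ($j=t_0-1$, $\bar j=t_0$) reduces the accumulated product to the single non-identity matrix $A_{t_0}$ and then invokes genericity of $\bar x$ and freedom in $q_i$, which is the right repair; you also correctly flag that this requires non-degeneracy conventions (full-rank $W_Q,W_K$, $b_j\neq0$, injective $\phi$) that the paper leaves implicit. One caveat you should not gloss over: you assume the tail product $P=\prod_{t=t_0+1}^{i}A_t$ is invertible, but the paper's admissible classes include singular choices (diagonal $A_t$ with a zero entry, Householder matrices with $\beta_t\|\lambda_t\|^2=1$, and $A_t=0$ is explicitly discussed as a suppression mechanism); the cleanest fix is to take the readout index $i=t_0$ so that $P$ is empty, which your construction already permits. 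With that adjustment your argument is sound and strictly more rigorous than the paper's own.
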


\begin{proof}
Consider the definition of the coefficients (\eqref{eqn:coefficients}):
\begin{equation*}
\alpha_{i,j} = \phi\left(q_i^\top \left(\prod_{t=j+1}^i A_t\right) b_j k_j\right).
\end{equation*}
We first consider the case $A_t = \mathbb{I}_n, \forall t$, then
\begin{equation*}
\alpha_{i,j} = \phi\left(q_i^\top b_jk_j\right).
\end{equation*}
Assuming the scaling factors are computed from the input $b_j = f(x_j)$, the factors $b_j$ contain no positional information. Specifically, because $b_j$ is computed from the same input $x_j$ as $k_j$, we can rewrite the key using a change of variable $\tilde{k}_j = b_j k_j = g(x_j)$, i.e., the new key is computed from a single input $x_j$, albeit in a nonlinear fashion. Therefore, we consider the simplified coefficients
\begin{equation*}
\bar{\alpha}_{i,j} = \phi\left(q_i^\top \tilde{k}_j\right).
\end{equation*}
Given two keys $\tilde{k}_j$, $\tilde{k}_{\bar{j}}$ computed from the same input $x_j$ (i.e., $\tilde{k}_j = \tilde{k}_{\bar{j}}$), the coefficients for these two keys are equivalent $\bar{\alpha}_{i,j} = \bar{\alpha}_{i,\bar{j}}$ and thus the coefficients have no positional information, i.e., the coefficients cannot be discriminated depending on the index $j$.

Now, consider the case $A_t \neq \mathbb{I}_n, \forall t$, then
\begin{equation*}
\alpha_{i,j} = \phi\left(q_i^\top \left(\prod_{t=j+1}^i A_t\right) b_j k_j\right).
\end{equation*}
Using the same simplification for $b_j$ as above and let $A_{i,j} = \prod_{t=j+1}^i A_t$, we get
\begin{equation*}
\bar{\alpha}_{i,j} = \phi\left(q_i^\top A_{i,j} \tilde{k}_j\right).
\end{equation*}
The evolution $A_{i,j}$ contains information about the (time) index evolution from $j$ to $i$, since it defines a map $f: \mathbb{R}^n \to \mathbb{R}^n$ that maps any vector $a_j \in \mathbb{R}^n$ to $a_i = A_i\cdot \dots \cdot A_{j+1} a_j$, i.e., describes the evolution of $a_j$ from (time) index $j$ to $i$. Consider again two keys $\tilde{k}_j$, $\tilde{k}_{\bar{j}}$ computed from the same input $x_j$ (i.e., $\tilde{k}_j = \tilde{k}_{\bar{j}}$), then their respective coefficients are not equivalent due to $A_{i,j}$:
\begin{equation*}
    \bar{\alpha}_{i,j} = \phi\left(q_i^\top A_{i,j} \tilde{k}_j\right) \neq \phi\left(q_i^\top A_{i,\bar{j}} \tilde{k}_{\bar{j}}\right) = \bar{\alpha}_{i,\bar{j}}.
\end{equation*}
Specifically, if $\bar{j} > j$, then the following relation holds
\begin{equation*}
    A_{i,j} = \prod_{t=j+1}^i A_t = \prod_{t=\bar{j}+1}^i A_t \prod_{t=j+1}^{\bar{j}} A_t = A_{i,\bar{j}} A_{\bar{j}, j},
\end{equation*}
and thus there exists a linear relation between $A_{i,j}$ and $A_{i,\bar{j}}$. Therefore, the coefficients have positional information, quantified by $A_{\bar{j}, j}$.
\end{proof}

\begin{remark}
The proof of Lemma 3 relies strictly on the assumption that the scaling factor $b_j$ is a function exclusively of the input vector $x_j$ (i.e., $b_j = f(x_j)$) and contains no absolute positional information itself. However, if an architecture parameterizes $b_j$ to explicitly depend on the time index $j$ -- such as $b_j = f(j, x_j)$ -- this introduces an alternative pathway for positional awareness. In such a scenario, even if the evolution matrix is the identity ($A_t = \mathbb{I}_n$), the coefficients $\alpha_{i,j}$ would retain positional information. This is because two identical inputs appearing at different sequence positions ($x_j = x_{\bar{j}}$ with $j \neq \bar{j}$) would yield distinct scaling factors ($b_j \neq b_{\bar{j}}$), allowing the model to discriminate between them. As an example consider preprocessing the inputs by a short convolution (see e.g. \cite{mamba}), i.e., $\{\bar{x}_j\}_{j=1}^i = \Phi \star \{x_j\}_{j=1}^i$, where $\Phi$ denotes the convolution weights. In this case, $\bar{x}_j$ contains scaled copies of previous inputs, depending on the convolution dimension, e.g., $\bar{x}_j = \Phi_0 x_j + \Phi_1 x_{j-1} + \Phi_2 x_{j-2} + \Phi_3 x_{j-3}$. Therefore, preprocessing the inputs with a convolution can offer another way to embed positional information in the coefficients $\alpha_{i,j}$. However, note that this "moving averaging" of inputs might not work for certain repeating input signals.
\end{remark}

\subsubsection{Proof of Lemma \ref{lem:A_geometric}}

\begin{lemma}
    Imposing any of the following structures on $A_i$ in \eqref{eqn:dynamics}, results in the corresponding allowed transformations:
    \begin{enumerate}
        \item[a)] $A_i = \lambda_i \mathbb{I}_n, \lambda_i \in \mathbb{R}, \abs{\lambda_i} \leq 1$; allows scaling of keys (including flipping, if $\lambda_i < 0$) uniformly along all dimensions $n$,
        \item[b)] $A_i = \textrm{diag}(\lambda_i), \lambda_i \in \mathbb{R}^n, \abs{\lambda^{(r)}_i} \leq 1$ where $\lambda^{(r)}$ denotes the $r$-th entry of $\lambda_i$; allows scaling of keys (including flipping, if $\lambda^{(r)}_i < 0$) separately along dimensions $n$,
        \item[c)] $A_i = \mathbb{I}_n - \beta_i \lambda_i \lambda_i^\top, \beta_i \in [0, 2], \lambda_i \in \mathbb{R}^n, $ (Householder matrix); allows scaling and specific rotations of keys.
    \end{enumerate}
\end{lemma}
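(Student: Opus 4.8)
The plan is to analyze, for each of the three structural families, the cumulative product $A_{i,j} := \prod_{t=j+1}^{i} A_t$ that appears in \eqref{eqn:coefficients}, since this matrix is exactly the linear map applied to the scaled key $b_j k_j$ before it is read out against $q_i$. Characterizing the transformations ``allowed on the keys'' therefore reduces to describing which matrices arise as such products, which I would do by an elementary eigenvalue/eigenvector analysis in each case, together with the bound $\abs{\lambda_t}\le 1$ (resp. $\abs{\lambda_t^{(r)}}\le 1$, $\beta_t\in[0,2]$) that keeps the products contractive.

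For case (a), a product of scalar matrices is again scalar, $A_{i,j} = \big(\prod_{t=j+1}^{i}\lambda_t\big)\mathbb{I}_n =: \mu_{i,j}\,\mathbb{I}_n$, and $\abs{\lambda_t}\le 1$ gives $\abs{\mu_{i,j}}\le 1$; hence every key is multiplied by a common factor in $[-1,1]$, the sign being negative precisely when an odd number of the $\lambda_t$ are negative (the claimed ``flipping''). For case (b), a product of diagonal matrices is diagonal with entrywise products, $A_{i,j} = \textrm{diag}\big(\prod_{t=j+1}^{i}\lambda_t^{(1)},\dots,\prod_{t=j+1}^{i}\lambda_t^{(n)}\big)$, with each factor in $[-1,1]$; since the standard basis vectors are common eigenvectors of all the $A_t$, this is precisely axis-aligned, per-coordinate scaling with possible per-coordinate sign flips, and cannot mix coordinates.

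For case (c), I would first diagonalize a single factor $A_t = \mathbb{I}_n - \beta_t\lambda_t\lambda_t^\top$: because $\lambda_t\lambda_t^\top$ has rank one, every vector orthogonal to $\lambda_t$ is an eigenvector with eigenvalue $1$, while $\lambda_t$ is an eigenvector with eigenvalue $1-\beta_t\|\lambda_t\|^2$, which lies in $[-1,1]$ exactly because $\beta_t\|\lambda_t\|^2\in[0,2]$ (after the usual normalization $\|\lambda_t\|=1$ this reads $1-\beta_t\in[-1,1]$). So each $A_t$ fixes a hyperplane and applies a signed contraction along one direction, becoming an orthogonal reflection when the eigenvalue equals $-1$, i.e. $\beta_t\|\lambda_t\|^2=2$. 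Composing these factors, $A_{i,j}$ is a product of such directional scalings/reflections; in the purely reflective subcase it is orthogonal, and I would cite the Cartan--Dieudonn\'e theorem to conclude that products of reflections realize rotations (an even number of factors) and orientation-reversing isometries (an odd number), which justifies ``scaling and specific rotations of keys.''

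The main obstacle is making the qualifier ``specific'' precise without overclaiming: unlike (a) and (b), where the reachable maps form a clean closed-form family, the set of length-$(i-j)$ products of generalized Householder matrices with prescribed directions $\lambda_t$ is not easily parametrized, and in particular a short such product need not realize every element of $\mathrm{SO}(n)$. I would therefore keep the statement of (c) at the level of the eigenvalue characterization (identity on a hyperplane, signed scaling along $\lambda_t$, reflections as the extreme case) plus the Cartan--Dieudonn\'e remark, rather than attempting an exact description of the attainable rotations. The remaining pieces --- the scalar- and diagonal-product computations and the rank-one eigenvalue calculation --- are routine.
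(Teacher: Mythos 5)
Your proposal is correct and takes essentially the same route as the paper: an elementary eigenvalue/eigenvector analysis of each structure, with the rank-one decomposition of $k_j$ into components parallel and orthogonal to $\lambda_t$ for the Householder case and the observation that compositions of reflections generate orthogonal transformations (which you name as Cartan--Dieudonn\'e; the paper states it without attribution). The only difference is cosmetic --- you phrase everything in terms of the cumulative product $\prod_{t=j+1}^{i} A_t$ while the paper analyzes a single factor's action on a key, and your caution about the qualifier ``specific rotations'' matches the paper's own hedging.
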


\begin{proof}
All three items follow by a direct computation of $A_i$'s action on an arbitrary key vector $k_j\in\mathbb R^n$ and by inspecting eigenvalues/eigenvectors.

\medskip\noindent\textbf{(a) Uniform scaling.} If $A_i=\lambda_i \mathbb{I}_n$ then for any key $k_j$ we have
\begin{equation*}
A_i k_j = \lambda_i \mathbb{I}_n k_j = \lambda_i k_j,
\end{equation*}
so $A_i$ scales $k_j$ uniformly in every coordinate by the scalar $\lambda_i$. If $\abs{\lambda_i} \leq 1$ the operator is a contraction; if $\lambda_i<0$ it also flips the sign of every component (a global reflection through the origin).

\medskip\noindent\textbf{(b) Coordinate-wise scaling.} If $A_i=\operatorname{diag}(\lambda_i^{(1)},\dots,\lambda_i^{(n)})$, then
\begin{equation*}
(A_i k_j)^{(r)} = \lambda_i^{(r)}\, k_j^{(r)},\qquad r=1,\dots,n.
\end{equation*}
Thus each coordinate is scaled independently by $\lambda_i^{(r)}$. The constraints $\abs{\lambda_i^{(r)}} \leq 1$ render the operation a contraction in each coordinate; negative entries produce sign flips on the corresponding coordinate only.

\medskip\noindent\textbf{(c) Rank-one update / Householder-type transform.} 
Given $A_i = \mathbb{I}_n - \beta_i \lambda_i\lambda_i^\top$, we decompose any $k_j\in\mathbb R^n$ into components parallel and orthogonal to $\lambda_i$:
\begin{equation*}
k_j = k_j^\parallel + k_j^\perp,\qquad k_j^\parallel = \frac{\lambda_i^\top k_j}{\|\lambda_i\|^2}\,\lambda_i,\quad \lambda_i^\top k_j^\perp = 0,
\end{equation*}
where $\|\cdot\|$ as the Euclidean in $\mathbb{R}^n$. Then
\begin{equation*}
A_i k_j = (\mathbb{I}_n - \beta_i \lambda_i\lambda_i^\top)(k_j^\parallel + k_j^\perp)
= (1 - \beta_i\|\lambda_i\|^2) k_j^\parallel + k_j^\perp.
\end{equation*}
Hence $k_j^\perp$ (every vector orthogonal to $\lambda_i$) is an eigenvector with eigenvalue $1$, and the direction $\lambda_i$ itself is an eigenvector with eigenvalue $1-\beta_i\|\lambda_i\|^2$. Therefore, $A_i$ acts by scaling the component along $\lambda_i$ by the factor $1-\beta_i\|\lambda_i\|^2$, while leaving the orthogonal complement unchanged.

Special cases and remarks:
\begin{itemize}
  \item If one chooses $\beta_i = \nicefrac{2}{\|\lambda_i\|^2}$ (and $\lambda_i\neq0$), then $1-\beta_i\|\lambda_i\|^2 = -1$ and
  \begin{equation*}
  A_i = \mathbb{I}_n - \frac{2}{\|\lambda_i\|^2}\,\lambda_i\lambda_i^\top
  \end{equation*}
  is a classical Householder reflection (an orthogonal matrix with determinant $-1$) that reflects across the hyperplane orthogonal to $\lambda_i$.
  \item For general $\beta_i\in[0,2]$ and $\|\lambda_i\|$ normalized (or when $\beta_i\|\lambda_i\|^2\in[0,2]$), the eigenvalue $1-\beta_i\|\lambda_i\|^2$ lies in $[-1,1]$, so the operator is a contraction on the $\lambda_i$-direction and may flip the sign (reflection), if the eigenvalue is negative.
  \item Although a single rank-one transform of this form does not produce an arbitrary rotation, compositions of Householder reflections (each of which is of the form $\mathbb{I}_n-2\lambda \lambda^\top$ with unit $\lambda \in \mathbb{R}^n$ can generate any orthogonal matrix (rotations and reflections). Hence, by composing appropriate special cases of the matrices in (c), orthogonal rotations can be realized.
\end{itemize}

Combining these elementary computations proves that each structural constraint on $A_i$ yields the stated class of allowed transformations.
\end{proof}

\begin{remark}
    In Lemma~\ref{lem:A_geometric} and the proof above, we make no assumption on the magnitude of $\lambda_i$ in a Householder-type evolution matrix. However, as shown in the proof, assuming unit $\lambda_i$ is necessary to define all transformations. This is the reason why the keys $k_j$ and queries $q_i$ are L2 normalized in \citet{yang2024parallelizing,yang2025gated} for the computation of $A_i$.
\end{remark}

\subsubsection{Proof of Lemma \ref{lem:variance-dot-product}}

\begin{lemma}
    Consider two i.i.d. normally distributed random vectors $x_i, x_j \sim \mathcal{N}(0, \Sigma)$. Then, the dot product $q_i^\top h_{i,j}$ with $q_i, h_{i,j}$ defined in \eqref{eqn:coefficients}, has zero-mean and variance $\textrm{Var}(q_i^\top h_{i,j}) = \textrm{tr}(\Sigma_q \Sigma_h)$ with $\Sigma_q = T_q\Sigma T_q^\top$, $\Sigma_h = T_{h,i}\Sigma T_{h,i}^\top$.
    Assuming $\Sigma = \sigma \mathbb{I}_d$, both $\Sigma_q$ and $\Sigma_h$ are positive semi-definite and the variance of the dot product scales as $\textrm{Var}(q_i^\top h_{i,j}) = \mathcal{O}(n)$.
\end{lemma}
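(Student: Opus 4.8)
The plan is to prove the statement in two stages: first establish the exact variance formula for a general $T_q, T_{h,i}$, then specialize to $\Sigma = \sigma \mathbb{I}_d$ and read off the scaling in $n$.

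\textbf{Stage 1: the variance formula.} First I would write $q_i = T_q x_i$, $h_{i,j} = T_{h,i} x_j$ with $x_i, x_j$ independent and each $\mathcal{N}(0,\Sigma)$, and compute the dot product $q_i^\top h_{i,j} = x_i^\top T_q^\top T_{h,i} x_j$. Since $\mathbb{E}[x_i] = \mathbb{E}[x_j] = 0$ and $x_i \perp x_j$, the mean is $\mathbb{E}[x_i^\top T_q^\top T_{h,i} x_j] = \mathbb{E}[x_i]^\top T_q^\top T_{h,i}\, \mathbb{E}[x_j] = 0$, giving the zero-mean claim. For the variance, I would condition on $x_j$: given $x_j$, the quantity $q_i^\top h_{i,j}$ is a linear functional $w^\top x_i$ with $w = T_q^\top T_{h,i} x_j$, so $\operatorname{Var}(q_i^\top h_{i,j} \mid x_j) = w^\top \Sigma w = x_j^\top T_{h,i}^\top T_q \Sigma T_q^\top T_{h,i} x_j = x_j^\top T_{h,i}^\top \Sigma_q T_{h,i} x_j$ with $\Sigma_q = T_q \Sigma T_q^\top$. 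Because the conditional mean is zero, the law of total variance reduces to $\operatorname{Var}(q_i^\top h_{i,j}) = \mathbb{E}_{x_j}\big[x_j^\top T_{h,i}^\top \Sigma_q T_{h,i} x_j\big]$. Using the standard identity $\mathbb{E}[x_j^\top M x_j] = \operatorname{tr}(M \Sigma)$ for $x_j \sim \mathcal{N}(0,\Sigma)$ with $M = T_{h,i}^\top \Sigma_q T_{h,i}$, and cyclicity of the trace, this equals $\operatorname{tr}(T_{h,i}^\top \Sigma_q T_{h,i} \Sigma) = \operatorname{tr}(\Sigma_q\, T_{h,i} \Sigma T_{h,i}^\top) = \operatorname{tr}(\Sigma_q \Sigma_h)$ with $\Sigma_h = T_{h,i}\Sigma T_{h,i}^\top$, as claimed.

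\textbf{Stage 2: scaling under $\Sigma = \sigma \mathbb{I}_d$.} Here $\Sigma_q = \sigma\, T_q T_q^\top$ and $\Sigma_h = \sigma\, T_{h,i} T_{h,i}^\top$, both of the form $M M^\top$ and hence positive semi-definite. Then $\operatorname{Var}(q_i^\top h_{i,j}) = \sigma^2 \operatorname{tr}(T_q T_q^\top T_{h,i} T_{h,i}^\top) = \sigma^2 \|T_q^\top T_{h,i}\|_F^2 \geq 0$. Recalling from \eqref{eqn:coefficients} that $T_q = W_Q \in \mathbb{R}^{n\times d}$ and $T_{h,i} = A_{i,j} b_j W_K$ with $A_{i,j} = \prod_{t=j+1}^i A_t \in \mathbb{R}^{n\times n}$, so $T_q^\top T_{h,i} \in \mathbb{R}^{d\times d}$, the trace is a sum over the $n$ hidden dimensions of the products of entries; treating the entries of $W_Q, W_K$ as $\mathcal{O}(1)$ and $A_{i,j}, b_j$ as order-one (the natural normalization regime), each of the $n$ summands contributes $\mathcal{O}(1)$, so $\operatorname{Var}(q_i^\top h_{i,j}) = \mathcal{O}(n)$. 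This is exactly the statement that, absent the $b_j = \mathcal{O}(1/\sqrt n)$ rescaling, the dot-product variance grows linearly in $n$, which is the content used in Principle~\ref{pple:b_variance}.

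\textbf{Main obstacle.} The probabilistic computation in Stage~1 is routine (total variance plus the Gaussian quadratic-form identity); the only delicate point is Stage~2, namely making precise in what sense the variance ``scales as $\mathcal{O}(n)$.'' The trace $\operatorname{tr}(\Sigma_q\Sigma_h)$ is a deterministic function of the fixed matrices $W_Q, W_K, A_{i,j}, b_j$, so the $\mathcal{O}(n)$ claim is really a statement about how this trace behaves as the width $n$ grows under the implicit assumption that the per-coordinate scales of these matrices are held fixed (e.g., entries $\mathcal{O}(1)$, or $\|A_{i,j}\|$ bounded). I would state this assumption explicitly, note that $\operatorname{tr}(\Sigma_q\Sigma_h) = \sum_{r=1}^n \mu_r \nu_r$ type decomposition over shared structure makes the linear-in-$n$ growth transparent, and remark that this is precisely why the compensating factor $b_j = \mathcal{O}(1/\sqrt n)$ restores $\mathcal{O}(1)$ variance — matching the informal argument in \citet[Section 3.2.1]{Transformer}.
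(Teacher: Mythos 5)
Your proof is correct and follows essentially the same route as the paper's: the mean vanishes by independence, the variance is obtained by iterated expectation reducing to $\operatorname{tr}(\Sigma_q\Sigma_h)$ (you condition on $x_j$ where the paper conditions on $x_i$, which is immaterial), and the isotropic case gives $\sigma^2\|T_q^\top T_{h,i}\|_F^2$ with the same informal per-coordinate normalization argument for the $\mathcal{O}(n)$ scaling. The one quibble is in Stage 2: taking the entries of $W_Q,W_K$ to be $\mathcal{O}(1)$ makes each coordinate of $q_i$ scale like $\mathcal{O}(\sqrt{d})$ rather than $\mathcal{O}(1)$, so to make "each of the $n$ summands contributes $\mathcal{O}(1)$" literally true you need the fan-in normalization the paper assumes (entries scaled so the per-coordinate output variance is $\mathcal{O}(1)$) --- a point you yourself flag as the assumption that must be stated explicitly.
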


\begin{proof}
We proof the lemma in three parts, first zero mean, then the variance relation, and finally the scaling under the isotropy assumption.

\medskip\noindent\textbf{Mean.}
Since $x_i$ and $x_j$ are independent and zero-mean,
\begin{equation*}
    \mathbb{E}[q_i^\top h_{i,j}] = \mathbb{E}[(T_q x_i)^\top (T_{h,i} x_j)] = \mathbb{E}_{x_i}\mathbb{E}_{x_j}[x_i^\top T_q^\top T_{h,i}x_j] = 0,
\end{equation*}
hence the mean is zero.

\medskip\noindent\textbf{Variance.}
Since the mean is zero,
\begin{align*}
    \mathrm{Var}(q_i^\top h_{i,j}) &= \mathbb{E}\!\left[(q_i^\top h_{i,j})^2\right] \\
    &= \mathbb{E}\!\left[(T_q x_i)^\top (T_{h,i} x_j)(T_{h,i} x_j)^\top (T_q x_i)\right].
\end{align*}
Conditioning on $x_i$ and taking expectation over $x_j$ first,
\begin{align*}
    \mathbb{E}_{x_j}\!\left[(T_{h,i} x_j)(T_{h,i} x_j)^\top\right]
    &= T_{h,i}\,\mathbb{E}[x_j x_j^\top]\,T_{h,i}^\top \\
    &= T_{h,i}\Sigma T_{h,i}^\top
    \coloneqq \Sigma_h,
\end{align*}
and due to symmetry $\mathbb{E}_{x_i}\!\left[(T_{q} x_i)(T_{q} x_i)^\top\right] = T_q \Sigma T_q^\top = \Sigma_q$. Therefore,
\begin{align*}
    \mathbb{E}\!\left[(q_i^\top h_{i,j})^2\right]
    &= \mathbb{E}_{x_i}\!\left[(T_q x_i)^\top \Sigma_h (T_q x_i)\right] \\
    &= \mathbb{E}_{x_i}\!\left[ \mathrm{tr}\!\big(\Sigma_h (T_q x_i)(T_q x_i)^\top\big)\right] \\
    &= \mathrm{tr}\!\big(\Sigma_q \Sigma_h\big),
\end{align*}
due to the previous relations and the cyclic property of the trace operator.

\medskip\noindent\textbf{Isotropy.}
If $\Sigma = \sigma \mathbb{I}_d$, then
\begin{equation*}
    \Sigma_q = \sigma T_q T_q^\top, \qquad \Sigma_h = \sigma T_h T_h^\top,
\end{equation*}
which are symmetric positive semidefinite by definition of matrix outer products \citep{friedberg1997linear}. Therefore,
\begin{align*}
    \mathrm{Var}(q_i^\top h_{i,j}) &= \mathrm{tr}\!\big(\Sigma_q \Sigma_h\big) \\
    &= \sigma^2 \mathrm{tr}\!\big(T_q T_q^\top T_{h,i} T_{h,i}^\top\big) \\
    &= \sigma^2 \| T_q^\top T_{h,i} \|^2_F,
\end{align*}
where we used the cyclic property of the trace and the identity $\| M \|_F = \mathrm{tr}(MM^\top)$ \citep{friedberg1997linear}, with $\| \cdot \|_F$ denoting the Frobenius norm. To prove the scaling statement, we assume $T_q$, $T_{h,i}$ are properly normalized weight matrices (i.e. the matrices have entries $\mathcal{O}(\nicefrac{1}{\sqrt{n}})$, such that the output variance remains $\mathcal{O}(1)$). This is assumption is common in practice. Therefore, we have $\| T_q \|_F = \mathcal{O}(\sqrt{n})$, $\| T_{h,i} \|_F = \mathcal{O}(\sqrt{n})$ and thus $\| T_q^\top T_{h,i} \|^2_F = \mathcal{O}(n)$ and $\mathrm{Var}(q_i^\top h_{i,j}) =\mathcal{O}(n)$. Importantly, given common normalization choices for $T_q$, $T_{h,i}$, which keep per-coordinate variances to $\mathcal{O}(1)$, the dot product variance grows linearly in $n$.
\end{proof}

\subsubsection{Proof of Lemma \ref{lem:normalization}}

\begin{lemma}
Consider a fixed index $j$ and let $s_i \coloneqq q_i^\top h_{i,j}$, $\alpha_i = \phi(s_i)$ in \eqref{eqn:dynamics}. Let $\phi:\mathbb{R}\to[0,\infty)$ be nondecreasing and unbounded, and let $g:[0,\infty)\to(0,\infty)$ be an increasing and unbounded comparison function, such that $L \coloneqq \limsup_{i\to\infty}\frac{\phi(s_i)}{g(s_i)} < \infty$. If $\eta_i$ is chosen such that $\liminf_{i\to\infty}\frac{\eta_i}{g(s_i)} \coloneqq m > 0$, then the normalized coefficients are bounded: $\limsup_{i\to\infty}\frac{\alpha_i}{\eta_i} \leq \frac{L}{m}$.
\end{lemma}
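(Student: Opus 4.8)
The plan is to reduce the claim to an elementary $\limsup$/$\liminf$ inequality for ratios of nonnegative sequences; no dynamics are involved beyond the definitions of $s_i$ and $\alpha_i$. The hypotheses already presuppose that $s_i\ge 0$ for all sufficiently large $i$ (otherwise $g(s_i)$, and hence $L$ and $m$, would be undefined), so throughout I work in that cofinite tail, where $g(s_i)\in(0,\infty)$. The first step is to insert $g(s_i)$ into numerator and denominator of the normalized coefficient:
\[
\frac{\alpha_i}{\eta_i} \;=\; \frac{\phi(s_i)}{\eta_i} \;=\; \frac{\phi(s_i)/g(s_i)}{\eta_i/g(s_i)},
\]
which is legitimate because $g(s_i)>0$ and because the assumption $m>0$ forces $\eta_i/g(s_i)>0$ (hence $\eta_i>0$) for all large $i$.

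Next I would sandwich numerator and denominator using the definitions of $\limsup$ and $\liminf$. Fix $\varepsilon\in(0,m)$. From $\limsup_{i\to\infty}\phi(s_i)/g(s_i)=L<\infty$ there is $N_1$ with $\phi(s_i)/g(s_i)\le L+\varepsilon$ for all $i\ge N_1$, and from $\liminf_{i\to\infty}\eta_i/g(s_i)=m$ there is $N_2$ with $\eta_i/g(s_i)\ge m-\varepsilon>0$ for all $i\ge N_2$. Since $\phi\ge 0$, the numerator of the displayed ratio is nonnegative, so for $i\ge\max\{N_1,N_2\}$ we obtain $\alpha_i/\eta_i\le (L+\varepsilon)/(m-\varepsilon)$. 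Taking $\limsup$ over $i$ gives $\limsup_{i\to\infty}\alpha_i/\eta_i\le(L+\varepsilon)/(m-\varepsilon)$, and since $m>0$ the right-hand side is continuous in $\varepsilon$ at $0$, so letting $\varepsilon\downarrow 0$ yields $\limsup_{i\to\infty}\alpha_i/\eta_i\le L/m$.

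This is mostly bookkeeping; the only things to be careful about are (i) staying in the tail where $g(s_i)$ is defined, (ii) keeping the inequality directions correct, which uses $\phi\ge 0$ and the eventual strict positivity of $\eta_i/g(s_i)$, and (iii) avoiding the indeterminate forms $\infty/\infty$ and $0/0$, which is precisely what the assumptions $L<\infty$ and $m>0$ exclude. The monotonicity and unboundedness of $\phi$ and $g$ play no role in the inequality itself; they only make the statement non-vacuous by pinning down the regime in which normalization is actually needed. I expect no real obstacle — the $\varepsilon$-sandwich is the single substantive step.
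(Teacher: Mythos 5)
Your proof is correct and follows essentially the same route as the paper's: insert $g(s_i)$ into numerator and denominator, apply the $\varepsilon$-definitions of $\limsup$ and $\liminf$ to bound $\alpha_i/\eta_i \le (L+\varepsilon)/(m-\varepsilon)$ on a tail, and let $\varepsilon \downarrow 0$. Your additional care about the tail on which $g(s_i)$ is defined and about where $\phi \ge 0$ enters is a minor tightening of the same argument, not a different approach.
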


\begin{proof}
Fix $\varepsilon\in\bigl(0,\min\{1,m\}\bigr)$. By the definition of the limit superior, there exists $N_1\in\mathbb{N}$ such that for all $i\ge N_1$,
\begin{equation*}
\frac{\phi(s_i)}{g(s_i)} \;\le\; L+\varepsilon.
\end{equation*}
By the definition of the limit inferior, there exists $N_2\in\mathbb{N}$ such that for all $i\ge N_2$,
\begin{equation*}
\frac{\eta_i}{g(s_i)} \;\ge\; m-\varepsilon.
\end{equation*}
Hence for all $i\ge N:=\max\{N_1,N_2\}$ we have
\begin{equation*}
\frac{\alpha_i}{\eta_i} = \frac{\phi(s_i)}{\eta_i} = \frac{\phi(s_i)}{g(s_i)}\cdot \frac{g(s_i)}{\eta_i} \leq \frac{L+\varepsilon}{m-\varepsilon}.
\end{equation*}
Since finitely many initial indices do not affect the limit superior, letting $\varepsilon\to 0$ yields
\begin{equation*}
\limsup_{i\to\infty}\frac{\alpha_i}{\eta_i} \leq \frac{L}{m},
\end{equation*}
which proves the lemma.
\end{proof}

\begin{remark}\label{remark:eta-function}
    Assuming the normalization factor $\eta_i$ is a function of the coefficients themselves, i.e., $\eta_i = f(\alpha_{i,j})$, simplifies Lemma~\ref{lem:normalization} significantly, since we only need to analyze $\frac{\alpha_i}{f(\alpha_i)}$ (without comparison function $g$). Hence, if $f(\cdot)$ grows linearly, $\nicefrac{\alpha_i}{f(\alpha_i)}$ converges to a constant $m > 0$; if $f(\cdot)$ grows superlinearly, $\nicefrac{\alpha_i}{f(\alpha_i)}$ converges to $0$; and $f(\cdot)$ grows sublinearly, $\nicefrac{\alpha_i}{f(\alpha_i)}$ blows up.
\end{remark}

\subsection{Corollaries}\label{app:cor_proofs}

\setcounter{lemma}{2}

\subsubsection{Proof of Corollary \ref{cor:approximation}}

\begin{cor}
    Consider the kernel approximation of $\phi(\cdot): \mathbb{R}^n \to \mathbb{R}^q$ to be $\tilde{\phi}(q_i^\top h_{i,j}) = \psi_q(q_i)^\top \psi_h(h_{i,j})$, with $\psi_q: \mathbb{R}^n \to \mathbb{R}^q$ and $\psi_h: \mathbb{R}^n \to \mathbb{R}^q$, such that it approximates the readout map $\phi(\cdot) \approx \tilde{\phi}(\cdot)$ in \eqref{eqn:dynamics}. Then, the zero-level set of the approximation $\tilde{\phi}(\cdot)$ is the singleton $\mathcal{Z} = \{ 0 \}$ and by Lemma~\ref{lem:zero-level-set} the set of linear transformations $\mathcal{T}$ is small.
\end{cor}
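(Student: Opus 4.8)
The plan is to reduce the claim to a direct application of Lemma~\ref{lem:zero-level-set}, after reinterpreting the kernel feature maps as a preprocessing step applied to the query and the state. First I would absorb the feature maps into new variables, $\tilde q_i \coloneqq \psi_q(q_i)\in\mathbb{R}^q$ and $\tilde h_{i,j}\coloneqq \psi_h(h_{i,j})\in\mathbb{R}^q$, so that the approximate coefficient reads $\tilde{\phi}(q_i^\top h_{i,j}) = \psi_q(q_i)^\top \psi_h(h_{i,j}) = \tilde q_i^\top \tilde h_{i,j}$. In the lifted $q$-dimensional space the readout acting on the scalar $\tilde q_i^\top\tilde h_{i,j}$ is therefore literally the identity $\operatorname{Id}:\mathbb{R}\to\mathbb{R}$; this is the crux, since it shows that no matter how expressive $\psi_q,\psi_h$ are, once they are folded into preprocessing the residual readout nonlinearity is gone.

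Second, I would compute the zero-level set of this effective readout: since $\operatorname{Id}(z)=z$, we get $\mathcal{Z}=\{z\in\mathbb{R}\mid z=0\}=\{0\}$, a singleton, which is trivially connected (so the hypotheses of Lemma~\ref{lem:zero-level-set} hold) and has one-dimensional Lebesgue measure $\abs{\mathcal{Z}}=0$. Lemma~\ref{lem:zero-level-set} then furnishes a constant $c>0$ with $\abs{\mathcal{T}}=c\,\abs{\mathcal{Z}}=0$, so the set of transformations that can suppress $\alpha_{i,j}$ has measure zero --- the precise meaning of ``$\mathcal{T}$ is small'', and hence by Principle~\ref{pple:zero-level-set} input selectivity is fragile to learn. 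I would also record the degenerate sub-cases mirroring those in the proof of Lemma~\ref{lem:zero-level-set}: if $\psi_q,\psi_h$ map into a region where $\tilde q_i^\top\tilde h_{i,j}$ stays bounded away from $0$ (e.g.\ both strictly positive, as for $\operatorname{elu}(\cdot)+1$) then $\mathcal{T}$ is in fact empty, while the inputs $x_i=x_j=0$ trivially lie in $\mathcal{Z}$ but constitute a measure-zero set of inputs and so do not affect the generic conclusion.

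The main obstacle --- essentially the only subtle point --- is making the reinterpretation rigorous: $\psi_q\circ T_q$ and $\psi_h\circ T_{h,i}$ are not linear, so one must be precise about what plays the role of the ``linear transformations'' in Lemma~\ref{lem:zero-level-set}. The clean resolution is to treat $\tilde q_i$ and $\tilde h_{i,j}$ themselves as the inputs of an identity-readout system, so that the remaining map producing $\alpha_{i,j}$ from them is genuinely linear (bilinear, then the identity on the scalar product) --- exactly the viewpoint linear attention adopts when it folds its feature map into the keys and queries. Once this is in place, the invocation of Lemma~\ref{lem:zero-level-set} and the measure computation are routine.
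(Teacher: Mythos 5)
Your proposal is correct and follows essentially the same route as the paper's proof: absorb $\psi_q,\psi_h$ into preprocessed variables $\tilde q_i,\tilde h_{i,j}$ so the effective readout on the scalar product is the identity, conclude $\mathcal{Z}=\{0\}$ is a singleton of measure zero, and invoke Lemma~\ref{lem:zero-level-set} to reduce $\mathcal{T}$ to a single fiber of measure zero. Your added care about the nonlinearity of $\psi_q\circ T_q$ and the degenerate positive-feature-map case goes slightly beyond what the paper writes down, but the argument is the same.
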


\begin{proof}
    Given that the kernel approximation is defined as $\tilde{\phi}(q_i^\top h_{i,j}) = \psi_q(q_i)^\top \psi_h(h_{i,j})$, we can rewrite this approximation using a change of variables as
    \begin{equation}\label{proof:var-change}
        \tilde{\phi}(q_i^\top h_{i,j}) = \tilde{q}_i^\top \tilde{h}_i,j, \ \textrm{with} \; \tilde{q}_i \coloneqq \psi_q(q_i),\ \tilde{h}_{i,j} \coloneqq \psi_h(h_i,j).
    \end{equation}
    Note that \eqref{proof:var-change} is equivalent to a dot product with an identity readout map $\textrm{Id}(\cdot)$, albeit in different dimensions than the untransformed dot product $q_i^\top h_{i,j}$ ($\tilde{q}_i \in \mathbb{R}^q$ compared to $q_i \in \mathbb{R}^n$). Given that the zero-level set of a dot product is the singleton $\{0\}$, the zero-level set of $\tilde{\phi}(\cdot)$ is $\mathcal{Z} = \{0\}$. Hence, $\abs{\mathcal{Z}} = 0$ and the measure of the set of linear transformations reduces to $\abs{\mathcal{T}} = g(\{0\})$ (see proof of Lemma~\ref{lem:zero-level-set} for the exact definitions). Therefore, $\mathcal{T}$ is reduced to a single fiber, i.e., $\mathcal{T} = \{ (T_q, T_{h,i}) \mid (T_q x_i)^\top(T_{h,i}x_j) = 0 \}$.
\end{proof}

\subsubsection{Proof of Corollary \ref{cor:max_zero_coeff}}

\begin{cor}
    Let $y_L\in\mathbb R^{d_v}$ be the solution to \eqref{eqn:lin-comb}, where $\alpha_{L,j}:\mathbb R^n\rightarrow \mathbb R;\ q_L\mapsto \alpha_{L,j}(q_L)$ is defined in \eqref{eqn:dynamics} with identity readout map and normalization factor, i.e., $\phi(\cdot)=\operatorname{Id}(\cdot)$, $\eta_L=1$. Consider the set of linearly independent states $\mathcal{H}=\{ h_{L,t} \mid c_1 h_{L,1} + \dots + c_t h_{L,t} = 0\ \implies \ c_1 = \dots = c_t = 0\}$. Then, a nonzero $q_L$ that achieves $\alpha_{L,j} = 0, \, \forall h_{L,j} \in \mathcal{H}$ exists if and only if $\dim\big(\operatorname{span}\{h_{L,j} \in \mathcal{H}\}\big) < n$. In particular, given a nonzero $q_L$, the measure $\abs{\mathcal{H}}\leq n-1$.
\end{cor}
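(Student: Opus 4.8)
The plan is to strip away the dynamical-systems wrapping and recognize the claim as an elementary statement about the existence of a nonzero vector in the orthogonal complement of a set of linearly independent vectors. First I would note that with the identity readout map and $\eta_L=1$, the output map in \eqref{eqn:dynamics} reduces to $\alpha_{L,j}(q_L)=q_L^\top h_{L,j}$, a linear functional of $q_L$. Consequently, requiring $\alpha_{L,j}=0$ for every $h_{L,j}\in\mathcal H$ is exactly requiring $q_L$ to be orthogonal to each element of $\mathcal H$, i.e. $q_L\perp\operatorname{span}\{h_{L,j}\in\mathcal H\}$. The output $y_L$ itself plays no active role; it merely motivates why simultaneously suppressing many coefficients is the object of interest.

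Next I would set up the linear algebra. Let $m:=\abs{\mathcal H}$ and stack the elements of $\mathcal H$ as the rows of a matrix $H\in\mathbb R^{m\times n}$. Because $\mathcal H$ is by hypothesis a linearly independent collection, $\operatorname{rank}(H)=m=\dim\big(\operatorname{span}\{h_{L,j}\in\mathcal H\}\big)$. The set of queries that suppress all of $\mathcal H$ is precisely $\ker H=\{q\in\mathbb R^n\mid Hq=0\}$, and by the rank--nullity theorem $\dim(\ker H)=n-\operatorname{rank}(H)=n-m$. Hence a nonzero $q_L$ with $Hq_L=0$ exists if and only if $n-m\ge 1$, i.e. if and only if $\dim\big(\operatorname{span}\{h_{L,j}\in\mathcal H\}\big)=m<n$, which is exactly the stated equivalence; the ``only if'' direction is the contrapositive, namely that $m=n$ forces $\ker H=\{0\}$.

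For the ``in particular'' bound I would argue by contraposition: if $m=n$ then $\operatorname{rank}(H)=n$, so $\ker H=\{0\}$ and the only query suppressing all of $\mathcal H$ is the trivial one; therefore, whenever a \emph{nonzero} $q_L$ simultaneously suppresses $\mathcal H$, we must have $\abs{\mathcal H}=m\le n-1$. (The case $m>n$ cannot occur, since at most $n$ vectors in $\mathbb R^n$ can be linearly independent, so the linearly independent collection $\mathcal H$ automatically satisfies $m\le n$.)

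I do not expect a genuine obstacle here — the content is a one-line rank--nullity argument. The only points needing a little care are (i) justifying $\operatorname{rank}(H)=\abs{\mathcal H}$ directly from the linear-independence hypothesis defining $\mathcal H$, and (ii) phrasing the equivalence so the ``only if'' direction is not glossed over; both become immediate once the matrix $H$ is introduced. An optional remark worth including, matching the footnote in the main text, is that the same argument carries over verbatim to a general linear readout $\phi(x)=ax$ with $a\ne 0$, since then $\phi(q_L^\top h_{L,j})=0\iff q_L^\top h_{L,j}=0$.
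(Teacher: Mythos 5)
Your argument is correct and is essentially identical to the paper's own proof: both reduce the simultaneous-suppression condition to the homogeneous linear system defined by the matrix whose rows (columns, in the paper) are the elements of $\mathcal{H}$, and then apply the rank--nullity theorem to conclude that a nonzero $q_L$ exists if and only if $\operatorname{rank}(H) < n$, hence $\abs{\mathcal{H}} \leq n-1$. Your added care in noting that linear independence gives $\operatorname{rank}(H) = \abs{\mathcal{H}}$ is a minor clarification of the same argument, not a different route.
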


\begin{proof}
The conditions $\alpha_{L,j}=q_L^\top h_{L,j}=0, \ \forall h_{L,j} \in \mathcal{H}$ are equivalent to the linear homogeneous system
\begin{equation}\label{proof:lin-sys}
H^\top q_L = 0,
\end{equation}
where $H \in \mathbb{R}^{n \times \abs{\mathcal{H}}}$ with $\abs{\mathcal{H}}$ the measure of set $\mathcal{H}$,\footnote{This means the number of vectors $h_{L,j}$ contained in the set.} is a matrix whose columns consist of the vectors $h_{L,j} \in \mathcal{H}$. The solution space for $q_L$ is the nullspace of $H^\top$. By the rank–nullity theorem \citep[Theorem 2.3]{friedberg1997linear}, we get
\begin{equation*}
\dim\big(\ker(H^\top)\big) = n - \operatorname{rank}(H^\top) = n - \operatorname{rank}(H).
\end{equation*}
Hence a nonzero $q_L$ solving \eqref{proof:lin-sys} exists if and only if $\operatorname{rank}(H) < n$, i.e., if and only if the vectors $h_{L,j} \in \mathcal{H}$ span a proper subspace of $\mathbb R^n$.

Consequently, the largest possible number of linearly independent constraints (equivalently, independent $h_{L,j}$) that admit a nonzero solution is \(n-1\). Equivalently, at most \(n-1\) independent coefficients \(\alpha_{L,j}\) can be forced to zero simultaneously, with a nonzero choice of \(q_L\).
\end{proof}

\setcounter{lemma}{6}
\setcounter{cor}{0}

\subsubsection{Proof of Corollary \ref{cor:coefficient_restriction}}
\begin{cor}
    Choosing $\eta_i = \sum_{j=1}^{i} \alpha_{i,j}$ in \eqref{eqn:lin-comb} constrains the normalized coefficients to $ \tilde{\alpha}_{i,j} = \nicefrac{\alpha_{i,j}}{\eta_i} \in [0,1]$ and imposes $\sum_{j=1}^{i} \tilde{\alpha}_{i,j} = 1$.
\end{cor}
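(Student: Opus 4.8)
The plan is to reduce the statement to two elementary observations, after first making explicit the hypothesis under which it holds. The corollary concerns a normalization choice rather than the coefficient dynamics themselves, so nothing about \eqref{eqn:dynamics} is needed beyond the fact that, in the relevant instantiations (softmax attention, linear attention; see Table~\ref{tab:architecture_parameters}), the readout map $\phi(\cdot)$ has nonnegative range, so that every attention coefficient satisfies $\alpha_{i,j} = \phi(q_i^\top h_{i,j}) \ge 0$. I would state this positivity assumption up front, together with the nondegeneracy condition that not all of $\alpha_{i,1},\dots,\alpha_{i,i}$ vanish, so that $\eta_i = \sum_{j=1}^i \alpha_{i,j} > 0$ and the normalized coefficients $\tilde\alpha_{i,j} = \alpha_{i,j}/\eta_i$ are well-defined.

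First I would establish the sum-to-one property. By definition $\tilde\alpha_{i,j} = \alpha_{i,j}/\eta_i$, hence $\sum_{j=1}^i \tilde\alpha_{i,j} = \eta_i^{-1}\sum_{j=1}^i \alpha_{i,j} = \eta_i^{-1}\eta_i = 1$, which is immediate from the choice $\eta_i = \sum_{j=1}^i \alpha_{i,j}$. Next I would establish the bound $\tilde\alpha_{i,j}\in[0,1]$: nonnegativity is inherited directly, since $\alpha_{i,j}\ge 0$ and $\eta_i>0$ give $\tilde\alpha_{i,j}\ge 0$; and for the upper bound, since every summand of $\sum_{j'=1}^i\tilde\alpha_{i,j'}$ is nonnegative, each individual term is at most the total, i.e. $\tilde\alpha_{i,j}\le\sum_{j'=1}^i\tilde\alpha_{i,j'}=1$. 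Combining the two bounds with the sum constraint yields precisely the convex-combination structure claimed, which (as noted in Appendix~\ref{app:linear-comb}) places $y_i$ in the simplex of $\{v_1,\dots,v_i\}$.

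There is no genuine obstacle here; the only point requiring care is the positivity hypothesis. Without $\alpha_{i,j}\ge 0$ the statement fails — one can obtain negative normalized coefficients, or $\eta_i\le 0$, breaking both the range claim and well-posedness — so the proof must flag that the corollary is stated for readout maps with nonnegative range (e.g. $\exp(\cdot)$, or $\psi(\cdot)\psi(\cdot)$ with $\psi\ge 0$ as in linear attention), and I would additionally exclude the degenerate case $\eta_i = 0$ in which the normalization is undefined. I would close by remarking that the argument is independent of the particular $A_t$, $b_j$, $\phi$ used to generate the $\alpha_{i,j}$: it applies verbatim to any collection of nonnegative coefficients, which is why this normalization is exactly what pins the output of softmax- and linear-attention layers to the convex hull of the value vectors.
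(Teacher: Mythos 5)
Your proof is correct and follows the same elementary route as the paper's: substitute $\eta_i = \sum_{j=1}^i \alpha_{i,j}$ into \eqref{eqn:lin-comb} and observe that the normalized coefficients sum to one by construction. Where you go further is worth noting. The paper's own proof only establishes the sum-to-one identity and then hedges on the range claim, saying the combination becomes convex \emph{if} $\tilde{\alpha}_{i,j} \geq 0$ and otherwise merely affine --- which implicitly concedes that the stated conclusion $\tilde{\alpha}_{i,j} \in [0,1]$ does not follow from the normalization choice alone. You correctly identify that the $[0,1]$ bound requires the additional hypothesis $\alpha_{i,j} \geq 0$ (with not all coefficients zero, so that $\eta_i > 0$ and the normalization is well-posed), and you then derive the upper bound cleanly from the fact that each nonnegative summand is at most the total. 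This makes your argument a complete proof of the statement as written, under an explicitly flagged hypothesis that the paper leaves implicit (it holds for $\phi = \exp$ and for $\psi(\cdot)\psi(\cdot)$ with $\psi \geq 0$, but not for a general readout map such as the identity). Your closing remark that the argument is independent of $A_t$, $b_j$, and $\phi$, applying verbatim to any collection of nonnegative coefficients, is also accurate and slightly more general than the paper's framing.
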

\begin{proof}
    Consider the linear combination (\eqref{eqn:lin-comb})
    \begin{equation*}
        y_i = \sum_{j=1}^i \frac{\alpha_{i,j}}{\eta_i} v_j.
    \end{equation*}
    Then, setting $\eta_i = \sum_{j=1}^i \alpha_{i,j}$, results in
    \begin{equation*}
        y_i = \frac{\sum_{j=1}^i \alpha_{i,j} v_j}{\sum_{j=1}^i \alpha_{i,j}} = \sum_{j=1}^i \tilde{\alpha}_{i,j} v_j,
    \end{equation*}
    with $\tilde{\alpha}_{i,j} = \nicefrac{\alpha_{i,j}}{\eta_i}$ as used in the statement. Therefore, the coefficients $\tilde{\alpha}_{i,j}$ trivially sum to $1$. Additionally, this restricts the coefficients and specializes the linear combination to a convex combination (if $\tilde{\alpha}_{i,j} \geq 0$) or an affine combination, as discussed in Appendix~\ref{app:linear-comb}. Therefore, the output $y_i$ lies in either the simplex or a hyperplane spanned by the value vectors $v_j$ \citep{friedberg1997linear}.
\end{proof}

\section{Extended Experimental Validation}\label{app:extended-exp}
In this appendix, we provide additional experiments to validate the principles stated in the main text.

\subsection{Principle 1}
Since Principle~\ref{pple:implementation} and Lemma~\ref{lem:implementation} are existence results, the principle cannot be validated experimentally. However, the principle has profound implications on the efficiency of sequence models as shown in Figure~\ref{fig:implementation}. With respect to sequence length, a recurrent implementation has linear complexity, while a non-recurrent implementation has quadratic complexity. Figure~\ref{fig:implementation} shows the computation time of FlashAttention-2 \citep{flashattention2} -- a non-recurrent implementation -- and SSD \citep{mamba2} -- a reccurent implementation -- against sequence length.\footnote{The data for this plot was obtained using \url{https://github.com/state-spaces/mamba}.}
\begin{figure}[h!]
    \centering
    \includegraphics[width=0.99\linewidth]{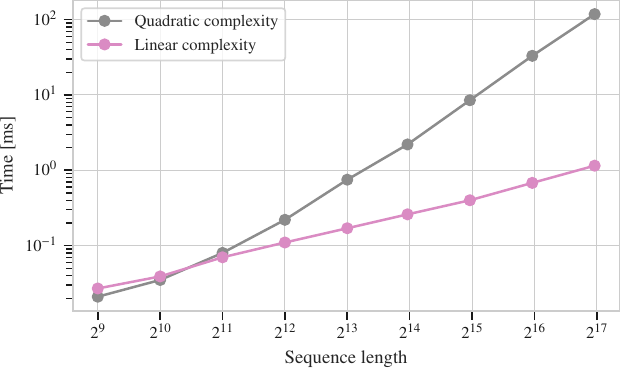}
    \caption{\textbf{(Principle 1)} Computation time of a recurrent (magenta) and non-recurrent (gray) implementation against sequence length.}
    \label{fig:implementation}
\end{figure}

\subsection{Principle~\ref{pple:zero-level-set} and Sub-Principles}\label{app:ppl2}
Figure~\ref{fig:readout-aux} adds to Figure~\ref{fig:phi} in the main text, by adding the memorization task and additionally showing the behavior of the investigated readout maps $\phi(\cdot)$ close to zero, which highlights how the theoretical near-zero sets are computed.
\begin{figure*}
    \centering
    \includegraphics[width=0.8\linewidth]{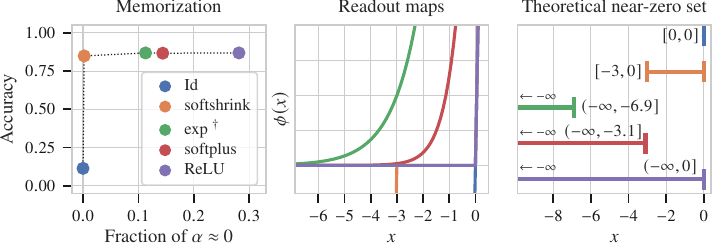}
    \caption{\textbf{(Principle 2)} Performance of different readout maps $\phi(\cdot)$ on the memorization task of MAD against the fraction of coefficients with near zero values ($\abs{\alpha} \leq 0.001$; left). The other parameters $A_i$, $b_j$, $\eta_i$ are fixed, thus the setting for $^\dag$ is equivalent to softmax attention. Behavior of the investigated readout maps $\phi(\cdot)$ close to zero (middle) and the theoretical near-zero sets of each readout map ($\abs{\phi(x)} \leq 0.001$; right). }
    \label{fig:readout-aux}
\end{figure*}

Figure~\ref{fig:readout-aux-2} additionally validates Principle~\ref{pple:zero-level-set} by showing performance against the ratio of coefficients $\alpha_{i,j}$ set to large values ($\alpha \in [0.9, 1]$) and coefficients contained in the near-zero set ($\abs{\alpha} \leq 0.001$); since the ratio is small, we plot it on a log-scale. This ratio captures not only a readout map's ability to suppress coefficients, but also its ability to select coefficients close to $1$. \emph{The results in Figure~\ref{fig:readout-aux-2} suggest that as the ability of a readout map to set coefficients to zero increases, the model becomes more selective (also setting more coefficients to $1$), increasing the performance of all three tasks}.

\paragraph{Principle 2.1} To validate Principle~\ref{pple:kernel-approx}, we ablate four kernel approximation maps $\psi(\cdot) \in \{ \textrm{ReLU}(\cdot), \textrm{ELU}(\cdot) + 1, \textrm{softplus}(\cdot), \exp(\cdot) \}$ that approximate the readout map as $\phi(\cdot) \approx \psi(\cdot)\psi(\cdot)$. The other coefficient dynamics parameters are fixed to $A_i = \mathbb{I}_n$, $b_j = \nicefrac{1}{\sqrt{n}}$, and $\eta_i = \sum_j \alpha_{i,j}$. In Figure~\ref{fig:approx}, we report the performance of all approximation maps on the fuzzy in-context recall and selective copying tasks of MAD, against the fraction of coefficients in the near-zero set (equivalent to Figure~\ref{fig:phi}). Additionally, we provide the performance of softmax attention ($\phi(\cdot) = \exp(\cdot)$) and pure linear attention ($\phi(\cdot) = \textrm{Id}(\cdot)$) as baselines. \textit{As dictated by the principle, the kernel approximations set a smaller fraction of coefficients in the near-zero set -- compared to the readout maps in Figure~\ref{fig:phi} -- thus deteriorating performance on both tasks.} However, note that most kernel approximations ($\psi(\cdot) \in \{ \textrm{ELU}(\cdot) + 1, \textrm{softplus}(\cdot), \exp(\cdot) \}$) are more efficient at setting coefficients close to zero than the identity readout map.
\begin{figure}[h]
    \centering
    \includegraphics[width=0.99\linewidth]{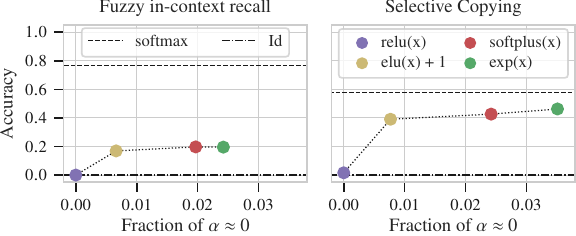}
    \caption{\textbf{(Principle 2.1)} Performance of different kernel approximations $\psi(\cdot)$ of the readout map $\phi(\cdot) \approx \psi(\cdot)\psi(\cdot)$ on two MAD tasks against the fraction of coefficients with near zero values ($\abs{\alpha} \leq 0.001$). The other parameters $A_i$, $b_j$, $\eta_i$ are fixed. Dotted lines indicate the performance level of softmax attention ($\phi(\cdot) = \exp(\cdot)$) and pure linear attention ($\phi(\cdot) = \textrm{Id}(\cdot)$) from Figure~\ref{fig:phi}, as a baseline.}
    \label{fig:approx}
\end{figure}

\paragraph{Principle 2.2}
To validate Principle~\ref{pple:max_zero}, we investigate the number of coefficients that are set to zero per time index $i$. Since no coefficient $\alpha_{i,j}$ is exactly set to zero due to numerical precision, we treat $\abs{\alpha_{i,j}} \leq 1e{-}6$ as zero. Using two models trained for Figure~\ref{fig:approx} (Principle~\ref{pple:kernel-approx}), i.e., readout approximations $\psi(\cdot) \in \{\textrm{softplus}(\cdot), \exp(\cdot) \}$, we compute for each time index $i$ and each head, the average number of zero coefficients over a sample input batch of size $64$. Since the two models have state dimension $n=128$ and $16$ heads, the effective state dimension\footnote{In a multi-head setting, the state dimension $n$ in Principle~\ref{pple:max_zero} needs to be replaced by the head dimension $\tilde{n} = \nicefrac{n}{\textrm{nr. of heads}}$.} is $\tilde{n} = \frac{n}{16} = 8$, therefore we expect no more than $\tilde{n}-1 = 7$ zero coefficients per time index and head (Principle \ref{pple:max_zero}). The results are shown in Figure~\ref{fig:ppl22} for one example head of each model on both tasks. \textit{The figure shows that the allowed number of zero coefficients is never exceeded and in general longer sequence lengths allow to set more coefficients to zero (compare the two tasks with different sequence lenghts).} This can be explained by the \emph{linear independence} of states $h_{i,j}$ required by Corollary~\ref{cor:coefficient_restriction}: for longer sequences, there exist more linearly independent states, which facilitates setting more coefficients to zero.
\begin{figure}[h]
    \centering
    \includegraphics[width=0.99\linewidth]{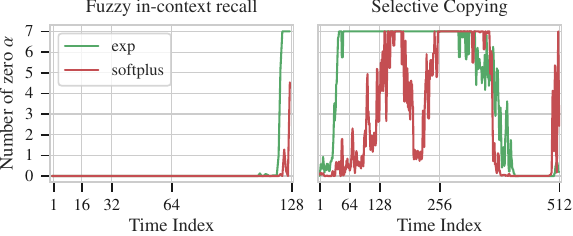}
    \caption{\textbf{(Principle 2.2)} Number of zero coefficients per time index $i$, for one head of two models $\psi(\cdot) \in \{\textrm{softplus}(\cdot), \exp(\cdot) \}$ trained on fuzzy in-context recall and selective copying (same models as in Fig.~\ref{fig:approx}).}
    \label{fig:ppl22}
\end{figure}

\begin{figure*}
    \centering
    \includegraphics[width=0.8\linewidth]{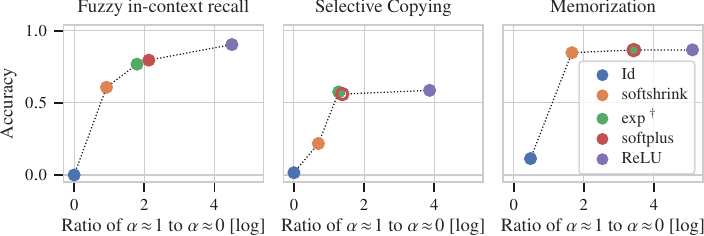}
    \caption{\textbf{(Principle 2)} Performance of different readout maps $\phi(\cdot)$ on three MAD tasks with respect to the ratio of coefficients $\alpha_{i,j}$ set to large ($\approx 1$) and near zero values ($\approx 0$). The other parameters $A_i$, $b_j$, $\eta_i$ are fixed, thus the setting for $^\dag$ is equivalent to softmax attention.}
    \label{fig:readout-aux-2}
\end{figure*}

\subsection{Principle 5}\label{app:pple5}
To validate Principle~\ref{pple:b_variance}, we ablate four choices of scaling factors $b_j$, on four combinations of readout maps $\phi(\cdot)$ and evolution matrices $A_i$. We choose two scaling factors that adhere to the principle ($b_j = \nicefrac{1}{\sqrt{n}}$, $b_j = \nicefrac{z_j}{\sqrt{n}}$) and two that do not ($b_j = 1$, $b_j = z_j$), where $z_j$ is input-dependent, i.e., $z_j = \sigma(W_z x_j)$ with $\sigma(\cdot)$ the sigmoid function. For readout maps and evolution matrices, we choose the combinations $(\phi(\cdot), A_i) = \{(\exp(\cdot), \mathbb{I}_n), (\textrm{ReLU}(\cdot), \mathbb{I}_n), (\textrm{Id}(\cdot), \alpha_i \mathbb{I}_n), (\textrm{Id}(\cdot), \mathbb{I}_n - \beta_i k_i k_i^\top)\}$, where $\alpha_i$ is parameterized equivalently to $\exp(-\Delta_i A)$ in Mamba-2 \citep{mamba2} and $\beta_i$ is input-dependent, i.e., $\beta = 2\sigma(W_\beta x_i)$ with $\sigma(\cdot)$ the sigmoid function. These models are trained on the fuzzy in-context recall task of MAD and the results are shown in Figure~\ref{fig:b}. We plot the accuracy on the task, across three different seeds, against the effective head dimension, i.e., $\nicefrac{n}{\textrm{nr. of heads}}$.\footnote{In a multi-head setting, the state dimension $n$ in $\nicefrac{1}{\sqrt{n}}$ is replaced by the head dimension $\tilde{n} = \nicefrac{n}{\textrm{nr. of heads}}$.} \textit{The results in Figure~\ref{fig:b} suggest that for scaling factors not adhering to the principle ($b_j = 1$, $b_j = z_j$), the performance depends on the random seed, as we notice large performance fluctuations. Conversely, for scaling factors adhering to the principle ($b_j = \nicefrac{1}{\sqrt{n}}$, $b_j = \nicefrac{z_j}{\sqrt{n}}$), the performance remains largely constant across seeds.} This behavior is expected from Principle~\ref{pple:b_variance}, since the dot product $q_i^\top h_{i,j}$ has larger variance for $b_j$ not adhering to the principle. Note that the performance fluctuations are largest for readout map $\phi(\cdot) = \exp(\cdot)$. We hypothesize that this is due to the exponential growth of the readout map, as dot products $q_i^\top h_{i,j}$ with large variance are either pushed to zero or large values. This is necessarily less pronounced for the other three combinations, although performance fluctuations are still more pronounced for scaling factors not adhering to the principle. While there are indications that performance fluctuates more as the head dimension increases (see e.g. "scalar" for $b_j = 1$) -- which is expected per Principle~\ref{pple:b_variance} -- this pattern is not consistent. We hypothesize that the head dimension is too small for this pattern to emerge consistently.
\begin{table*}
\centering
\caption{Perplexity of sequence models with different choices of scaling factor $b_j$ pretrained on Wikitext-103.}
\label{tab:wiki-perplexity}
\begin{tabular}{lccccc}
\toprule
& \multicolumn{3}{c}{Mamba-2} & \multicolumn{2}{c}{Softmax Attention}\\ \cmidrule{2-5} \cmidrule{6-6}
scaling parameter & $b_j = 1$ & $b_j = \nicefrac{1}{\sqrt{n}}$ & $b_j = \Delta_j$ & $b_j = 1$ & $b_j = \nicefrac{1}{\sqrt{n}}$\\ 
\cmidrule{2-5} \cmidrule{6-6}
parameter count & $110\,$M & $110\,$M & $110\,$M & $120\,$M & $120\,$M \\ \midrule
perplexity & $29.23$ & $28.84$ & $28.87$ & $32.69$ & $31.73$ \\ 
\bottomrule 
\end{tabular}
\end{table*}

We also validate Principle~\ref{pple:b_variance} on a more realistic dataset than the previous synthetic one. For this, we train a standard softmax attention model and a standard Mamba-2 model on Wikitext-103 \citep{wikitext}. For both models, we modify the scaling factor $b_j$ and fix all other parameters to the standard choices of each model. Both models, have the same size but softmax attention is equipped with additional positional embeddings, which explains the higher parameter count in Table~\ref{tab:wiki-perplexity}. The results in Table~\ref{tab:wiki-perplexity} suggest that there exists a performance gap, yet small, between scaling factors adhering to the principle ($b_j = \nicefrac{1}{\sqrt{n}}$, $b_j = \Delta_j$)\footnote{In Mamba-2, $\Delta_j$ adheres to the principle as discussed in Example~\ref{exp:mamba}.} and scaling factors that do not ($b_j = 1$). The finding that softmax attention without scaling of the keys can perform well in practice, has also been observed in \citet{britz-etal-2017-massive}. Finally, note that decoupling the parameters $A_i$ and $b_j$ in Mamba-2 does not hurt performance, with the scaling factor $b_j = \nicefrac{1}{\sqrt{n}}$ (with $\Delta_i$ still used to compute $A_i$) performing on par with $b_j = \Delta_j$.
\begin{figure*}
    \centering
    \includegraphics[width=0.9\linewidth]{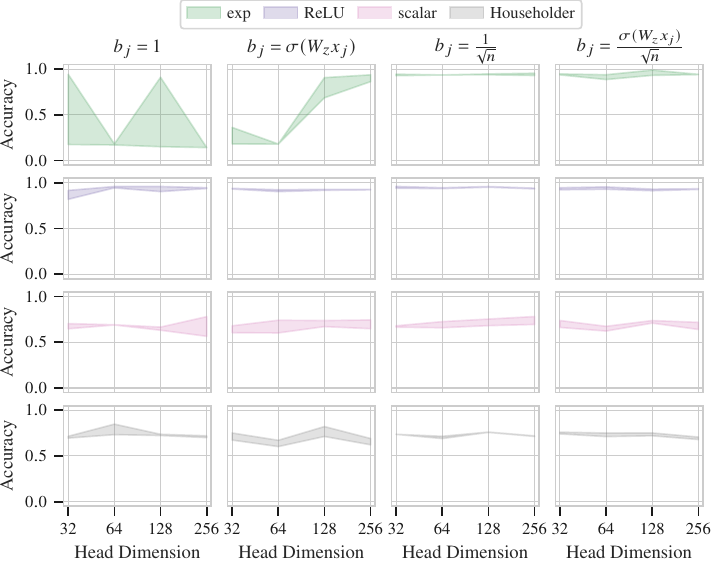}
    \caption{\textbf{(Principle 5)} Minimal and maximal performance across three seeds on the fuzzy in-context recall task, for different choices of scaling factor $b_j$ and combinations $(\phi(\cdot), A_i)$. The legend refers to the combinations $(\phi(\cdot), A_i) = \{(\exp(\cdot), \mathbb{I}_n), (\textrm{ReLU}(\cdot), \mathbb{I}_n), (\textrm{Id}(\cdot), \alpha_i \mathbb{I}_n), (\textrm{Id}(\cdot), \mathbb{I}_n - \beta_i k_i k_i^\top)\}$ and each column shows the results for a single $b_j$.}
    \label{fig:b}
\end{figure*}

\subsection{Principle 6}\label{app:ppl6}
Principle~\ref{pple:stability} is mainly concerned with training stability related to the evolution matrices $A_i$ and normalization factors $\eta_i$. Due to Corollary~\ref{cor:coefficient_restriction}, choosing $\eta_i = \sum_j \alpha_{i,j}$ guarantees stable training and has additional implications on the output space of the sequence model (see Appendix~\ref{app:linear-comb}). We hence focus on evolution matrices to experimentally validate the principle. It is widely known, that stable $A_i$ (all eigenvalues lie in the unit circle of the complex plane) guarantee numerically stable training, as evidenced by the design of $A_i$ in e.g. S4 \citep{Gu2022}, LRU \citep{Orvieto2023}, Mamba-2 \citep{mamba2}, GLA \citep{yang2023gated}, and DeltaNet \citep{schlag2021linear}. To validate the principle, we therefore consider the simple unstable evolution matrix $A_i = 1.05\,\mathbb{I}_n$ and show that with appropriate design of $\eta_i$, we are able to solve noisy in-context recall. We fix $b_j=1$, $A_i = 1.05\,\mathbb{I}_n$ and ablate the readout maps $\phi(\cdot) \in \{ \exp(\cdot), \textrm{softplus}(\cdot), \textrm{ReLU}(\cdot)\}$ and normalization factors $\eta_i \in \{ 1, 1.05^i, \sum_j \alpha_{i,j}\}$; results are shown in Figure~\ref{fig:n}. \textit{As expected by Corollary~\ref{cor:coefficient_restriction}, the normalization factor $\eta_i = \sum_j \alpha_{i,j}$ ensures stable training and achieves perfect accuracy.} The results of the other two choices need additional discussion.
\begin{figure}
    \centering
    \includegraphics[width=0.85\linewidth]{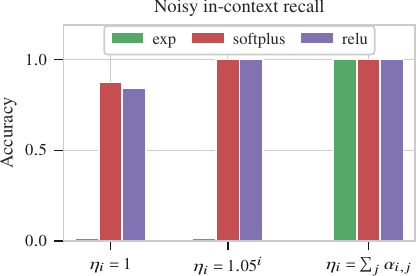}
    \caption{\textbf{(Principle 6)} Performance of three normalization factor choices $\eta_i$ on the noisy in-context recall task, for unstable evolution matrix $A_i = 1.05\, \mathbb{I}_n$, $b_j=1$, and readout maps $\phi(\cdot) \in \{\exp(\cdot), \textrm{softplus}(\cdot), \textrm{ReLU}(\cdot) \}$.}
    \label{fig:n}
\end{figure}

First, consider $\eta_i = 1$: for readout map $\exp(\cdot)$, training is unstable and the model fails to learn anything, yet for readout maps $\textrm{softplus}(\cdot), \textrm{ReLU}(\cdot)$ training is stable and the model achieves good performance, albeit suboptimal. This is explained by the growth of the readout maps, while $\exp(\cdot)$ grows superlinearly, $\textrm{softplus}(\cdot), \textrm{ReLU}(\cdot)$ grow linearly, thus the coefficients $\alpha_{i,j}$ grow faster for $\phi(\cdot) = \exp(\cdot)$ than the other two maps, as the state $h_{i,j}$ grows due to $\prod_{t=j+1}^i 1.05\,\mathbb{I}_n$ (see \eqref{eqn:coefficients}). Since the task has short sequence length ($128$) and the chosen evolution matrix is mildly unstable ($1.05$), readout maps with linear growth can still achieve good performance.

Now consider $\eta_i = 1.05^i$: for readout map $\exp(\cdot)$, training is unstable and the model fails to learn anything, yet for readout maps $\textrm{softplus}(\cdot), \textrm{ReLU}(\cdot)$ training is stable and the model achieves perfect performance. This is again explained by the growth of the readout maps, when considering the normalized coefficients $\frac{\alpha_{i,j}}{\eta_i}$. For $\phi(\cdot) = \exp(\cdot)$, the coefficients $\alpha_{i,j}$ grow faster than the normalization factors $\eta_i$, leading to instability. For the other two readout maps, the coefficients $\alpha_{i,j}$ grow as fast as the normalization factors $\eta_i$, thus stabilizing training. Further experimental validation of the principle, is given by the experimental results in \citet{xlstm}, since the model could not have been trained without proper design of the normalization factors $\eta_i$.


\section{Details on Experimental Setup}\label{app:exp-details}

The experimental results provided in Section~\ref{sec:experiments} and Appendix~\ref{app:extended-exp} are performed on the mechanistic architecture design (MAD) benchmark~\citep{Poli2024Mad}. To perform the experiments, we modified the MAD code base\footnote{\url{https://github.com/athms/mad-lab}} and integrated it in our existing code base, which will be released upon publication. For pretraining in Appendix~\ref{app:pple5}, we used the Wikitext-103~\citep{wikitext} dataset. All experiments were run on a cluster equipped with $8$ NVIDIA RTX 4090 GPUs.

\subsection{Global settings}
\paragraph{Wikitext} The Wikitext-103 dataset is tokenized using the standard GPT-2 tokenizer (\texttt{vocab\_size}: $50257$) and chunked into blocks of sequence length $1024$, before being processed by the model.

\paragraph{MAD Tasks} We perform experiments on the following four tasks of MAD: selective copying, memorization, noisy in-context recall, and fuzzy in-context recall. For a detailed exposition of these tasks, we refer to \citet[Section 3.1]{Poli2024Mad}. Below, we report the dataset parameters used for our experiments; we refer to \citet[Appendix B]{Poli2024Mad} for a detailed explanation of each parameter and its effect on the task difficulty.

\begin{itemize}
    \item \textbf{Selective Copying}: \texttt{vocab\_size}: $64$, \texttt{seq\_len}: $512$, \texttt{num\_tokens\_to\_copy}: $32$, \texttt{selective}: True, \texttt{num\_train\_examples}: $3200$.
    \item \textbf{Memorization}: \texttt{vocab\_size}: $4096$, \texttt{seq\_len}: $32$, \texttt{frac\_noise}: $0.0$ (no noise), \texttt{num\_train\_examples}: $256$.
    \item \textbf{Noisy In-context Recall}: \texttt{vocab\_size}: $32$, \texttt{seq\_len}: $128$, \texttt{multi\_query}: True, \texttt{frac\_noise}: $0.2$ (noisy), \texttt{noise\_vocab\_size}: $16$, \texttt{num\_train\_examples}: $3200$.
    \item \textbf{Fuzzy In-context Recall}: \texttt{vocab\_size}: $32$, \texttt{seq\_len}: $128$, \texttt{multi\_query}: True, \texttt{frac\_noise}: $0.0$ (no noise), \texttt{num\_train\_examples}: $12800$.
\end{itemize}

\paragraph{Training} For MAD tasks, we use the parameters shown in Table~\ref{tab:training-params}. These are the same parameters as proposed in \citet{Poli2024Mad}, but we sweep a larger range of learning rates. All performances/accuracies reported in this paper, correspond to the best performance/accuracy across the learning rate and weight decay sweep reported in Table~\ref{tab:training-params}.

For pretraining on Wikitext, we use the training parameters stated in Table~\ref{tab:training-params-wiki}.

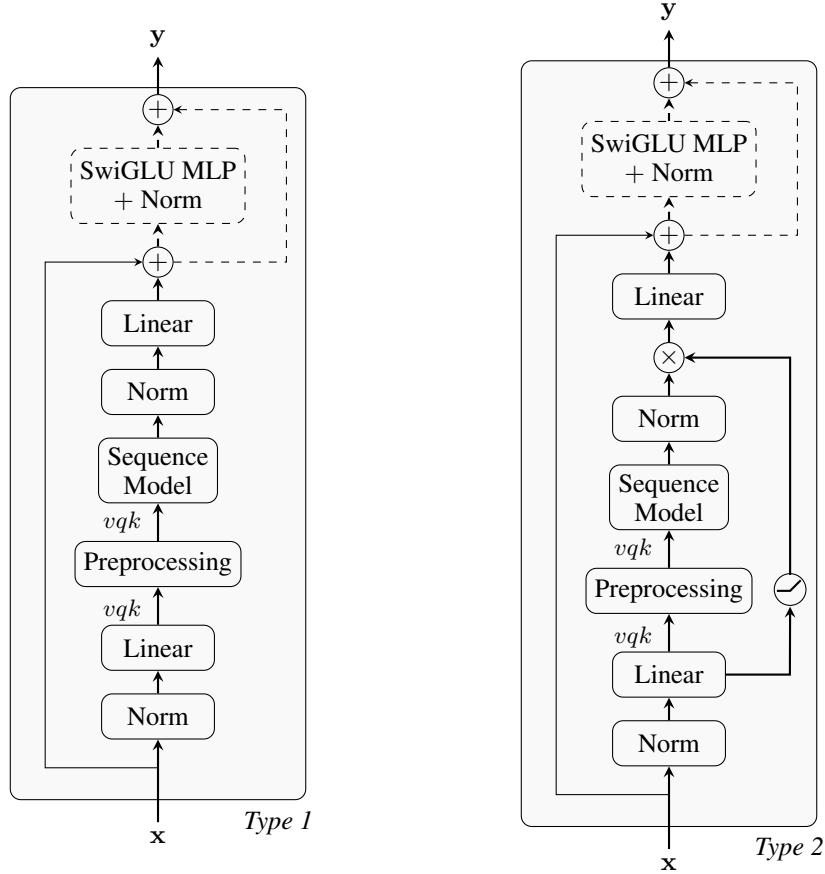
\begin{figure*}
    \begin{minipage}{0.48\linewidth}
    \centering
    \begin{tikzpicture}[
        block/.style={draw, rounded corners, minimum width=15mm, minimum height=6mm, align=center},
        opblock/.style={draw, rounded corners, dashed, minimum width=20mm, minimum height=10mm, align=center},
        sum/.style={draw, circle, inner sep=0pt, minimum size=4mm},
        >={stealth},
        conn/.style={-stealth, thick},
        opconn/.style={-stealth, thick, dashed},
        skip/.style={-stealth},
        opskip/.style={-stealth, dashed},
        pics/lshape/.style={
            code={
              \draw (0,0.05em) circle [radius=0.6em];
              \draw[thick] (0.4em,0.4em) -- (0,0) -- (-0.5em, 0em);
              \coordinate (-south)  at (0,-0.6em);
              \coordinate (-north)   at (0,0.6em);
            }
        },
        test/.style={draw, circle,
            execute at begin node=\begin{tikzpicture}[scale=1]\pic{lshape};\end{tikzpicture},
            inner sep=0pt, minimum size=4mm
         },
    ]

    \node (x) at (0,0) {$\mathbf{x}$};
    \node at (1.6,0.2) {\textit{Type 1}};
    
    \node[above=6mm of x] (sep) {};
    \node[block, above=11mm of x]   (ln1)  {Norm};
    \node[block, above=3mm of ln1]   (lin)  {Linear};
    \node[block, above=5mm of lin]   (pre)  {Preprocessing};
    \node[block, above=5mm of pre] (seq) {Sequence\\Model};
    \node[block, above=3mm of seq] (norm) {Norm};
    \node[block, above=3mm of norm] (lin2) {Linear};
    \node[sum,   above=3mm of lin2] (add1) {$+$};
    
    \node[opblock, above=3mm of add1] (ffn) {SwiGLU MLP\\ $+$ Norm};
    \node[sum,   above=3mm of ffn]  (add2) {$+$};
    
    \node[above=5mm of add2] (y) {$\mathbf{y}$};
    
    \draw[conn] (x) -- (ln1);
    \draw[conn] (ln1) -- (lin);
    \draw[conn] (lin) -- (pre);
    \draw[conn] (pre) -- (seq);
    \draw[conn] (seq) -- (norm);
    \draw[conn] (norm) -- (lin2);
    \draw[conn] (lin2) -- (add1);
    \draw[opconn] (add1) -- (ffn);
    \draw[opconn] (ffn) -- (add2);
    \draw[conn] (add2) -- (y);
    
    \draw[skip] (sep.center) -| ++(-15mm,15mm) |- (add1.west);
    \draw[opskip] (add1.east) -| ++(15mm,15mm) |- (add2.east);

    \begin{scope}[on background layer]
        \node[draw, fill=gray!5, rounded corners, inner sep=8mm, fit=(ln1)(lin)(pre)(seq)(add1)(ffn)] (tb) {};
    \end{scope}

    \node[left=1mm of $(lin)!0.45!(pre)$] {\footnotesize $vqk$};
    \node[left=1mm of $(pre)!0.45!(seq)$] {\footnotesize $vqk$};
    
    \end{tikzpicture}
    \end{minipage}
    \begin{minipage}{0.48\linewidth}
    \centering
    \begin{tikzpicture}[
        block/.style={draw, rounded corners, minimum width=15mm, minimum height=6mm, align=center},
        opblock/.style={draw, rounded corners, dashed, minimum width=20mm, minimum height=10mm, align=center},
        sum/.style={draw, circle, inner sep=0pt, minimum size=4mm},
        >={stealth},
        conn/.style={-stealth, thick},
        opconn/.style={-stealth, thick, dashed},
        skip/.style={-stealth},
        opskip/.style={-stealth, dashed},
        pics/lshape/.style={
            code={
              \draw (0,0.05em) circle [radius=0.6em];
              \draw[thick] (0.4em,0.4em) -- (0,0) -- (-0.5em, 0em);
              \coordinate (-south)  at (0,-0.6em);
              \coordinate (-north)   at (0,0.6em);
            }
        },
        test/.style={draw, circle,
            execute at begin node=\begin{tikzpicture}[scale=1]\pic{lshape};\end{tikzpicture},
            inner sep=0pt, minimum size=4mm
         },
    ]

    \node (x) at (0,0) {$\mathbf{x}$};
    \node at (1.6,0.2) {\textit{Type 2}};
    
    \node[above=6mm of x] (sep) {};
    \node[block, above=11mm of x]   (ln1)  {Norm};
    \node[block, above=3mm of ln1]   (lin)  {Linear};
    \node[block, above=5mm of lin]   (pre)  {Preprocessing};
    \node[block, above=5mm of pre] (seq) {Sequence\\Model};
    \node[block, above=3mm of seq] (norm) {Norm};
    \node[sum,   above=3mm of norm] (mul) {$\times$};
    \node[block, above=3mm of mul] (lin2) {Linear};
    \node[sum,   above=3mm of lin2] (add1) {$+$};
    \pic[right=5mm of pre, name=nonlin] {lshape};
    
    \node[opblock, above=3mm of add1] (ffn) {SwiGLU MLP\\ $+$ Norm};
    \node[sum,   above=3mm of ffn]  (add2) {$+$};
    
    \node[above=5mm of add2] (y) {$\mathbf{y}$};
    
    \draw[conn] (x) -- (ln1);
    \draw[conn] (ln1) -- (lin);
    \draw[conn] (lin) -- (pre);
    \draw[conn] (pre) -- (seq);
    \draw[conn] (seq) -- (norm);
    \draw[conn] (norm) -- (mul);
    \draw[conn] (mul) -- (lin2);
    \draw[conn] (lin2) -- (add1);
    \draw[opconn] (add1) -- (ffn);
    \draw[opconn] (ffn) -- (add2);
    \draw[conn] (add2) -- (y);

    \draw[conn] (lin.east) -| ++(8.5mm,0) -- (nonlin-south);
    \draw[conn] (nonlin-north) -- ++(0,25mm) |- (mul.east);
    
    \draw[skip] (sep.center) -| ++(-15mm,15mm) |- (add1.west);
    \draw[opskip] (add1.east) -| ++(15mm,15mm) |- (add2.east);

    \node[left=1mm of $(lin)!0.45!(pre)$] {\footnotesize $vqk$};
    \node[left=1mm of $(pre)!0.45!(seq)$] {\footnotesize $vqk$};

    \begin{scope}[on background layer]
        \node[draw, fill=gray!5, rounded corners, inner sep=8mm, fit=(ln1)(lin)(pre)(seq)(add1)(ffn)] (tb) {};
    \end{scope}

    \end{tikzpicture}
    \end{minipage}
    \vspace{-4pt}
    \caption{Block designs used in experiments: Transformer style (left; Type 1) and Gated DeltaNet style (right, Type2) from \citep[Fig. 1]{yang2025gated}. Dashed parts show the optional SwiGLU MLP.}\label{fig:block-design}
    \vspace{-5pt}
\end{figure*}
\begin{table}
\centering
\caption{Training parameters used on the MAD benchmark.}
\label{tab:training-params}
\vspace{5pt}
\begin{tabular}{lc}
\toprule
Optimizer & \texttt{AdamW}\\
Optimizer Momentum & $(\beta_1, \beta_2) = (0.9, 0.98)$ \\
Dropout & \texttt{None} \\
Batch Size & $128$ \\
Training Epochs & $200$ \\
Learning Rate Schedule & \texttt{Cosine Decay} \\
Learning Rate Warmup & \texttt{None} \\
Number of Test Samples & $1280$ \\
\midrule
Learning Rate & $[1e^{-4}, 5e^{-4}, 1e^{-3}, 5e^{-3}, 1e^{-2}]$ \\
Weight Decay & $[0.0, 0.1]$ \\
\bottomrule 
\end{tabular}
\end{table}

\paragraph{Models} The model parameters for each experiment are reported individually in the following section. We use the two block designs shown in Figure \ref{fig:block-design}. Generally, we use the Transformer style block for attention-based models (i.e. readout map $\phi(\cdot) \neq \textrm{Id}(\cdot)$) and the Gated DeltaNet style block for SSMs/RNNs (i.e. readout map $\phi(\cdot) = \textrm{Id}(\cdot)$). Additionally, we typically use interleaved MLP blocks after each block (either Type 1 or 2), which do not count towards the number of layers. For example, a two layer model with interleaved MLP blocks, would effectively consist of four layers, two sequence mixing layers and two MLP layers. The MLP block consists of two dense layers with SwiGLU activation \citep{shazeer2020gluvariantsimprovetransformer} and inner dimension reported per experiment.

\subsection{Model Parameters per Experiment}
In the following, we detail the model parameters used to experimentally validate each principle individually.

\begin{itemize}
    \item The model parameters used to obtain Figures~\ref{fig:phi},~\ref{fig:readout-aux} \&~\ref{fig:readout-aux-2} and validate Principle~\ref{pple:zero-level-set} are provided in Table~\ref{tab:model-pple2}.
    \item The model parameters used to obtain Figures~\ref{fig:approx} \&~\ref{fig:ppl22} and validate Principles~\ref{pple:kernel-approx} and~\ref{pple:max_zero} are provided in Table~\ref{tab:model-pple2x}.
    \item The model parameters used to obtain Figure~\ref{fig:pos} and validate Principle~\ref{pple:pos-info} are provided in Table~\ref{tab:model-pple3}.
    \item The model parameters used to obtain Figure~\ref{fig:A} and validate Principle~\ref{pple:transformations} are provided in Table~\ref{tab:model-pple4}.
    \item The model parameters used to obtain Figure~\ref{fig:b} \& Table~\ref{tab:wiki-perplexity} and validate Principle~\ref{pple:b_variance} are provided in Table~\ref{tab:model-pple5} and Table~\ref{tab:wiki-pple5}, respectively.
    \item The model parameters used to obtain Figure~\ref{fig:n} and validate Principle~\ref{pple:stability} are provided in Table~\ref{tab:model-pple6}.
\end{itemize}

\begin{table}
\centering
\caption{Training parameters used for pretraining on Wikitext.}
\label{tab:training-params-wiki}
\vspace{5pt}
\begin{tabular}{lc}
\toprule
Optimizer & \texttt{AdamW}\\
Optimizer Momentum & $(\beta_1, \beta_2) = (0.9, 0.95)$ \\
Dropout & \texttt{None} \\
Batch Size & $8$ \\
Training Steps & $130$k \\
Learning Rate Schedule & \texttt{Cosine Decay} \\
Warmup Steps & $3$k \\
Learning Rate & $0.0006$ \\
Weight Decay & $0.1$ \\
\bottomrule 
\end{tabular}
\end{table}

\begin{table}[h!]
\centering
\caption{Model parameters used for Figures~\ref{fig:phi},~\ref{fig:readout-aux} \&~\ref{fig:readout-aux-2}. These are the same parameters used in~\citet{Poli2024Mad}.}
\label{tab:model-pple2}
\vspace{5pt}
\begin{tabular}{lc}
\toprule
Coefficient Dynamics Params. & Detailed in Section~\ref{sec:experiments}\\
Key ($k_j$) \& Querry ($q_i$) & None \\
Block Design & Transformer style (Type 1) \\
Number of Layers & $2$ \\
Embedding Dimension $d$ & $128$ \\
Inner (State) Dimension $n$ & $128$ \\
Value Dimension $d_v$ & $128$ \\
Number of Heads & $16$ \\
Interleaved MLP Layers & Yes (dim $= 256$) \\
Positional Embedding & Yes \\
\bottomrule 
\end{tabular}
\end{table}

\begin{table}[h!]
\centering
\caption{Model parameters used for Figures~\ref{fig:approx} \&~\ref{fig:ppl22}. These are the same parameters used in~\citet{Poli2024Mad}.}
\label{tab:model-pple2x}
\vspace{5pt}
\begin{tabular}{lc}
\toprule
Coefficient Dynamics Params. & Detailed in Appendix~\ref{app:ppl2}\\
Key ($k_j$) \& Querry ($q_i$) & Detailed in Appendix~\ref{app:ppl2} \\
Block Design & Transformer style (Type 1) \\
Number of Layers & $2$ \\
Embedding Dimension $d$ & $128$ \\
Inner (State) Dimension $n$ & $128$ \\
Value Dimension $d_v$ & $128$ \\
Number of Heads & $16$ \\
Interleaved MLP Layers & Yes (dim $= 256$) \\
Positional Embedding & Yes \\
\bottomrule 
\end{tabular}
\end{table}

\begin{table}[h!]
\centering
\caption{Model parameters used for Figure~\ref{fig:pos}. These are the same parameters used in~\citet{Poli2024Mad}.}
\label{tab:model-pple3}
\vspace{5pt}
\begin{tabular}{lc}
\toprule
Coefficient Dynamics Params. & Detailed in Section~\ref{sec:experiments}\\
Key ($k_j$) \& Querry ($q_i$) & None \\
Block Design & Transformer style (Type 1) \\
Number of Layers & $2$ \\
Embedding Dimension $d$ & $128$ \\
Inner (State) Dimension $n$ & $128$ \\
Value Dimension $d_v$ & $128$ \\
Number of Heads & $16$ \\
Interleaved MLP Layers & Yes (dim $= 256$) \\
Positional Embedding & Detailed in Section~\ref{sec:experiments} \\
\bottomrule 
\end{tabular}
\end{table}

\begin{table}[h!]
\centering
\caption{Model parameters used for Figure~\ref{fig:A}.}
\label{tab:model-pple4}
\vspace{5pt}
\begin{tabular}{lc}
\toprule
Coefficient Dynamics Params. & Detailed in Section~\ref{sec:experiments}\\
Key ($k_j$) \& Querry ($q_i$) & \texttt{ShortConv(dim$=4$)} \\
Block Design & Gated DeltaNet style (Type 2) \\
Number of Layers & $2$ \\
Embedding Dimension $d$ & $128$ \\
Inner (State) Dimension $n$ & $128$ \\
Value Dimension $d_v$ & $128$ \\
Number of Heads & $8$ \\
Interleaved MLP Layers & Yes (dim $= 256$) \\
Positional Embedding & No \\
\bottomrule 
\end{tabular}
\end{table}

\begin{table}[h!]
\centering
\caption{Model parameters used for Figure~\ref{fig:b}. Below, if a parameter is declared with "or", the first option is for attention style models and the second for SSM/RNN style models.}
\label{tab:model-pple5}
\vspace{5pt}
\begin{tabular}{lc}
\toprule
Coefficient Dynamics Params. & Detailed in Appendix~\ref{app:pple5}\\
Key ($k_j$) \& Querry ($q_i$) & None or \texttt{ShortConv(dim$=4$)} \\
Block Design & Type 1 or Type 2 \\
Number of Layers & $2$ \\
Embedding Dimension $d$ & $128$ \\
Inner (State) Dimension $n$ & $[128, 256, 512, 1024]$ \\
Value Dimension $d_v$ & $128$ \\
Number of Heads & $4$ \\
Interleaved MLP Layers & Yes (dim $= 256$) \\
Positional Embedding & Yes or No \\
\bottomrule 
\end{tabular}
\end{table}

\begin{table}[h!]
\centering
\caption{Model parameters used for Table~\ref{tab:wiki-perplexity}. Below, if a parameter is declared with "or", the first option is for softmax attention and the second for Mamba-2.}
\label{tab:wiki-pple5}
\vspace{5pt}
\begin{tabular}{lc}
\toprule
Coefficient Dynamics Params. & Softmax attention and Mamba-2 standard; varying $b_j$\\
Key ($k_j$) \& Querry ($q_i$) & None or \texttt{ShortConv(dim$=4$)} \\
Block Design & Type 1 or Type 2 \\
Number of Layers & $8$ \\
Embedding Dimension $d$ & $768$ \\
Inner (State) Dimension $n$ & $768$ \\
Value Dimension $d_v$ & $768$ \\
Number of Heads & $12$ \\
Interleaved MLP Layers & Yes (dim $= 512$) \\
Positional Embedding & Yes or No \\
\bottomrule 
\end{tabular}
\end{table}

\begin{table}[ht]
\centering
\caption{Model parameters used for Figure~\ref{fig:n}.}
\label{tab:model-pple6}
\vspace{5pt}
\begin{tabular}{lc}
\toprule
Coefficient Dynamics Params. & Detailed in Appendix~\ref{app:ppl6}\\
Key ($k_j$) \& Querry ($q_i$) & None \\
Block Design & Transformer style (Type 1) \\
Number of Layers & $2$ \\
Embedding Dimension $d$ & $128$ \\
Inner (State) Dimension $n$ & $128$ \\
Value Dimension $d_v$ & $128$ \\
Number of Heads & $16$ \\
Interleaved MLP Layers & Yes (dim $= 256$) \\
Positional Embedding & Yes \\
\bottomrule 
\end{tabular}
\end{table}


\end{document}